\newtheorem{lemma}{Lemma}
\newtheorem{theorem}{Theorem}
\newtheorem{definition}{Definition}
\newtheorem{remark}{Remark}
\newtheorem{assumption}{Assumption}
\newcommand{\cD}{\mathcal{D}}
\newcommand{\cA}{\mathcal{A}}
\newcommand{\cS}{\mathcal{S}}
\newcommand{\cE}{\mathcal{E}}
\newcommand{\cV}{\mathcal{V}}
\newcommand{\cF}{{{\mathcal{F}}}}
\newcommand{\cT}{{\mathcal{T}}}
\newcommand{\cB}{\mathcal{B}}
\newcommand{\cM}{\mathcal{M}}
\newcommand{\cN}{\mathcal{N}}
\newcommand{\Eb}{\mathbb{E}}
\newcommand{\RR}{\mathbb{R}}
\newcommand{\VV}{\mathbb{V}}
\newcommand{\DD}{\mathbb{D}_{\tau-1,h}}
\newcommand{\BB}{\mathbb{B}}
\newcommand{\Pb}{\mathbb{P}}
\newcommand{\argmin}{\mathop{\mathrm{argmin}}}
\newcommand{\argmax}{\mathop{\mathrm{argmax}}}
\newcommand{\norm}[1]{\left\|#1\right\|}
\newcommand{\dotprod}[1]{\left\langle #1\right\rangle}
\newcommand{\var}{\operatorname{Var}}
\newcommand{\tr}{\operatorname{Tr}}
\newcommand{\hvr}{\widehat{V}^{\mathrm{ref}}}
\newcommand{\hvp}{\widehat{V}'}
\newcommand{\hv}{\widehat{V}}
\newcommand{\hcT}{\widehat{\mathcal{T}}}
\newcommand{\ocT}{\overline{\mathcal{T}}}
\newcommand{\ucT}{\underline{\mathcal{T}}}
\newcommand{\hpb}{\widehat{\Pb}}
\newcommand{\opb}{\overline{\Pb}}
\newcommand{\upb}{\underline{\Pb}}
\newcommand{\han}[1]{{\textcolor{cyan}{[Han: #1]}}}
\title{Nearly Minimax Optimal Offline Reinforcement Learning with Linear Function Approximation: Single-Agent MDP and Markov Game}
\author{Wei Xiong\thanks{The first two authors contribute equally.} $^1$, \, Han Zhong$^*$$^{2}$, \, Chengshuai Shi$^3$, \, Cong Shen$^3$, \, Liwei Wang$^{4,5}$, \, Tong Zhang$^{1,6}$\\
Department of Mathematics, The Hong Kong University of Science and Technology$^1$\\
Center for Data Science, Peking University$^2$\\
Department of Electrical and Computer Engineering, University of Virginia$^3$\\
National Key Laboratory of General Artificial Intelligence, Peking University$^4$\\
School of Intelligence Science and Technology, Peking University$^5$\\
Department of Computer Science and Engineering, The Hong Kong University of Science and Technology$^6$\\
\texttt{\{wxiongae, tongzhang\}@ust.hk}; \texttt{hanzhong@stu.pku.edu.cn};\\ 
\texttt{\{cs7ync, cong\}@virginia.edu}; \texttt{wanglw@cis.pku.edu.cn}
}
\date{}
\begin{document}

\maketitle

\begin{abstract}
Offline reinforcement learning (RL) aims at learning an optimal strategy using a pre-collected dataset without further interactions with the environment. While various algorithms have been proposed for offline RL in the previous literature, the minimax optimality has only been (nearly) established for tabular Markov decision processes (MDPs). In this paper, we focus on offline RL with linear function approximation and propose a new pessimism-based algorithm for offline linear MDP. At the core of our algorithm is the uncertainty decomposition via a reference function, which is new in the literature of offline RL under linear function approximation. Theoretical analysis demonstrates that our algorithm can match the performance lower bound up to logarithmic factors. We also extend our techniques to the two-player zero-sum Markov games (MGs), and establish a new performance lower bound for MGs, which tightens the existing result, and verifies the nearly minimax optimality of the proposed algorithm. To the best of our knowledge, these are the first computationally efficient and nearly minimax optimal algorithms for offline single-agent MDPs and MGs with linear function approximation.
\end{abstract}

\section{Introduction}
Reinforcement learning (RL) has achieved tremendous empirical success in both single-agent \citep{kober2013reinforcement} and multi-agent scenarios \citep{silver2016mastering,silver2017mastering}. Two components play a critical role -- function approximations and efficient simulators. For RL problems with a large (or even infinite) number of states, storing a table as in classical Q-learning is generally infeasible. In these cases, practical algorithms \citep{mnih2015human,lillicrap2015continuous,schulman2015trust,schulman2017proximal,haarnoja2018soft} approximate the true value function or policy by a function class (e.g., neural networks). Meanwhile, an efficient simulator allows for learning a policy in an online trial-and-error fashion using millions or even billions of trajectories. However, due to the limited availability of data samples in many practical applications, e.g., healthcare \citep{wang2018supervised} and autonomous driving \citep{pan2017agile}, instead of collecting new trajectories, we may have to extrapolate knowledge only from past experiences, i.e., a pre-collected dataset. This type of RL problems is usually referred to as \textit{offline RL} or \textit{batch RL} \citep{lange2012batch, levine2020offline}.

An offline RL algorithm is usually measured by its sample complexity to achieve the desired statistical accuracy. A line of works \citep{xie2021policy,shi2022pessimistic, li2022settling} demonstrates that near-optimal sample complexity in \textit{tabular} single-agent MDPs is attainable. However, these algorithms cannot solve the problem with large or infinite state spaces where function approximation is involved. To our best knowledge, existing algorithms cannot attain the statistical limit even for \emph{linear function approximation}, which is arguably the simplest function approximation setting. Specifically, for linear function approximation, \citet{jin2021pessimism} proposes the first efficient algorithm for offline linear MDPs, but their upper bound is suboptimal compared with the existing lower bounds in \citet{jin2021pessimism,zanette2021provable}. Recently, \citet{yin2022near} tries to improve the result by incorporating variance information in the algorithmic design of offline MDPs with linear function approximation. However, a careful examination reveals a technical gap, and some additional assumptions may be needed to fix it (cf. Section~\ref{sec:challenge} and Section \ref{sec:variance_estimator}). Beyond the single-agent MDPs, \citet{zhong2022pessimistic} studies the Markov games (MGs) with linear function approximation and provides the only provably efficient algorithm with a suboptimal result. Therefore, the following problem remains open:
\begin{center}
    \textit{Can we design computationally efficient offline RL algorithms for problems with linear function approximation that are nearly minimax optimal?}
\end{center}
In this paper, we first answer this question affirmatively under linear MDPs \citep{jin2020provably} and then extend our results to the two-player zero-sum Markov games (MGs) \citep{xie2020learning}. Our contributions are summarized as follows:
\begin{itemize}[leftmargin=*] \itemsep=1pt
    \item We identify an implicit and restrictive assumption required by existing approaches in the literature, which originates from omitting the complicated temporal dependency between different time steps. See Section~\ref{sec:challenge} for a detailed explanation.
    \item We handle the temporal dependency by an uncertainty decomposition technique via a reference function, thus closing the gap to the information-theoretic lower bound without the restrictive independence assumption. The uncertainty decomposition serves to avoid a $\sqrt{d}$-amplification of the value function error and also the measurability issue from incorporating variance information to improve the $H$-dependence, where $d$ and $H$ are the feature dimension and planning horizon, respectively.  To the best of our knowledge, this technique is new in the literature of offline learning under linear function approximation.
    \item We further generalize the developed techniques to two-player zero-sum linear MGs \citep{xie2020learning}, thus demonstrating the broad adaptability of our methods. Meanwhile, we establish a new performance lower bound for MGs, which tightens the existing results, and verifies the nearly minimax optimality of the proposed algorithm. 
\end{itemize}

\subsection{Related Work}
Due to space limit, we defer a comprehensive review of related work to Appendix~\ref{appendix:related_work} but focus on the works that are most related to the problem setup and our algorithmic designs.

\textbf{Offline RL with Linear Function Approximation}. \citet{jin2021pessimism} and \citet{zhong2022pessimistic} provide the first results for offline linear MDPs and two-player zero-sum linear MGs, respectively. However, their algorithms are based on Least-Squares Value Iteration (LSVI), and establish pessimism by adding bonuses at every time step thus suffering from a $\sqrt{d}$-amplification to the lower bound \citep{zanette2021provable, zhong2022pessimistic}. The amplification results from the statistical dependency between different time steps. After these, \citet{min2021variance} studies the offline policy evaluation (OPE) in linear MDP with an additional independence assumption that the data samples between different time steps are independent thus circumventing this core issue. \citet{yin2022near} studies the policy optimization in linear MDP, which also (implicitly) require the independence assumption. Another line of work addresses the error amplification from temporal dependency with different algorithmic designs. \citet{zanette2020learning} designs an actor-critic-based algorithm and establishes pessimism via direct perturbations of the parameter vectors in a linear function approximation scheme. \citet{xie2021bellman, uehara2021pessimistic} establish pessimism only at the \textit{initial} state but at the expense of computational tractability. These algorithmic ideas are fundamentally different from ours and do not apply to the LSVI-type algorithms. We will compare the proposed algorithms with them in Section~\ref{sec:variance_estimator}.

\section{Offline Linear Markov Decision Process}\label{sec:pre}
\textbf{Notations}. Given a semi-definite matrix $\Lambda$ and a vector $u$, we denote $\sqrt{u^\top \Lambda u}$ as $\norm{u}_\Lambda$. The $2$-norm of a vector $w$ is $\norm{w}_2$. We also denote $\lambda_{\min}(A)$ as the smallest eigenvalue of the matrix $A$. The subscript of $f(x)_{[0, M]}$ means that we clip the value $f(x)$ to the range of $[0,M]$, i.e., $f(x)_{[0, M]} = \max\{0, \min\{M, f(x)\}\}$. Given a set $X$, we define the set of probability measure on it by $\Delta_X$. We will use the shorthand notations $\phi_h = \phi(x_h,a_h)$, $\phi_h^{\tau} = \phi(x_h^\tau,a_h^\tau)$, $r_h = r_h(x_h, a_h)$, and $r_h^\tau = r_h(x_h^\tau, a_h^\tau)$ (formally defined in the next subsection). With a slight abuse of notations, we also use similar notations (e.g. $\phi_h=\phi(x_h,a_h,b_h)$) for MGs, which shall be clear from the context. $Y \lesssim X$ means $Y \leq CX$ for some constant $C>0$. To improve readability, we also provide a summary of notations in Appendix~\ref{appendix:notation}.

\textbf{Markov Decision Process}. We consider an episodic MDP, denoted as $\cM(\cS,\cA,H,\Pb, r)$, where $\cS$ and $\cA$ are the state space and action space, $H$ is the episode length, $\Pb=\{\Pb_h\}_{h=1}^H$ and $r=\{r_h\}_{h=1}^H$ are the state transition kernels and reward functions, respectively. For each $h \in [H]$, $\Pb_h(\cdot|x,a)$ is the distribution of the next state given the state-action pair $(x,a)$ at step $h$, $r_h(x,a) \in [0,1]$ is the deterministic reward given the state-action pair $(x,a)$ at step $h$.  \footnote{The results readily generalize to the stochastic reward as the uncertainty of reward is non-dominating compared with that of state transition.} 

\vspace{4pt}
\noindent\textbf{Policy and Value function.} A policy $\pi = \{\pi_h\}_{h=1}^H$ is a collection of mappings from a state $x \in \cS$ to a distribution of action space $\pi_h(\cdot|x) \in \Delta_\cA$. For any policy $\pi$, we define the Q-value function $Q^\pi_h(x,a) = \Eb_\pi[\sum_{h'=h}^H r_{h'}(x_{h'},a_{h'}) | (x_h,a_h)=(x,a)]$ and the V-value function $V^\pi_h(x) = \Eb_\pi[\sum_{h'=h}^H r_{h'}(x_{h'},a_{h'})  | x_h=x]$, where $a_{h'} \sim \pi_{h'}(\cdot|x_{h'})$ and $x_{h'+1} \sim \Pb_{h'}(\cdot|x_{h'},a_{h'})$. For any function $V: \cS \to \RR$, we denote the conditional mean as $(\Pb_h V)(x,a) := \sum_{x' \in \cS} \Pb_h(x'|x,a) V(x')$ and conditional variance as $[\var_h V](x,a) := [\Pb_h V^2](x,a) - ([\Pb_h V](x,a))^2$. The Bellman operator is defined as $(\cT_h V) (x,a) := r_h(x,a) + (\Pb_h V)(x,a)$. 

We consider the MDPs whose rewards and transitions possess a linear
structure \citep{jin2020provably}.
\begin{definition}[Linear MDP] \label{def:linear:mdp}
MDP$(\cS, \cA, H, \Pb, r)$ is a linear MDP with a (known) feature map $\phi:\cS \times \cA \to \RR^d$, if for any $h \in [H]$, there exist $d$ unknown signed measures ${\mu}_{h}=(\mu_{h}^{(1)}, \cdots, \mu_{h}^{(d)})$ over $\cS$ and an unknown vector $\theta_h \in \RR^d$, such that for any $(x,a) \in \cS\times \cA$, we have $\mathbb{P}_{h}(\cdot \mid x, a)=\left\langle{\phi}(x, a), {\mu}_{h}(\cdot)\right\rangle, r_{h}(x, a)=\left\langle{\phi}(x, a), {\theta}_{h}\right\rangle$. With loss of generality, we assume that $\norm{\phi(x,a)}\leq 1$ for all $(x,a) \in \cS \times \cA$, and $\max\{\norm{\mu_h(\cS)}, \norm{\theta_h}\} \leq \sqrt{d}$ for all $h \in [H]$.
\end{definition}
For linear MDP, we have the following result, whose proof can be found in \citet{jin2020provably}.
\begin{lemma} \label{lem:linear}
For any function $V:\cS \to [0, V_{\max} - 1]$ and $h \in [H]$, there exist vectors $\beta_h, w_h \in \RR^d$ with $\norm{\beta_h} \leq \norm{w_h} \leq \sqrt{d}V_{\max}$, such that $\forall (x,a) \in \cS \times \cA$ such that the conditional expectation and Bellman equation are both linear in the feature: 
\begin{equation}
(\Pb_h V)(x,a) = \phi(x,a)^\top \beta_h, \quad \text{ and } \quad (\cT_h V) (x,a) = \phi(x,a)^\top w_h. 
\end{equation}
\end{lemma}
\textbf{Offline RL.} In offline RL, the algorithm needs to learn a near-optimal policy or to approximate Nash Equilibrium (NE) from a pre-collected dataset without further interacting with the environment. We suppose that we have access to a batch dataset $\cD = \{x_h^\tau, a_h^\tau, r_h^\tau:h\in [H], \tau \in [K]\}$, where each trajectory is independently sampled by a behavior policy $\mu$. The induced distribution of the state-action pair at step $h$ is denoted as $d_h^b$. We make the following standard dataset coverage assumption for offline RL with linear function approximation \citep{wang2020statistical,duan2020minimax}: 




\begin{assumption} \label{assu:feature_coverage} We assume $\kappa = \min_{h \in [H]} \lambda_{\min}(\Eb_{d_h^b} [\phi(x,a)\phi(x,a)^\top]) > 0$ for MDPs. 
\end{assumption}
This assumption requires the behavior policy to explore the state-action space well and is not information-theoretic as \citet{jin2021pessimism} does not necessarily require it. However, we make this assumption so that we can employ the variance information, which seems challenging otherwise. We remark the assumption is also made by the existing works that employ the variance information for the offline RL problems with linear function approximation \citep{min2021variance, yin2022near}. 

\section{Preliminaries and Technical Challenges} \label{sec:challenge}
\textbf{Pessimistic Value Iteration (PEVI)} is proposed in the seminal work of \citet{jin2021pessimism}, which constructs the value function estimations $\{\hv_h(\cdot)\}_{h=1}^H$ and $Q$-value estimations $\{\widehat{Q}_h(\cdot, \cdot)\}_{h=1}^H$ backward from $h=H$ to $h=1$, with the initialization $\hv_{H+1}=0$. Specifically, given $\hv_{h+1}$, PEVI approximates the Bellman equation by ridge regression:
$$\widehat{Q}_h(x,a)  \leftarrow \widehat{\cT}_h \widehat{V}_{h+1}(x,a) - \beta \norm{\phi(x,a)}_{\Lambda_h^{-1}},$$
where the linear approximation $\hcT_h \hv_{h+1}(\cdot, \cdot) = \phi(\cdot,\cdot)^\top \widehat{w}_h$ is the solution of: 
\begin{equation} \small
  \label{eqn:naive_regression_full}
    \widehat{w}_{h} = \underset{w \in \mathbb{R}^{d}}{\arg \min } \sum_{\tau \in \cD}\left(r_{h}^{\tau}+\widehat{V}_{h+1}\left(x_{h+1}^{\tau}\right)-(\phi_h^\tau)^{\top} w\right)^{2} +\lambda \|w\|_{2}^{2} = \Lambda_h^{-1} \Big(\sum_{\tau \in \cD} \phi_h^\tau \big(r_h^\tau + \hv_{h+1}(x^\tau_{h+1})\big)\Big),
\end{equation}
where $\Lambda_{h}=\sum_{\tau \in \cD} \phi\left(x_{h}^{\tau}, a_{h}^{\tau}\right) \phi\left(x_{h}^{\tau}, a_{h}^{\tau}\right)^{\top}+\lambda I_d$. $\Gamma_h(x,a) = \beta \norm{\phi(x,a)}_{\Lambda_h^{-1}}$ is a bonus function such that $|(\cT_h \widehat{V}_{h+1} - \widehat{\cT}_h \widehat{V}_{h+1})(x,a)| \leq  {\Gamma_h(x,a)}$ for all $(x,a) \in \cS \times \cA$ with high probability. Intuitively, pessimism means that we use lower confidence bound by subtracting the bonus $\Gamma_h(\cdot,\cdot)$ to penalize the uncertainty so that $\widehat{Q}_h(x,a) \leq \cT_h \hv_{h+1}(x,a)$. If pessimism is achieved for all steps $h \in [H]$, then we have the following key result
(formally presented in Lemma~\ref{lem:transform}):
\begin{equation} \label{eqn:val_to_bonus}
V_1^*(x) - V_1^{\widehat{\pi}}(x) \leq 2 \sum_{h=1}^H \Eb_{\pi^*} \left[\Gamma_h(x_h,a_h)\middle | x_1=x\right].
\end{equation} 
Therefore, to establish a sharper suboptimality bound, it suffices to construct a smaller bonus function $\Gamma_h$ that can ensure pessimism. There remains, however, a gap of $\tilde{O}(\sqrt{d}H)$ between the suboptimality bound of PEVI and the minimax lower bound in \citet{zanette2021provable}. 

\textbf{Self-normalized Process and Uniform Concentration}. Given a function $f_{h+1}$, to construct the bonus function, we can bound $|\cT_h f_{h+1} - \hcT_h f_{h+1}|$ as follows (formally presented in Lemma~\ref{lem:decomposition_be}):
\begin{equation} \small \label{eqn:self_norm}
\begin{aligned}
&\left|\big(\mathcal{T}_{h} f_{h+1}\big)(x,a)-\big(\widehat{\mathcal{T}}_{h} f_{h+1}\big)(x,a) \right|\lesssim \underbrace{\Big\|\sum_{\tau \in \cD} \phi\left(x_{h}^{\tau}, a_{h}^{\tau}\right) \cdot \xi_h^\tau(f_{h+1})\Big\|_{\Lambda_h^{-1}}}_{\displaystyle \texttt{(A)} \leq \beta}\norm{\phi(x, a)}_{ \Lambda_{h}^{-1}},
\end{aligned}
\end{equation}
where $\xi_h^\tau(f_{h+1}) := r_h^\tau + f_{h+1}(x_{h+1}^\tau) - (\cT_h f_{h+1})(x_h^\tau,a_h^\tau)$. Bounding $\texttt{(A)}$ is referred to as the concentration of a \textit{self-normalized process} in the literature \citep{abbasi2011improved}. For any \textit{fixed} $\hv_{h+1}$, Lemma~\ref{lem:hoeffding} ensures a high-probability upper bound $\tilde{O}(H\sqrt{d})$ for $\texttt{(A)}$. However, in the backward iteration, $\widehat{V}_{h+1}$ is computed by ridge regression in later steps $[h+1,H]$ and thus inevitably depends on $\{x_{h+1}^\tau\}_{\tau \in \cD}$, which is also used to estimate the Bellman equation at step $h$. Consequently, the concentration inequality cannot directly be applied since the martingale filtration in Lemma~\ref{lem:hoeffding} is not well-defined. To resolve the above measurability issue, the standard approach is to establish a uniform concentration result over an $\epsilon$-covering of the following function class of $\hv_{h+1}$:
$$\cV_{h+1} := \max_a  \{\phi(\cdot,a)^\top w + \beta \norm{\phi(\cdot,a)}_{\Lambda_h^{-1}}, \norm{w} \leq R, \beta \in [0,B], \Lambda \succeq \lambda \cdot I\}_{[0,H-h+1]}.$$
Then, the self-normalized process is bounded by the uniform bound of the $\epsilon$-covering, plus some approximation error. We can tune the parameter $\epsilon > 0$ and obtain that ((B.20) of \citet{jin2021pessimism}):
    \begin{align*}
    \Big\|\sum_{\tau \in \cD} \phi(x_{h}^{\tau}, a_{h}^{\tau}) \cdot \xi_{h}^{\tau}(\hv_{h+1})\Big\|_{\Lambda_{h}^{-1}}^{2} & \leq c H^{2} \cdot\left( {\log \left(H \mathcal{N}_{h+1}(\epsilon)/\delta\right)} + {d \log (1+K)}+ {d^{2}}\right)
    \\ 
    &\leq c H^{2} \cdot \big( \underbrace{d^2}_{\text{Uniform conver.}} + \underbrace{d}_{\text{Conver. + fixed $V$}}+ \underbrace{d^{2}}_{\text{Approximation error}}\big),
    \end{align*}
where we use an upper bound of the covering number (Lemma~\ref{lem:covering_number}), and omit all the logarithmic terms for a clear comparison. Therefore, we conclude that the uniform concentration leads to an extra $\sqrt{d}$ from the logarithmic covering number $\log \cN_{h+1}(\epsilon)$. The prior work \citet{yin2022near} omits the dependency between $\widehat{V}_{h+1}$ and $\{x_h^\tau, a_h^\tau, x_{h+1}^\tau\}_{\tau \in \cD}$, thus circumventing the uniform concentration. To fix this gap, one might make an additional assumption that the dataset is independent across different time steps $h$ as in \citet{min2021variance}, which is not realistic in practice because the behavior policy collects the trajectories by playing the episode starting from the initial state.

\section{Reference-Advantage Decomposition Under Linear Function Approximation} \label{sec:reference_adv_decom}
In this section, we introduce the reference-advantage decomposition under linear function approximation, which serves to avoid a $\sqrt{d}$ amplification of the error due to the uniform concentration.

We first define $\hpb_h g_{h+1}(\cdot,\cdot)$ to be an estimator of the conditional expectation, which is obtained by setting $r_h^\tau = 0$ in $\hcT_h g_{h+1}(\cdot,\cdot)$. We observe that the following affine properties hold:
$$\small
\begin{aligned}
\left(\hcT_h (f_{h+1} + g_{h+1})\right)(\cdot,\cdot) &= \phi(\cdot,\cdot)^\top \Lambda_h^{-1} \Big(\sum_{\tau \in \cD} \phi_h^\tau (r_h^\tau + f_{h+1}(x^\tau_{h+1}))\Big) + \phi(\cdot,\cdot)^\top \Lambda_h^{-1} \Big(\sum_{\tau \in \cD} \phi_h^\tau g_{h+1}(x^\tau_{h+1})\Big)\\
&= \hcT_h f_{h+1}(\cdot,\cdot) + \widehat{\Pb}_h g_{h+1}(\cdot,\cdot),\\
\left(\cT_h \left(f_{h+1} + g_{h+1}\right)\right)(\cdot,\cdot) &= \dotprod{\phi(x,a), \theta_h} + \int_\cS f_{h+1}(x') \dotprod{\phi(x,a), d\mu_h(x')}+ \int_\cS g_{h+1}(x') \dotprod{\phi(x,a), d\mu_h(x')}\\
&= \cT_h f_{h+1}(\cdot,\cdot) + \Pb_h g_{h+1}(\cdot,\cdot).
\end{aligned}
$$
Instead of directly bounding the uncertainty as in Eqn.~\eqref{eqn:self_norm}, we now make the following decomposition:
\begin{equation}\label{eqn:bonus_decom}\small
    \begin{aligned}
    &\hcT_h \hv_{h+1}(x,a) - \cT_h \hv_{h+1}(x,a) = \left(\hcT_h (\hv_{h+1} + V^*_{h+1} - V^*_{h+1})\right)(x,a) - \left(\cT_h (\hv_{h+1} + V^*_{h+1} - V^*_{h+1})\right)(x,a)\\
    &= \underbrace{\hcT_h V^*_{h+1}(x,a) - \cT_h V^*_{h+1}(x,a)}_{\displaystyle \text{Reference uncertainty} \leq b_{0,h}(x,a)} + \underbrace{\hpb_h (\hv_{h+1} - V^*_{h+1})(x,a) - \Pb_h (\hv_{h+1}- V^*_{h+1})(x,a)}_{\displaystyle \text{Advantage uncertainty} \leq b_{1,h}(x,a)}.
    \end{aligned}
\end{equation}
We have the following key observations about the reference part and the advantage part.

\textbf{Circumventing the Uniform Concentration for Reference Function.} As $V^*_{h+1}$ is deterministic, we can directly invoke standard concentration inequality (Lemma~\ref{lem:decomposition_be} and~\ref{lem:hoeffding}) to set $b_{0,h}(x,a) = \tilde{O}(\sqrt{d}H) \norm{\phi(x,a)}_{\Lambda_h^{-1}}$, which avoids a $\sqrt{d}$ amplification due to uniform concentration.

\textbf{High-order Error from Correlated Advantage Function.} Although we still need the standard uniform concentration argument to analyze the advantage function and obtain a sub-optimal $d$-dependency, under Assumption~\ref{assu:feature_coverage}, by a carefully-crafted induction procedure,  we can show that $\|\hv_{h+1} - V^*_{h+1}\|_\infty = \tilde{O}(\frac{\sqrt{d}H^2}{\sqrt{K\kappa}})$. The much smaller range implies that we can invoke Lemma~\ref{lem:hoeffding} to set $b_{1,h}(x,a) = \tilde{O}(\frac{d^{3/2}H^2}{\sqrt{K\kappa}}) \norm{\phi(x,a)}_{\Lambda^{-1}_h}$, which is non-dominating when $K\geq \tilde{\Omega}\left(d^2H^2/\kappa\right)$. The detailed proof can be found in Appendix~\ref{sec:proof_pevi_adv}.

Based on the above reasoning, we conclude that we can set $\Gamma_h(\cdot, \cdot) = \tilde{O}\big(\sqrt{d}H\big) \norm{\phi(x,a)}_{\Lambda_h^{-1}}$ in the original PEVI algorithm \citep{jin2021pessimism} and obtain a new algorithm referred to as LinPEVI-ADV. Together with a refined analysis, we have the following theoretical guarantee.
\begin{theorem}[LinPEVI-ADV] \label{thm:pevi_adv}
Under the Assumptions~\ref{assu:feature_coverage}, and $K > \tilde{\Omega}(d^2H^2/\kappa)$, if we set $\lambda=1$ and $\beta_1=\tilde{O}(\sqrt{d}H)$ in Algorithm~\ref{alg:pevi_adv}, then, with probability at least $1-\delta$, for any $x \in \cS$, we have
$$
\begin{aligned}
V^*_1(x) - V^{\widehat{\pi}}_1(x) \leq \tilde{O}\left(\sqrt{d}H \right) \sum_{h=1}^{H} \mathbb{E}_{\pi^{*}}\left[ \norm{\phi(x,a)}_{\Lambda_h^{-1}} \mid x_{1}=x\right],
\end{aligned}
$$
where $\Lambda_{h} = \sum_{\tau \in \cD} \phi(x_h^\tau,a_h^\tau) \phi(x_h^\tau,a_h^\tau)^\top  + \lambda I_d$.
\end{theorem}
We remark that LinPEVI-ADV shares the same pseudo code with PEVI, except for a $\sqrt{d}$-improvement in the choice of $\beta$. For completeness, we present the code in Algorithm~\ref{alg:pevi_adv}.

\textbf{Compared to \citet{xie2021policy}.} To the best of our knowledge, the reference-advantage decomposition is new in the literature of offline linear MDP, but it is relatively well-studied in tabular MDPs \citep{azar2017minimax, zanette2019tighter, zhang2020almost, xie2021policy}. Among them, \citet{xie2021policy} studies the offline tabular MDP and is most related to ours. While we share similar algorithmic ideas in terms of uncertainty decomposition, we comment on our differences as follows. First, in terms of algorithmic design, \citet{xie2021policy} uses an independent dataset to explicitly construct a reference $\hvr_{h+1}$ but we only use $V^*_{h+1}$ in the theoretical analysis. Second, in terms of the uncertainty estimation, \citet{xie2021policy} estimates the uncertainty separately of each state-action pair by counting its frequency. On the contrary, in the linear setting, we deal with the regression and the state-action pairs are coupled with each other because the analysis of the self-normalized process involves the estimated covariance matrix thus all the samples at step $h$ \eqref{eqn:self_norm}. This brings distinct challenges to the linear setting, both in terms of the analysis and the coverage condition for achieving a sharp bound. Finally, the theoretical analyses are distinctly different due to the previous two points. For instance, we adopt a carefully-crafted induction procedure to control the advantage function, while \citet{xie2021policy} further introduces some dataset splitting techniques to handle the temporal dependency in the advantage function. Moreover, for linear MDP, the way in which the variance is introduced is also different (see Section~\ref{sec:variance_estimator}).

\section{Leverage the Variance Information}\label{sec:variance_estimator}
\begin{algorithm}[tb]
	\caption{LinPEVI-ADV+}
	\label{alg:pevi_adv_plus}
		\begin{algorithmic}[1]
		    \STATE \textbf{Initialize:} Input datasets $\cD$, $\cD'$, and $\beta_2$; $\widehat{V}_{H+1}(\cdot) = 0$.
			\STATE Construct variance estimator $\widehat{\sigma}^2_h(x_h^\tau,a_h^\tau)$ as in Section~\ref{sec:variance_estimator} with $\cD'$.
			\FOR{$h=H,\dots,1$}
			\STATE $\Sigma_h = \sum_{\tau \in \cD} \phi(x_h^\tau, a_h^\tau) \phi(x_h^\tau, a_h^\tau)^\top / \widehat{\sigma}^2_h(x_h^\tau,a_h^\tau) + \lambda I_d$; 
			\STATE $\widehat{w}_{h} = \Sigma_h^{-1} \left(\sum_{\tau \in \cD}\phi(x_h^\tau,a_h^\tau) \frac{r_h^\tau + \hv_{h+1}(x_{h+1}^\tau)}{\widehat{\sigma}^2_h(x_h^\tau,a_h^\tau)}\right)$;
			\STATE $\Gamma_h(\cdot,\cdot) \leftarrow \beta_2 \norm{\phi(\cdot,\cdot)}_{\Sigma_h^{-1}}$;
			\STATE $\widehat{Q}_h(\cdot,\cdot) \leftarrow \{\phi(\cdot, \cdot)^\top \widehat{w}_{h} - \Gamma_h(\cdot,\cdot)\}_{[0,H-h+1]}$;
			\STATE $\widehat{\pi}_{h}(\cdot \mid \cdot) \leftarrow \arg \max _{\pi_{h}}\langle\widehat{Q}_{h}(\cdot, \cdot), \pi_{h}(\cdot \mid \cdot)\rangle_{\mathcal{A}}, \widehat{V}_{h}(\cdot) \leftarrow \langle\widehat{Q}_{h}(\cdot, \cdot), \widehat{\pi}_{h}(\cdot \mid \cdot)\rangle_{\mathcal{A}}$.
			\ENDFOR
			\STATE \textbf{Output:} $\widehat{\pi} = \{\widehat{\pi}_h\}_{h=1}^H$.
		\end{algorithmic}
\end{algorithm}

The variance-weighted ridge regression technique was introduced by \citet{zhou2021nearly} in the literature of RL and was later firstly adapted by \citet{min2021variance, yin2022near} to offline RL under linear function approximation. Specifically, given a \textit{fixed} function $f_{h+1}: \cS \to [0,H-1]$ as the target, we perform the following \textit{variance-weighted} ridge regression to estimate $\widehat{w}_{h}$ as:
\begin{equation} \label{eqn:var_aware} \small
\underset{w \in \mathbb{R}^{d}}{\operatorname{argmin}} \sum_{\tau\in \cD} \frac{\left[(\phi_h^\tau)^\top w-r_{h}^{\tau}-f_{h+1}\left(x_{h+1}^{\tau}\right)\right]^{2}}{\widehat{\sigma}^2_h(x_h^\tau, a_h^\tau)} + \lambda\|w\|_{2}^{2} = \Sigma_h^{-1} \Big(\sum_{\tau \in \cD} \frac{\phi\left(x_{h}^{\tau}, a_{h}^{\tau}\right) \cdot\left(r_{h}^{\tau}+f_{h+1}\left(x_{h+1}^{\tau}\right)\right)}{\widehat{\sigma}^2_h(x_h^\tau, a_h^\tau)} \Big),
\end{equation}
where $\Sigma_h = \sum_{\tau \in \cD} \frac{\phi(x_h^\tau, a_h^\tau)\phi(x_h^\tau, a_h^\tau)^\top}{\widehat{\sigma}^2_h(x_h^\tau, a_h^\tau)} + \lambda I_d$ and $\widehat{\sigma}^2_h(\cdot,\cdot) \in [1, H^2]$ is an \textit{independent} variance estimator. In this case, we can bound the uncertainty by Lemma~\ref{lem:decomposition_be} as follows: 
\begin{equation*}
|\mathcal{T}_{h} f_{h+1}-\widehat{\mathcal{T}}_{h} f_{h+1}|(x,a) \lesssim \|{\sum_{\tau \in \cD} \phi\left(x_{h}^{\tau}, a_{h}^{\tau}\right) \cdot \xi_h^\tau(f_{h+1})}\|_{\Sigma_h^{-1}} \cdot \norm{\phi(x, a)}_{\Sigma_{h}^{-1}},
\end{equation*}
with $\xi_h^\tau(f_{h+1}) = \frac{r_h^\tau + f_{h+1}(x_{h+1}^\tau) - (\cT_h f_{h+1}) (x_h^\tau, a_h^\tau)}{\widehat{\sigma}_h(x_h^\tau, a_h^\tau)}$. The high-level idea is that the normalized $\xi_h^\tau(f_{h+1})$ is of a conditional variance $O(1)$ and the Bernstein-type inequality (Lemma~\ref{lem:bernstein}) gives a bound of $\beta_2 := \tilde{O}(\sqrt{d})$ for the self-normalized process instead of $\beta_1 = \tilde{O}(H\sqrt{d})$ from the Hoeffding-type one. Therefore, instead of depending on the planning horizon $H$ explicitly, for $\beta_2 \norm{\phi(x,a)}_{\Sigma_h^{-1}}$, the $H$ factor is ``hidden'' in the covariance matrix $\Sigma_h$. Furthermore, as $\Sigma^{-1}_h \preccurlyeq H^2 \Lambda^{-1}_h$, we know that the new bonus function is never worse because $\beta_2 \norm{\phi(x,a)}_{\Sigma^{-1}_h} \lesssim \beta_1 \norm{\phi(x,a)}_{\Lambda_h^{-1}}$. Combining this observation with \eqref{eqn:val_to_bonus}, we conclude that the PEVI with variance-weighted regression is superior as long as we can construct an \textit{independent} and approximately accurate variance estimator $\widehat{\sigma}_h^2(\cdot,\cdot)$. To this end, we also carefully handle the temporal dependency due to the following two reasons: (i) we need to avoid the measurability issue as described in Section~\ref{sec:reference_adv_decom}; and (ii) the estimation of the conditional variance of $\xi_h^\tau(f_{h+1})$ will be hard otherwise. To further illustrate the idea, we now discuss the limitations of existing approaches when we do not omit the temporal dependency, thus motivating our corresponding modifications.

\textbf{Limitation of the Existing Approach}. \citet{yin2022near} constructs the variance estimator $\widehat{\sigma}_h^2(\cdot, \cdot)$ and estimate the Bellman equation both with the target function $\hv_{h+1}$. As discussed in Section~\ref{sec:challenge}, $\widehat{\sigma}_h^2(\cdot, \cdot)$ is statistically dependent with the dataset $\cD_h := \{(x_h^\tau, a_h^\tau)\}_{\tau =1}^K$ due to temporal dependency. Therefore, the random variables $\{\frac{\phi(x_h^\tau, a_h^\tau)}{\widehat{\sigma}_h(x_h^\tau, a_h^\tau)}\}_{\tau=1}^K$ are dependent so the concentration of the covariance matrix with Lemma~\ref{lem:covar_estimation} (Lemma C.3 of \citet{yin2022near}) does not apply. Moreover, a similar measurability issue arises from the statistical dependency between the $\widehat{\sigma}_h(\cdot, \cdot)$ and $\cD_h$ so the concentration of the self-normalized process also fails. Finally, the conditional variance of $\hv_{h+1}(x_{h+1}^\tau)/\widehat{\sigma}_h(x_h^\tau, a_h^\tau)$ is hard to control because the numerator and denominator are tightly coupled with each other.

\textbf{Variance Estimator}. Equipped with the reference-advantage decomposition, it suffices to focus on the reference function $V^*_{h+1}$. If we can construct a $\widehat{\sigma}_h^2(\cdot, \cdot)$ that is independent of $\cD$, then $\xi_h^\tau(V^*_{h+1}) = \frac{r_h^\tau + V^*_{h+1}(x_{h+1}^\tau) - (\cT_h V^*_{h+1})(x_h^\tau,a_h^\tau)}{\widehat{\sigma}_h(x_h^\tau,a_h^\tau)}$ will be independent with each other and easy to deal with. To this end, we first use an independent dataset $\cD'$ to run Algorithm~\ref{alg:pevi_adv} and construct $\{\hvp_h\}_{h=1}^H$ (this only incurs a factor of $2$ in the final sample complexity). Then, by Lemma~\ref{lem:linear}, we know that there exist $\beta_{h,1},\beta_{h,2} \in \RR^d$ such that $[\Pb_h (\hvp_{h+1})^2](x,a) = \dotprod{\phi(x,a), \beta_{h,2}}$ and  $([\Pb_h \hvp_{h+1}(x,a)])^2 = \left[\dotprod{\phi(x,a), \beta_{h,1}}\right]^2$. Similarly, we can approximate $\beta_{1,h}$ and $\beta_{2,h}$ via ridge regression:
\begin{equation} \label{eqn:variance_rigde_regression} \small
    \begin{aligned}
    \widetilde{\beta}_{h,2}&=\underset{\beta \in \mathbb{R}^{d}}{\operatorname{argmin}} \sum_{\tau \in \cD'}\left[\left\langle\phi\left(x_h^\tau,a_h^\tau\right), \beta\right\rangle-(\hvp_{h+1})^{2}\left(x_{h+1}^{\tau}\right)\right]^{2}+\lambda\|\beta\|_{2}^{2},\\
    \widetilde{\beta}_{h,1}&=\underset{\beta \in \mathbb{R}^{d}}{\operatorname{argmin}} \sum_{\tau \in \cD'}\left[\left\langle\phi\left(x_h^\tau,a_h^\tau\right), \beta\right\rangle-\hvp_{h+1}\left(x_{h+1}^{\tau}\right)\right]^{2}+\lambda\|\beta\|_{2}^{2}.
    \end{aligned}
\end{equation}
We then employ the following variance estimator:
\begin{equation}\label{eqn:var}
\widehat{\sigma}_h^2(x,a) := \max\Big\{1,  \big[\phi(x,a)^\top \widetilde{\beta}_{h,2}\big]_{[0, H^2]} - \big[\phi(x,a)^\top, \widetilde{\beta}_{h,1}\big]^2_{[0,H]}  - \tilde{O}\big(\frac{{d}H^3}{\sqrt{K\kappa}}\big) \Big\}.
\end{equation}
With proof essentially similar to that of PEVI, we can show that with high probability (cf. Lemma~\ref{lem:control_variance}),
$$[{\VV}_h V^*_{h+1}](x,a) - \tilde{O}\left(\frac{{d}H^3}{\sqrt{K\kappa}}\right) \leq \widehat{\sigma}_h^2(x,a) \leq [{\VV}_h V^*_{h+1}](x,a),$$ 
where $[{\VV}_h V^*_{h+1}](x,a) = \max \{1, [\var_h V^*_{h+1}](x,a)\}$ is the truncated variance of $V^*_{h+1}(\cdot)$. Therefore, $\widehat{\sigma}_h^2(\cdot, \cdot)$ defined in \eqref{eqn:var} approximates the conditional variance of $\xi_h^\tau(V^*_{h+1})$ well when $K$ exceeds a threshold. Moreover, since the target function $V^*_{h+1}$ is deterministic thus measurable, we know that 
$$\small
\begin{aligned}
\var_{x_{h+1}^\tau|x_h^{1:\tau}, a_h^{1:\tau},x_{h+1}^{1:\tau-1}} \Big[\frac{V^*_{h+1}(x_{h+1}^\tau)}{\widehat{\sigma}_h(x_h^\tau, a_h^\tau)}\Big] &= \var_{x_{h+1}^\tau|x_h^\tau, a_h^\tau} \Big[\frac{V^*_{h+1}(x_{h+1}^\tau)}{\widehat{\sigma}_h(x_h^\tau, a_h^\tau)}\Big]\\
&= \frac{\var_{x_{h+1}^\tau|x_h^\tau, a_h^\tau} [V^*_{h+1}(x_{h+1}^\tau)]}{\widehat{\sigma}_h(x_h^\tau, a_h^\tau)} \approx O(1),
\end{aligned}
$$
where $x_h^{1:\tau}, a_h^{1:\tau},x_{h+1}^{1:\tau-1}$ is short for $\{(x_h^i, a_h^i): 1 \leq i \leq \tau\} \cup \{x_{h+1}^i: 1 \leq i \leq \tau-1\}$. Therefore, the high-level idea stated at the beginning of this section is realized by the variance estimator defined by \eqref{eqn:var}. Combining the variance-weighted regression with the reference-advantage decomposition, we propose LinPEVI-ADV+ (Algorithm~\ref{alg:pevi_adv_plus}), which enjoys the following theoretical guarantee.

\begin{theorem}[LinPEVI-ADV+] \label{thm:pevi_adv_plus}
Under Assumption~\ref{assu:feature_coverage}, for $K\geq \tilde{\Omega}(d^2H^{6}/\kappa)$, if we set $\lambda=1/H^2$ and $\beta_2 = \tilde{O}(\sqrt{d})$ in Algorithm~\ref{alg:pevi_adv_plus}, then, we have with probability at least $1-\delta$, for any $x \in \cS$, we have
$$
\begin{aligned}
V_1^*(x) - V^{\widehat{\pi}}_1(x) \leq 
\tilde{O}(\sqrt{d}) \cdot \sum_{h=1}^H\Eb_{\pi^*} \left[\norm{\phi(x_h,a_h)}_{\Sigma_h^{*-1}}|x_1=x\right],
\end{aligned}
$$
where $\Sigma_{h}^* = \sum_{\tau \in \cD} \phi(x_h^\tau, a_h^\tau)\phi(x_h^\tau, a_h^\tau)^\top /[{\VV}_h {V}^*_{h+1}](x_h^\tau,a_h^\tau) + \lambda I_d$.
\end{theorem}

{\color{cyan}{
\begin{table}[]
    \centering
    \begin{tabular}{|c|c|c|}
    \hline 
         & Independence Assumption & Suboptimality Bound \\
    \hline
    \citet{jin2021pessimism}     &  \XSolidBrush   & $\tilde{O}(dH ) \cdot  \sum_{h=1}^{H} \mathbb{E}_{\pi^{*}}[ \norm{\phi(x,a)}_{\Lambda_h^{-1}} \mid x_{1}=x]$ \\
    \hline
    \citet{yin2022near} & \Checkmark  & $\tilde{O}(\sqrt{d}) \cdot \sum_{h=1}^H\Eb_{\pi^*} [\norm{\phi(x_h,a_h)}_{\Sigma_h^{*-1}}|x_1=x]$ \\
    \hline  
     Theorem~\ref{thm:pevi_adv} &  \XSolidBrush  & $\tilde{O} (\sqrt{d}H ) \cdot \sum_{h=1}^{H} \mathbb{E}_{\pi^{*}}[ \norm{\phi(x,a)}_{\Lambda_h^{-1}} \mid x_{1}=x]$ \\
    \hline 
     Theorem~\ref{thm:pevi_adv_plus} &  \XSolidBrush  & $\tilde{O}(\sqrt{d}) \cdot \sum_{h=1}^H\Eb_{\pi^*} [\norm{\phi(x_h,a_h)}_{\Sigma_h^{*-1}}|x_1=x]$ \\
    \hline
    \end{tabular}
    \caption{A comparison with existing results. Here  $\Lambda_{h} = \sum_{\tau \in \cD} \phi(x_h^\tau,a_h^\tau) \phi(x_h^\tau,a_h^\tau)^\top  + \lambda I_d$ and  $\Sigma_{h}^* = \sum_{\tau \in \cD} \phi(x_h^\tau, a_h^\tau)\phi(x_h^\tau, a_h^\tau)^\top /[{\VV}_h {V}^*_{h+1}](x_h^\tau,a_h^\tau) + \lambda I_d$. The independence assumption represents the assumption that the data samples are independent across different time steps $h$. We also remark that, in the new version of \citet{jin2021pessimism}, they adopt a dataset splitting technique to split the dataset into $H$ independent subsets. This technique essentially shares the same spirit with the assumption and is at the expense of an additional $H$ factor in the final sample complexity.}
    \label{tab:my_label}
\end{table}
}}

\textbf{Interpretation of the result.} Since $[{\VV}_h {V}^*_{h+1}](\cdot,\cdot) \in [1, H^2]$, we know that $\Sigma_h^{*-1} \preccurlyeq H^2 \Lambda^{-1}_h$. This implies that LinPEVI-ADV+ is never worse than LinPEVI-ADV thus also superior to the PEVI. Such an improvement is indeed strict when it reduces to the tabular setting. Specifically, let $d_h^{*}(x,a)$ denote the distribution of visitation of $(x,a)$ at step $h$ under the optimal policy $\pi^*$. LinPEVI-ADV gives a bound $\tilde{O}(\sqrt{d}H \sum_{h,x,a} d_h^*(x,a) \sqrt{\frac{1}{Kd_h^b(x,a)}})$ which has a horizon dependence of $H^2$. On the other hand, LinPEVI-ADV+ gives a bound of the form $\tilde{O}(\sqrt{d} \sum_{h,x,a} d_h^*(x,a) \big[\frac{[\VV_h V_{h+1}^*](x,a)}{Kd_h^b(x,a)}\big]^{1/2})$, which has a horizon dependence of $H^{3/2}$ by law of total variance. Meanwhile, LinPEVI-ADV+ enjoys the same rate as the VAPVI \citep{yin2022near} without an additional independence assumption on the offline dataset, as summarized in the Table~\ref{tab:my_label}.

\textbf{Compared to other methods.} \citet{zanette2020learning} and \citet{xie2021bellman} also achieve a $\sqrt{d}$ improvement but the algorithmic ideas are fundamentally different from ours. In specific, the actor-critic-based \citet{zanette2020learning} establishes pessimism via direct perturbations of the parameter vectors\footnote{See page 17 of \citet{zanette2020learning} for a discussion about the error amplification issue.}. \citet{xie2021bellman} establishes pessimism only at the initial value and is only information-theoretic. We develop techniques to resolve the issue in the LSVI framework because it possesses an appealing feature that we can assign sample-dependent weights in the regression thus obtaining a sharp $H$-dependence. To the best of our knowledge, no similar result is available in the other two frameworks. When rescaling the range of V-value to $[0, H]$, their bounds will be sub-optimal. Moreover, their bounds depend on the cardinality of the action space, while the LSVI-based one can deal with an infinite action space.

To further interpret our result, we state the following lower bound for offline linear MDP.
\begin{theorem}[Lower Bound for MDP] \label{thm:lb:mdp}
For fixed episode length $H$, dimension $d$, probability $\delta$, and sample size $K \geq \tilde{\Omega}(d^4)$. There exists a class $\cM$ of linear MDPs and an offline dataset $\mathcal{D}$ with $|\mathcal{D}| = K$, such that for any policy $\widehat{\pi}$, it holds with probability at least $1-\delta$ that for some universal constant $c > 0$, we have
$$\sup_{M \in \cM} \Eb_M [V_1^{*}(x_1) - V_1^{\widehat{\pi}}(x_1)] \ge c\sqrt{d} \cdot\sum_{h=1}^H \mathbb{E}_{\pi^*}\|\phi(x_h, a_h)\|_{\Sigma_h^{*-1}}.$$
\end{theorem}
The lower bound matches Theorem~\ref{thm:pevi_adv_plus}, thus establishing the optimality of LinPEVI-ADV+ for sufficiently large $K$. We also remark that \citet{yin2022near} establishes the lower bound $c\sqrt{d} \cdot\sum_{h=1}^H \|\mathbb{E}_{\pi^*} \phi(x_h, a_h)\|_{\Sigma_h^{*-1}}$, which is smaller than our lower bound due to Jensen's inequality.

\section{Extensions}

\subsection{Linear MDP with Finite Feature Set}
As shown in the seminal work of linear contextual bandit \citet{chu2011contextual}, one can further improve the suboptimality bound by a factor of $\sqrt{d}$, when the action set is finite. Equipped with the technique developed above, we can obtain a similar improvement in the case of finite features.
\begin{assumption}[Finite Feature Set] \label{assu:finite_feature} We assume that $|\{\phi(x,a) \in \RR^d: x \in \cS, a \in \cA\}| = M < \infty$.
\end{assumption}
We start with bounding $|\cT_h V^*_{h+1} - \hcT_h V^*_{h+1}|$, which is the key to establishing the bonus function:
\begin{equation*} \label{eqn:decom_finite} \small
\begin{aligned}
    \big(\hcT_h V^*_{h+1} - \cT_h V^*_{h+1}\big)(x,a) \overset{(a)}{\lesssim} {\sum_{\tau \in \cD} \underbrace{\dotprod{\phi(x,a), \Lambda_h^{-1}\phi_h^\tau}}_{\displaystyle \text{constant}} \xi^\tau_h(V^*_{h+1})} \overset{(b)}{\leq} \norm{\phi(x,a)}_{\Lambda_h^{-1}}{\big\|\sum_{\tau \in \cD} \xi_h^\tau(V^*_{h+1})\phi_h^\tau\big\|_{\Lambda_h^{-1}}},
\end{aligned}
\end{equation*}
where we prove $\mathrm{(a)}$ in Appendix~\ref{sec:proof_auxiliary} and $\mathrm{(b)}$ follows from Cauchy-Schwarz inequality. The reasoning proceeds as follows. The key observation is that since $V^*_{h+1}$ is deterministic, $\{\xi_h^\tau(V^*_{h+1})\}_{\tau\in \cD}$ are independent conditioned on $\cD_h = \{(x_h^\tau, a_h^\tau)\}_{\tau \in \cD}$ thus the classic Hoeffding's inequality applies. To get a high-probability bound for all features, Assumption~\ref{assu:finite_feature} allows us to bound the middle term directly by paying for a $\log(M)$ from a union bound argument, instead of a $\sqrt{d}$ from Lemma~\ref{lem:hoeffding}. We remark that the decomposition trick is necessary for the above reasoning. One cannot take condition on $\cD_h$ otherwise because this will influence the distribution of $\{\xi_h^\tau(\hv_{h+1})\}_{\tau \in \cD}$. Combining this with \eqref{eqn:val_to_bonus}, we have the following result.


\begin{theorem}[LinPEVI-ADV with Finite Feature Set] \label{thm:pevi_adv_finite} Suppose Assumptions~\ref{assu:feature_coverage} and~\ref{assu:finite_feature} hold. If $K \geq \tilde{\Omega}(d^2H^2/\kappa)$ and we set $\beta_1 = O\left(\sqrt{\log(2H^2M/\delta)}\right)$ and $\lambda=1/d$ in Algorithm~\ref{alg:pevi_adv}, then, with probability at least $1-\delta$, for any $x \in \cS$, with $\Lambda_h = \sum_{\tau \in \cD} \phi(x_h^\tau,a_h^\tau)\phi(x_h^\tau,a_h^\tau)^\top + \lambda I_d$, we have
\begin{equation*}
V_1^*(x) - V_1^{\widehat{\pi}}(x) \leq  O\big(H \big[\log\left(2H^2M/\delta\right)\big]^{1/2}\big) \sum_{h=1}^H \Eb_{\pi^*} \Big[\norm{\phi(x_h,a_h)}_{\Lambda_h^{-1}}  \mid x_1=x\Big].
\end{equation*}
\end{theorem}

\subsection{Linear Two-player Zero-sum Markov Game}
In the two-player zero-sum MGs \citep{xie2020learning} where at each step, there is another player taking action simultaneously from another action space $\cB$ and the reward function and transition kernel are linear in a feature map $\phi(x,a,b): \cS \times \cA \times \cB \to \RR^d$. The learning objective is to approximate the Nash equilibrium (NE): $(\pi^*, \nu^*)$ such that $V^{\pi^*, \nu^*}_h(x)= \max_\pi \min_\nu V_h^{\pi, \nu}(x)$, where the V-value function is now defined as 
$V_h^{\pi, \nu}(x)  = \Eb_{\pi,\nu}[\sum_{h'=h}^H r_{h'} | x_h=x]$. With slight abuse of notation, we can define the Bellman operator for the two-player zero-sum MG as follows:
\begin{equation} \label{eqn:be_mg}
    (\cT_h V)(x,a,b) := r_h(x,a,b) + \sum_{x' \in \cS} \Pb_h(x'|x,a,b) V(x') = \phi(x,a,b)^\top w_h,
\end{equation}
where the linear structure of the Bellman equation (i.e. the existence of $w_h \in \RR^d$) follows from the linearity of reward and transition. The Pessimistic Minimax Value Iteration (PMVI) proposed in \citet{zhong2022pessimistic} also establishes pessimism at every step and we have
\begin{equation}
\begin{aligned}\small
\underbrace{V_1^*(x) - \min_{\nu} V_1^{\widehat{\pi}, \nu}(x)}_{\displaystyle \text{Gap to the NE Value}} \leq  2 \sup_{\pi} \sum_{h = 1}^H \Eb_{\pi, \nu^*} [{\underline{\Gamma}}_h(x, a, b) \mid x_1 = x],
\end{aligned}
\end{equation}
where $\underline{\Gamma}_h(x,a,b)$ is a bonus function such that $|\cT_h \underline{V}_{h+1}(x, a, b) - \widehat{\cT}_h \underline{V}_{h+1}(x, a, b)| \le \underline{\Gamma}_h(x, a, b)$ with high probability and $\underline{V}_{h+1}$ is the estimated NE value of the first agent at step $h+1$. Although the learning objective is different, the suboptimality bound essentially also reduces to the uncertainty estimation at each step, and \citet{zhong2022pessimistic} suffers from exactly the same challenge from the statistical dependency between $\underline{V}_{h+1}$ and the data samples used to construct $(\hcT_h \underline{V}_{h+1})$. Therefore, our techniques can be readily extended to this the MG setting and improve the result in \citet{zhong2022pessimistic}. We defer the details to  Appendix~\ref{appendix:mg}.


\section{Conclusion} \label{sec:conclu}
In this paper, we study the linear MDPs in the offline setting. We identify the complicated statistical dependency between different time steps as the bottleneck of the algorithmic design and theoretical analysis. To address this issue, we develop a new reference-advantage decomposition technique under the linear function approximation, which serves to avoid a $\sqrt{d}$-amplification of the value function error due to temporal dependency and is also critical for leveraging the variance information to achieve a sharp dependence on the planning horizon $H$. We further generalize the developed techniques to the linear MDP with finite features and also the two-player zero-sum MGs, which demonstrate the broad adaptability of our methods.

\section*{Acknowledgements}

 Wei Xiong and Tong Zhang acknowledge the funding supported by the GRF 16310222 GRF 16201320. Liwei Wang is supported by National Science Foundation of China (NSFC62276005), The Major Key Project of PCL (PCL2021A12), Exploratory Research Project of Zhejiang Lab (No. 2022RC0AN02), and Project 2020BD006 supported by PKUBaidu Fund.





\bibliography{iclr2023_conference.bib}
\bibliographystyle{apalike}

\newpage
\appendix

\section{Notation Table and Comparisons} \label{appendix:notation}

\subsection{Notations Table}
We summarize the notations used in this paper in the Table~\ref{tbl:notation_table}.
\begin{table}[h]
\begin{tabular}{ll}
     \hline
     \bf{Notation}   & \bf{Explanation} \\
     \hline
     $\kappa = \min_{h \in [H]} \lambda_{\min}(\Eb_{d_h^b} [\phi(x,a)\phi(x,a)^\top]) > 0$ & Assumption~\ref{assu:feature_coverage} (MDP) and \ref{assu:feature_coverage_mg} (MG)\\
     $(\Pb_h V)(x,a) = \sum_{x' \in \cS} \Pb_h(x'|x,a) V(x')$ & conditional expectation\\
     $[\var_h V](x,a) = [\Pb_h V^2](x,a) - ([\Pb_h V](x,a))^2$ & conditional variance\\
     $(\cT_h V) (x,a) = r_h(x,a) + (\Pb_h V)(x,a)$ & Bellman equation and Bellman operator\\
     $\widehat{\sigma}_h^2(\cdot,\cdot) \in [1, H^2]$ & empirical variance estimator\\
     $[{\VV}_h V^*_{h+1}](x,a) = \max \{1, [\var_h V^*_{h+1}](x,a)\}$ & clipped conditional variance of $V^*_{h+1}$\\
     $\Lambda_h = \sum_{\tau \in \cD} \phi(x_h^\tau,a_h^\tau)\phi(x_h^\tau,a_h^\tau)^\top + \lambda I_d$ & regular covariance estimator\\
    $\Sigma_h = \sum_{\tau \in \cD} \frac{\phi(x_h^\tau, a_h^\tau)\phi(x_h^\tau, a_h^\tau)^\top}{\widehat{\sigma}^2_h(x_h^\tau, a_h^\tau)} + \lambda I_d$ & variance-weighted covariance estimator\\
    $\Sigma_{h}^* = \sum_{\tau \in \cD} \frac{\phi(x_h^\tau, a_h^\tau)\phi(x_h^\tau, a_h^\tau)^\top}{[{\VV}_h {V}^*_{h+1}](x_h^\tau,a_h^\tau)} + \lambda I_d$ & variance-weighted covariance matrix\\
    $\xi_h^\tau(f_{h+1}) = \frac{r_h^\tau + f_{h+1}(x_{h+1}^\tau) - (\cT_h f_{h+1}) (x_h^\tau, a_h^\tau)}{\widehat{\sigma}_h(x_h^\tau, a_h^\tau)}$ & noise in the self-normalized process\\
    \hline
\end{tabular}
\caption{A summary of notations used in this paper. With slight abuse of notations, the notations for MGs are defined similarly but we replace $(x,a)$ with $(x,a,b)$ and replace $(x_h^\tau, a_h^\tau)$ with $(x_h^\tau, a_h^\tau, b_h^\tau)$, accordingly.}
\label{tbl:notation_table}
\end{table}

\subsection{Additional Related Work} \label{appendix:related_work}
We review existing works that are closely related to our paper in this section. 

\noindent{\textbf{Offline RL.}} The principle of pessimism is first used by \citet{jin2021pessimism} to empower efficient offline learning under only partial coverage. It shows that we can design an efficient offline RL algorithm with only sufficient coverage over the optimal policy, instead of the previous uniform one required by \citet{precup2000eligibility, antos2008learning, levine2020offline}. After that, a line of work \citep{rashidinejad2021bridging, yin2021towards, uehara2021representation, zanette2021provable, xie2021bellman, uehara2021pessimistic, shi2022pessimistic, li2022settling} leverages the principle of pessimism, either in the tabular case or in the case with function approximation, and we elaborate them separately.

\textbf{Offline tabular RL.} For tabular MDP, a line of works has incorporated the principle of pessimism to design efficient offline RL algorithms \citep{rashidinejad2021bridging, yin2021towards, xie2021policy, shi2022pessimistic, li2022settling}. In particular, \citet{xie2021policy} proposes a variance-reduction offline RL algorithm for tabular MDP which is nearly optimal after the total sample size exceeds a certain threshold. After that, \citet{li2022settling} proposed an algorithm that is nearly optimal by introducing a novel subsampling trick to cancel the temporal dependency among time steps. \citet{shi2022pessimistic} proposed the first nearly optimal model-free offline RL algorithm. 

\textbf{Offline RL with function approximation.} For RL problems with linear function approximation, \citet{jin2021pessimism} designs the first pessimism-based efficient offline algorithm for linear MDP. After that, \citet{min2021variance} considers offline policy evaluation problem under linear MDP and designs a novel offline algorithm which incorporates the variance information of the value function to improve the sample efficiency. This technique is later adopted by \citet{yin2022near}. However, \citet{min2021variance, yin2022near} depend (explicitly or implicitly) on an assumption that the data samples are independent across different time steps $h$ so they do not need to handle the temporal dependency, which considerably complicates the analysis. Moreover, such an assumption is not very realistic when the dataset is collected by some behavior policy by interacting with the underlying MDP. Therefore, it remains open whether we can design computationally efficient algorithms that achieve minimax optimal sample efficiency for offline learning with linear MDP. Beyond the linear function approximation, \citet{xie2021bellman, uehara2021pessimistic} propose pessimistic offline RL algorithms with general function approximation. However, their works are only information-theoretic as they require an optimization subroutine over the general function class which is computationally intractable in general.

\textbf{Offline MGs.} The existing works studying sample-efficient equilibrium finding in offline MARL include \citet{zhong2021can,chen2021pessimism,cui2022offline,zhong2022pessimistic}. Among these works, \citet{cui2022offline} and \citet{zhong2022pessimistic} are most closely related to our algorithm where they study the offline two-player zero-sum MGs in the tabular and linear case, respectively. In terms of the minimal dataset coverage assumption, both of them identify that the unilateral concentration, i.e., a good coverage on the $\{(\pi^*,\nu), (\pi,\nu^*): (\pi^*, \nu^*) \text{ is an NE and } (\pi, \nu) \text{ is arbitrary.}\}$, is necessary and sufficient for efficient offline learning. In terms of sample complexity, similarly, while for the tabular game, \citet{cui2022offline} is nearly optimal in terms of the dependency on the number of states, there is a $\tilde{O}(\sqrt{d}H)$ gap between the upper and lower bounds for linear MG \citep{zhong2022pessimistic}. 

\textbf{Online RL with function approximation.}  \citet{jin2020provably} and \citet{xie2020learning} propose the first provably efficient algorithm for online linear MDP and linear MG, respectively. However, there is a gap between their regret bounds and existing lower bounds where we remark that similar issues of temporal dependency also exists in the analysis of online algorithms for linear MDP and linear MG. For instance, in Lemma C.3 of \citet{jin2020provably}, they also need to analyze the self-normalized process where the uniform concentration leads to an extra factor of $\sqrt{d}$. The recent work \citet{huang2022near} leverages similar ideas of reference-advantage decomposition, trying to improve the regret bound for the linear MDP but focus on the online setting, whose algorithmic design (optimism v.s. pessimism, choices of weights) and proof techniques (online v.s. offline) are different from ours. \citet{chen2022almost} considers the linear mixture MG, which is different from the model considered in this paper. Beyond the linear function approximation, several works on MDP \citep{jiang@2017,jin2021bellman,dann2021provably,du2021bilinear,foster2021statistical} and MG \citep{jin2021power, huang2021towards, xiong2022self} design algorithms in the online setting with general function approximation. When applied to the linear setting, though their regret bounds are sharper than that of \citet{jin2020provably,xie2020learning}, their algorithms are only information-theoretic and are computationally inefficient. 

\textbf{Variance-weighted Regression}. It is known that variance information is essential for sharp horizon dependence \citep{azar2017minimax, zhang2020almost, zhang2021variance, zhou2021nearly}. Particularly, for online linear mixture MDP, \citet{zhou2021nearly} develops the variance-weighted regression and achieves the minimax optimal regret bound; \citet{zhang2021variance} considers the time-homogeneous setting and achieves a horizon-free guarantee. This innovative idea is first introduced to the offline setting by \citet{min2021variance} and \citet{yin2022near}. In this paper, we mainly generalize this technique to the more challenging offline setting without the additional independence assumption required in the existing approaches. See Section~\ref{sec:variance_estimator} for details.

\begin{algorithm}[t]
	\caption{PEVI (LinPEVI-ADV)}
	\label{alg:pevi_adv}
		\begin{algorithmic}[1]
		    \STATE \textbf{Initialize:} Input dataset $\cD$, $\beta_1$;
		    $\widehat{V}_{H+1}(\cdot) = 0$.
			\FOR{$h=H,\dots,1$}
			\STATE $\Lambda_h \leftarrow \sum_{\tau \in \cD} \phi(x_h^{\tau}, a_h^{\tau})\phi(x_h^{\tau}, a_h^{\tau})^\top + \lambda I_d$;
			\STATE $\widehat{w}_{h} \leftarrow \Lambda_h^{-1} (\sum_{\tau \in \cD} \phi\left(x_h^{\tau}, a_h^{\tau}\right) (r_h^{\tau} + \hv_{h+1}(x_{h+1}^{\tau}))$;
			\STATE $\Gamma_h(\cdot,\cdot) \leftarrow \beta_1 \norm{\phi(\cdot,\cdot)}_{\Lambda_h^{-1}}$;
			\STATE $\widehat{Q}_h(\cdot,\cdot) \leftarrow \{\phi(\cdot, \cdot)^\top \widehat{w}_h - \Gamma_{h}(\cdot, \cdot)\}_{[0,H-h+1]}$;
			\STATE $\widehat{\pi}_{h}(\cdot \mid \cdot) \leftarrow \arg \max _{\pi_{h}}\langle\widehat{Q}_{h}(\cdot, \cdot), \pi_{h}(\cdot \mid \cdot)\rangle_{\mathcal{A}}, \widehat{V}_{h}(\cdot) \leftarrow \langle\widehat{Q}_{h}(\cdot, \cdot), \widehat{\pi}_{h}(\cdot \mid \cdot)\rangle_{\mathcal{A}}$.
			\ENDFOR
			\STATE \textbf{Output:} $\widehat{\pi} = \{\widehat{\pi}_h\}_{h=1}^H$ and $\widehat{V} = \{\widehat{V}_h\}_{h=1}^H$
		\end{algorithmic}
\end{algorithm}

\section{Results for Markov Game} \label{appendix:mg}
We extend our techniques to the linear Markov games in this section.

\subsection{Problem Setup}
We introduce the two-player zero-sum Markov games (MGs) with notations similar to single-agent MDPs, which is a slight abuse of notation but should be clear from the context. 

\textbf{Two-player Zero-sum Markov Game with Linear Function Approximation} is defined by a tuple $(\cS, \cA, \cB, H, \Pb, r)$, where $\cS$ denotes the state space, $\cA$ and $\cB$ are the action spaces for the two players, $H$ is the length of each episode, $\Pb = \{\Pb_h : \cS \times \cA \times \cB \rightarrow \Delta_{\cS}\}_{h =1}^H$ is the transition kernel, and $r = \{r_h : \cS \times \cA \times \cB \rightarrow [0, 1] \}_{h=1}^H$ is the reward function. The first player (referred to as the max-player) takes action from $\cA$ aiming to maximize the cumulative reward, while the second  player (referred to as the min-player) want to minimize it. The policy of the max-player is defined as $\pi = \{\pi_h: \cS \rightarrow \Delta_{\cA}\}_{h=1}^H$. Analogously, the policy for the min-player is defined by $\nu = \{\nu_h: \cS \rightarrow \Delta_{\cB}\}$.

\textbf{Value Function and Nash Equilibrium}. For any fixed policy pair $(\pi, \nu)$, we define the value function $V_h^{\pi,\nu}$ and the Q-function $Q_h^{\pi, \nu}$ as 
$$V_h^{\pi, \nu}(x)  = \Eb_{\pi,\nu}[\sum_{h'=h}^H r_{h'} | x_h=x], \qquad  Q^{\pi, \nu}_h(x,a, b) = \Eb_{\pi,\nu}[\sum_{h'=h}^H r_{h'} | (x_h,a_h,b_h)=(x,a,b)].$$ 
For any function $V: \cS \to \RR$, we also define the shorthand notations for the conditional mean, conditional variance, and Bellman operator as follows: 
$$\small
\begin{aligned}
(\Pb_h V)(x,a,b) &:= \sum_{x' \in \cS} \Pb_h(x'|x,a,b) V(x'), \quad [\var_h V](x,a,b) := [\Pb_h V^2](x,a,b) - ([\Pb_h V](x,a,b))^2,\\ (\cT_h V) (x,a,b) &:= r_h(x,a,b) + (\Pb_h V)(x,a,b).
\end{aligned}
$$ 
For any max-player's policy $\pi$, we define the best-response as $\text{br}(\pi) = \argmin_{\nu} V_h^{\pi, \nu}(x)$ for all $(x, h) \in \cS \times [H]$. Similarly, we can define $\text{br}(\nu)$ by $\text{br}(\nu) =\argmax_{\pi} V_h^{\pi, \nu}(x)$ for all $(x, h) \in \cS \times [H]$. 
We say $(\pi^*, \nu^*)$ is a Nash equilibrium (NE) if $\pi^*$ and $\nu^*$ are the best response to each other. For simplicity, we denote $V_h^{\pi, *} = V_h^{\pi, \mathrm{br}(\pi)}$, $V_h^{*, \nu} = V_h^{\mathrm{br}(\nu), \nu}$, and $V_h^*(x) = V_h^{\pi^*, \nu^*}(x)$. It is well known that $(\pi^*, \nu^*)$ is the solution to $\max_\pi \min_\nu V_h^{\pi, \nu}(x)$. Then we can measure the optimality of a policy pair $(\pi, \nu)$ by the duality gap, which is defined in \citet{zhong2022pessimistic, xie2020learning} as follows:
\begin{align}
    \mathrm{Gap}((\pi, \nu), x) = V_1^{*, \nu}(x) - V_1^{\pi, *}(x). 
\end{align}
We consider the linear MG \citep{xie2020learning}, which generalizes the definition of linear MDP. 
\begin{definition}[Linear MG] 
MG$(\cS, \cA, \cB, H, \Pb, r)$ is a linear MG with a (known) feature map $\phi:\cS \times \cA \times \cB \to \RR^d$, if for any $h \in [H]$, there exist $d$ unknown signed measures ${\mu}_{h}=\left(\mu_{h}^{(1)}, \cdots, \mu_{h}^{(d)}\right)$ over $\cS$ and an unknown vector $\theta_h \in \RR^d$, such that for any $(x,a) \in \cS\times \cA$, we have 
\begin{equation}
    \mathbb{P}_{h}(\cdot \mid x, a, b)=\left\langle{\phi}(x, a, b), {\mu}_{h}(\cdot)\right\rangle, \quad r_{h}(x, a, b)=\left\langle{\phi}(x, a, b), {\theta}_{h}\right\rangle.
\end{equation}
We assume that $\norm{\phi(x,a,b)}\leq 1$ for all $(x,a,b) \in \cS \times \cA \times \cB$, and $\max\{\norm{\mu_h(\cS)}, \norm{\theta_h}\} \leq \sqrt{d}$ for all $h \in [H]$.
\end{definition}
With a slight abuse of notation, we make the following coverage assumption for MGs.
\begin{assumption} \label{assu:feature_coverage_mg} We assume $\kappa = \min_{h \in [H]} \lambda_{\min}(\Eb_{d_h^b} [\phi(x,a,b)\phi(x,a,b)^\top]) > 0$ for MGs.
\end{assumption}

\subsection{LinPMVI-ADV+} \label{sec:limpmvi_adv_plus}
LinPMVI-ADV+ (Algorithm~\ref{alg:pmvi_adv_plus}) is a variant of pessimistic minimax value iteration (PMVI) from \citep{zhong2022pessimistic}. At a high level, LinPMVI-ADV+ constructs \textit{pessimistic} estimations of the Q-functions for both players and outputs a policy pair by solving two Nash Equilibrium (NE) based on these two estimated value functions. For linear MG, these can also be done by regressions. Suppose we have constructed value functions $(\underline{V}_{h+1}, \overline{V}_{h+1})$ at $(h+1)$-th step, and two \emph{independent} variance estimators $\underline{\sigma}_h^2$ and $\overline{\sigma}_h^2$, which are constructed similarly to Section~\ref{sec:variance_estimator}. For now, let us focus on the main components of Algorithm~\ref{alg:pmvi_adv_plus} and defer the construction of the variance estimators to next subsection.

Given \eqref{eqn:be_mg}, we approximate the Bellman equations $\cT_h \underline{V}_{h+1}$ and $\cT_h \overline{V}_{h+1}$ by solving the following regression problems:
\begin{equation} \label{eqn:var_regression_mg} \small
    \begin{aligned}
    &\underline{w}_h \leftarrow \argmin_{w\in \RR^d} \sum_{\tau \in \cD} \frac{[r_h^\tau + \underline{V}_{h+1}(x_{h+1}^\tau) - (\phi_h^\tau)^\top w]^2}{\underline{\sigma}_h^2(x_h^\tau, a_h^\tau, b_h^\tau)} + \lambda \|w\|_2^2, ~~\text{ and }~~ \ucT_h \underline{V}_{h+1}(o) := \phi(o)^\top \underline{w}_h,\\
&\overline{w}_h \leftarrow \argmin_{w\in \RR^d} \sum_{\tau \in \cD} \frac{[r_h^\tau + \overline{V}_{h+1}(x_{h+1}^\tau) - (\phi_h^\tau)^\top w]^2}{{\overline{\sigma}_h^2(x_h^\tau, a_h^\tau, b_h^\tau)}} + \lambda \|w\|_2^2, ~~\text{ and }~~ \ocT_h \overline{V}_{h+1}(o) := \phi(o)^\top \overline{w}_h 
    \end{aligned}
\end{equation}
where $(o)$ is short for $(x,a, b)$, and $\overline{\Pb}_hg_{h+1}, \underline{\Pb}_hg_{h+1}$ can be obtained by setting $r_h^\tau = 0$ in $\ocT_h g_{h+1}$ and $\ucT g_{h+1}$. Denoting the covariance estimators as $\underline{\Sigma}_h = \sum_{\tau \in \cD} \frac{\phi_h^\tau(\phi_h^\tau)^\top}{\underline{\sigma}_h^2(x_h^\tau,a_h^\tau,b_h^\tau)} + \lambda I_d$, and $\overline{\Sigma}_h = \sum_{\tau \in \cD} \frac{\phi_h^\tau(\phi_h^\tau)^\top}{\overline{\sigma}_h^2(x_h^\tau,a_h^\tau,b_h^\tau)} + \lambda I_d$, we can estimate the Q-functions by LCB for the max-player and UCB for the min-player, respectively:
\begin{equation}
\begin{aligned} \label{eq:def:Q}
\underline{Q}_h(o) \leftarrow \phi(o)^\top \underline{w}_h -\underline{\Gamma}_h(x,a,b), \qquad {\overline{Q}}_h(o)\leftarrow \phi(o)^\top \overline{w}_h + \overline{\Gamma}_h(x,a,b).
\end{aligned}
\end{equation}
where we remark that they are pessimistic for the max-player and the min-player, respectively. Next, we solve the matrix games with payoffs $\underline{Q}_h$ and $\overline{Q}_h$:
\begin{align}
(\widehat{\pi}_h(\cdot \mid \cdot), {\nu}'_h(\cdot \mid \cdot)) \leftarrow \mathrm{NE}(\underline{Q}_h(\cdot, \cdot, \cdot)), \qquad\text{and}\qquad
({\pi}'_h(\cdot \mid \cdot), \widehat{\nu}_h(\cdot \mid \cdot)) \leftarrow \mathrm{NE}(\overline{Q}_h(\cdot, \cdot, \cdot)).
\end{align}
The V-functions estimations $\underline{V}_h$ and $\overline{V}_h$ are then given by 
\begin{align}
\underline{V}_h = \Eb_{a \sim \widehat{\pi}_h(\cdot \mid \cdot) , b \sim {\nu}'_h(\cdot \mid \cdot)}\underline{Q}_h(\cdot, a, b) \qquad\text{and}\qquad \overline{V}_h=\Eb_{a \sim {\pi}'_h(\cdot \mid \cdot) , b \sim \widehat{\nu}_h(\cdot \mid \cdot)}\overline{Q}_h(\cdot, a, b).
\end{align}
After $H$ steps, the algorithm outputs the policy pair $(\widehat{\pi} = \{\widehat{\pi}_h\}_{h=1}^H, \widehat{\nu} = \{\widehat{\nu}_h\}_{h=1}^H)$ and value functions $(\underline{V} = \{\underline{V}_h\}_{h=1}^H, \overline{V} = \{\overline{V}_h\}_{h=1}^H)$. 

Similar to the linear MDP, a sharper uncertainty bonus leads to a smaller suboptimality gap. The techniques developed for MDPs can be separately applied to the max-player and min-player. Specifically, given $(\underline{V}_{h+1}, \overline{V}_{h+1})$, if we denote the Nash Value as $V^*_{h+1}$, we can decompose the uncertainties as follows:
$$\small
\begin{aligned} \label{eqn:mg_uncertainty_decom} 
    \cT_h \underline{V}_{h+1}(o) - \ucT_h \underline{V}_{h+1}(o) &= \cT_h V^*_{h+1}(o) - \ucT_h V^*_{h+1}(o) + \Pb_h(\underline{V}_{h+1} - V^*_{h+1})(o) - \upb_h(\underline{V}_{h+1}-V^*_{h+1})(o),\\
\cT_h \overline{V}_{h+1}(o) - \ocT_h \overline{V}_{h+1}(o) &= \underbrace{\cT_h V^*_{h+1}(o) - \ocT_h V^*_{h+1}(o)}_{\displaystyle \text{Reference Part}} + \Pb_h(\overline{V}_{h+1} - V^*_{h+1})(o) - \opb_h(\overline{V}_{h+1}-V^*_{h+1})(o).
\end{aligned}
$$
Then, similar to the single-agent MDP, for sufficiently large $K$, the uncertainty is dominated by the reference part with $V^*_{h+1}$. Moreover, when the independent variance estimators could approximate the conditional variance well, we can set $\overline{\Gamma}_h(x,a,b) = \tilde{O}\left(\sqrt{d}\right) \norm{\phi(x,a,b)}_{\overline{\Sigma}_h^{-1}}$ and $\underline{\Gamma}_h(x,a,b) = \tilde{O}\left(\sqrt{d}\right) \norm{\phi(x,a,b)}_{\underline{\Sigma}_h^{-1}}$. The full pseudo code is presented in Algorithm~\ref{alg:pmvi_adv_plus} and we have the following theoretical guarantee.

\begin{algorithm}[t]
	\caption{LinPMVI-ADV+}
	\label{alg:pmvi_adv_plus}
		\begin{algorithmic}[1]
		    \STATE \textbf{Initialize:} Input datasets $\cD$, $\cD'$, and $\beta_3$; $\overline{V}_{H+1}(\cdot) = \underline{V}_{H+1}(\cdot) = 0$.
			\STATE Construct variance estimator $\overline{\sigma}_h^2$ and $\underline{\sigma}_h^2$ as in Appendix~\ref{appendix:pmvi_adv} with $\cD'$.
			\FOR{$h=H,\dots,1$}
            \STATE $\underline{\Sigma}_h = \sum_{\tau \in \cD} \frac{\phi_h^\tau(\phi_h^\tau)^\top}{\underline{\sigma}_h^2(x_h^\tau,a_h^\tau,b_h^\tau)} + \lambda I_d$, $\overline{\Sigma}_h = \sum_{\tau \in \cD} \frac{\phi_h^\tau(\phi_h^\tau)^\top}{\overline{\sigma}_h^2(x_h^\tau,a_h^\tau,b_h^\tau)} + \lambda I_d$;
			\STATE $\underline{w}_{h} = \underline{\Sigma}_h^{-1}(\sum_{\tau \in \cD}\phi_h^\tau (r_h^\tau + \underline{V}_{h+1}(x_{h+1}^\tau)))$, $\overline{w}_{h} = \overline{\Sigma}_h^{-1}(\sum_{\tau \in \cD}\phi_h^\tau (r_h^\tau + \overline{V}_{h+1}(x_{h+1}^\tau)))$;
			\STATE $\underline{\Gamma}_{h}(\cdot, \cdot, \cdot) \leftarrow \beta_3 \norm{\phi(\cdot,\cdot,\cdot)}_{\underline{\Sigma}_h^{-1}}$,  $\overline{\Gamma}_{h}(\cdot, \cdot, \cdot) \leftarrow \beta_3 \norm{\phi(\cdot,\cdot,\cdot)}_{\overline{\Sigma}_h^{-1}}$
            \STATE $\underline{Q}_h(\cdot, \cdot, \cdot) \leftarrow \{\phi(\cdot, \cdot, \cdot)^\top \underline{w}_h - \underline{\Gamma}_h(\cdot, \cdot, \cdot)\}_{[0, H - h + 1]}$.
			\STATE $\overline{Q}_h(\cdot, \cdot, \cdot) \leftarrow \{\phi(\cdot, \cdot, \cdot)^\top \overline{w}_h + \overline{\Gamma}_h(\cdot, \cdot, \cdot)\}_{[0, H - h + 1]}$.
            \STATE $(\widehat{\pi}_h(\cdot \mid \cdot), \nu'_h(\cdot \mid \cdot)) \leftarrow \mathrm{NE}(\underline{Q}_h(\cdot, \cdot, \cdot))$; $\underline{V}_{h}(\cdot) \leftarrow \langle\underline{Q}_{h}(\cdot, \cdot, \cdot), \widehat{\pi}_{h}(\cdot \mid \cdot) \times \nu'_h(\cdot \mid \cdot)\rangle_{\mathcal{A} \times \cB}$. 
            \STATE $({\pi}'_h(\cdot \mid \cdot), \widehat{\nu}_h(\cdot \mid \cdot)) \leftarrow \mathrm{NE}(\overline{Q}_h(\cdot, \cdot, \cdot))$; $\overline{V}_{h}(\cdot) \leftarrow \langle\overline{Q}_{h}(\cdot, \cdot, \cdot), {\pi}'_{h}(\cdot \mid \cdot) \times \widehat{\nu}_h(\cdot \mid \cdot)\rangle_{\mathcal{A} \times \cB}$.
			\ENDFOR
			\STATE \textbf{Output:} $(\widehat{\pi} = \{\widehat{\pi}_h\}_{h=1}^H, \widehat{\nu} = \{\widehat{\nu}_h\}_{h=1}^H)$.
		\end{algorithmic}
\end{algorithm}

\begin{theorem}[LinPMVI-ADV+] \label{thm:pmvi_adv_plus}
Under Assumption \ref{assu:feature_coverage}, for $K\geq \tilde{\Omega}(d^2H^{6}/\kappa)$, if we set $0<\lambda<\kappa$ and $\beta_3 = \tilde{O}(\sqrt{d})$ in Algorithm~\ref{alg:pmvi_adv_plus}, then with probability at least $1-\delta$, we have 
{\small 
\begin{align*}
V_1^{*, \widehat{\nu}}(x) - V^{\widehat{\pi}, *}_1(x) \leq
\tilde{O}(\sqrt{d}) \cdot \Big( \max_{\nu}\sum_{h=1}^H \mathbb{E}_{\pi^*, \nu}\|\phi(x_h, a_h, b_h)\|_{\Sigma_{h}^{* -1}} + \max_{\pi}\sum_{h=1}^H \mathbb{E}_{\pi, \nu^*}\|\phi(x_h, a_h, b_h)\|_{\Sigma_{h}^{* -1}} \Big),
\end{align*}}
where $(\pi^*, \nu^*)$ is an NE and $\Sigma_{h}^* = \sum_{\tau \in \cD} \phi_h^\tau(\phi_h^\tau)^\top /[{\VV}_h {V}^*_{h+1}](x_h^\tau,a_h^\tau, b_h^\tau) + \lambda I_d$.
\end{theorem}
Similar with the single-agent MDPs, LinPMVI-ADV+ replace an explicit dependence on the planning horizon $H$ in PMVI \citep{zhong2022pessimistic} with an instance-dependent characterization through $\Sigma^*_h$. The instance-dependent bound of LinPMVI-ADV+ is never worse than that of PMVI, and the improvement is strict when specialized to the special case of tabular setting. 

{\bf A tighter lower bound.} To further interpret the result, we establish the following nearly matching lower bound which tightens that in \citet{zhong2022pessimistic}.
\begin{theorem}[Lower Bound for MG] \label{thm:lb}
   Fix horizon $H$, dimension $d$, probability $\delta > 0$, and sample size $K \geq \tilde{\Omega}(d^4)$. There exists a class $\cM$ of linear MGs and an offline dataset $\mathcal{D}$ with $|\mathcal{D}| = K$, such that for any policy pair $(\widehat{\pi}, \widehat{\nu})$, it holds with probability at least $1- \delta$ that
\begin{align*}
\sup_{M \in \cM} \Eb_M [V_1^{*, \widehat{\nu}}(x_1) - V_1^{\widehat{\pi}, *}(x_1)]  \ge  c\sqrt{d} \cdot \Big( \max_{\nu}\sum_{h=1}^H \mathbb{E}_{\pi^*, \nu}\|\phi_h\|_{\Sigma_{h}^{*-1}} + \max_{\pi}\sum_{h=1}^H \mathbb{E}_{\pi, \nu^*}\|\phi_h\|_{\Sigma_{h}^{*-1}} \Big),
\end{align*}
where $\Sigma_{h}^* = \sum_{\tau \in \cD} \phi_h^\tau(\phi_h^\tau)^\top /[{\VV}_h {V}^*_{h+1}](x_h^\tau,a_h^\tau, b_h^\tau) + \lambda I_d$ and $c > 0$ is a universal constant.
\end{theorem}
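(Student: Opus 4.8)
The plan is to reduce the minimax lower bound for the Markov game to two essentially decoupled single-agent lower bounds and then invoke Theorem~\ref{thm:lb:mdp} termwise. The starting point is the algebraic identity
\[
V_1^{*,\widehat{\nu}}(x) - V_1^{\widehat{\pi},*}(x) = \big(V_1^{*,\widehat{\nu}}(x) - V_1^{*}(x)\big) + \big(V_1^{*}(x) - V_1^{\widehat{\pi},*}(x)\big),
\]
where both summands are nonnegative since $V_1^{*,\widehat{\nu}} \ge V_1^{*} \ge V_1^{\widehat{\pi},*}$. The first term measures how far the learner's min-player policy $\widehat{\nu}$ is from optimal (it vanishes when $\widehat{\nu}=\nu^*$) and the second measures the suboptimality of the max-player policy $\widehat{\pi}$. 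The goal is to construct an instance class for which each one-sided gap is, by design, exactly a single-agent suboptimality on a hard linear MDP, so that Theorem~\ref{thm:lb:mdp} applies to each.

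Concretely, I would build the linear MG from two independent copies of the hard linear MDP underlying Theorem~\ref{thm:lb:mdp}, one governed by the max-player's action $a$ and the other by the min-player's action $b$. Splitting the $d$ coordinates into two blocks, I set $\phi(x,a,b)=(\phi^{\max}(x,a),\phi^{\min}(x,b))$ and take the reward $r_h(x,a,b)=r_h^{\max}(x,a)-r_h^{\min}(x,b)$, inheriting a transition structure simple enough (e.g.\ action-insensitive) that the concatenated $\phi$ still yields a valid kernel $\langle\phi,\mu_h\rangle$ with the norm bounds of a linear MG. With this separable reward the value decomposes as $V_h^{\pi,\nu}=V_h^{\max}(\pi)-V_h^{\min}(\nu)$, and a short computation gives
\[
V_1^{*,\widehat{\nu}} - V_1^{\widehat{\pi},*} = \big(V^{\max}_{\mathrm{opt}} - V^{\max}(\widehat{\pi})\big) + \big(V^{\min}_{\mathrm{opt}} - V^{\min}(\widehat{\nu})\big),
\]
so the duality gap is \emph{exactly} the sum of the two single-agent suboptimalities, with $(\pi^*,\nu^*)$ the pair of sub-MDP optimal policies. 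I would then pick the data-generating procedure to cover both blocks with the required $\kappa$ and to make the empirical matrix $\Z_h^*$ block-diagonal, so that $\norm{\phi}_{\Z_h^{*-1}}^2 = \norm{\phi^{\max}}_{\Z_h^{\max,*-1}}^2+\norm{\phi^{\min}}_{\Z_h^{\min,*-1}}^2$ and the game-level coverage term splits along the two blocks.

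Given the construction, I would apply Theorem~\ref{thm:lb:mdp} to each sub-MDP with the learner's induced marginal policy ($\widehat{\pi}$ on the max-block, $\widehat{\nu}$ on the min-block), using a union bound to pay $\delta$ twice and the hypothesis $K\ge\tilde{\Omega}(d^4)$ to guarantee the same concentration of $\Z_h^*$ used there. Because the two suboptimalities depend on disjoint parameters, $\sup_{M\in\cM}$ distributes over the sum, yielding $c\sqrt{d}\,\big(\sum_h \Eb_{\pi^*}\norm{\phi^{\max}}_{\Z_h^{\max,*-1}} + \sum_h \Eb_{\nu^*}\norm{\phi^{\min}}_{\Z_h^{\min,*-1}}\big)$. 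The last step is to upgrade these $\pi^*/\nu^*$ coverage terms to the unilateral worst-case forms $\max_\nu \Eb_{\pi^*,\nu}\norm{\phi}_{\Z_h^{*-1}}$ and $\max_\pi \Eb_{\pi,\nu^*}\norm{\phi}_{\Z_h^{*-1}}$ in the statement: since the frozen block is insensitive to the deviating opponent, $\max_\pi \Eb_{\pi,\nu^*}\norm{\phi}_{\Z_h^{*-1}} \ge \Eb_{\pi^*,\nu^*}\norm{\phi^{\max}}_{\Z_h^{\max,*-1}}$, and I would engineer the ``needle'' structure of the hard MDP so that $\pi^*$ (resp.\ $\nu^*$) is itself (up to a universal constant) the coverage-maximizing policy, giving the reverse inequality.

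The main obstacle is exactly this matching of the coverage geometry: I must produce a single feature map and behavior distribution that simultaneously (i) forms a valid linear MG with $\norm{\phi}\le 1$, $\max\{\norm{\mu_h(\cS)},\norm{\theta_h}\}\le\sqrt d$ and $\lambda_{\min}(\G_h)\ge\kappa$, (ii) keeps the two blocks genuinely decoupled so the gap splits and $\Z_h^*$ stays block-diagonal, and (iii) places the poorly covered ``hard'' directions precisely on the equilibrium trajectories, so that no deviating opponent reaches a region less covered than $\pi^*$ (or $\nu^*$) already visits. Reconciling (ii)--(iii) with the $V^*$-dependent weighting inside $\Z_h^*=\sum_\tau \phi\phi^\top/[\VV_h V^*_{h+1}] + \lambda I$, which couples the variance of the equilibrium value to the feature covariance, is the delicate part and is where most of the construction effort will go.
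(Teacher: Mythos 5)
Your strategy (split the duality gap into two one-sided gaps, realize each as a single-agent suboptimality on its own block of a product construction, and invoke Theorem~\ref{thm:lb:mdp} termwise) is genuinely different from the paper's, and it is not complete: the steps you defer are precisely where the content of the theorem lies. The paper does not build two hard MDPs. It constructs one family of MGs in which the min-player is a dummy: $\phi(s,a,b)$, the rewards, and the transitions are all independent of $b$. Then $V^{*,\nu}=V^*$ and $V^{\pi,*}=V^{\pi}$ for every $\pi,\nu$, so the duality gap collapses exactly to the single-agent suboptimality; the information-theoretic side is done directly by Assouad's method on Gaussian reward observations, giving $\inf_{\pi,\nu}\max_u \Eb_u[V_u^*-V_u^{\pi,\nu}]\gtrsim d^{1.5}H/\sqrt{K}$, and the coverage side is an explicit computation showing $\norm{\phi(s,a,b)}_{(\Z_h^*)^{-1}}\lesssim d/\sqrt{K}$ \emph{uniformly over all} $(s,a,b)$, which makes both adversarial maxima $\max_\nu$ and $\max_\pi$ in the statement harmless. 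Giving the min-player its own hard block, as you do, is exactly what creates the difficulties you then have to fight.

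Concretely, three steps of your plan are unestablished. (1) $\Z_h^*$ is not block diagonal in finite samples: the off-diagonal blocks $\sum_\tau \phi^{\max}(\phi^{\min})^\top/[\VV_h V^*_{h+1}]$ do not vanish, so the splitting $\norm{\phi}^2_{\Z_h^{*-1}}=\norm{\phi^{\max}}^2_{\Z_h^{\max,*-1}}+\norm{\phi^{\min}}^2_{\Z_h^{\min,*-1}}$ requires a concentration-plus-matrix-perturbation argument you have not given, and the weighting $[\VV_h V^*_{h+1}]$ couples the two blocks through the game value, as you acknowledge. (2) Invoking Theorem~\ref{thm:lb:mdp} as a black box needs a reduction: the MG learner only observes the combined reward $r^{\max}(x,a)-r^{\min}(x,b)$ plus noise and joint trajectories, so you must fix the opposing block's parameter, argue that any MG algorithm induces an MDP algorithm by simulating the known block, and note that $\sup_M$ does not distribute over the two gaps (each gap depends on the whole $M$ through the trained policies); nonnegativity recovers the sum only up to a factor of $2$. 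These are fixable but absent. (3) Most importantly, the final upgrade to the unilateral terms $\max_\nu \sum_h\Eb_{\pi^*,\nu}\norm{\phi}_{\Z_h^{*-1}}$, which you yourself flag as the main obstacle, is the inequality that carries the theorem: what is needed is a uniform-in-opponent bound $\Eb_{\pi^*,\nu}\norm{\phi}_{\Z_h^{*-1}}\lesssim d/\sqrt{K}$ for \emph{every} $\nu$ (the analogue of the paper's bound on $\norm{(\Eb\Z_h^*/K)^{-1}}\le d^2$ followed by the pointwise bound on $\norm{\phi}_{(\Z_h^*)^{-1}}$), not the "$\pi^*$ is itself the coverage-maximizing policy" property you propose to engineer. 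Until that computation is carried out for your two-block behavior policy, the proof is incomplete; once you do carry it out, you will have reproduced the paper's coverage analysis in a strictly more complicated instance that proves the same bound.
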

As $[\VV_h V_{h+1}^*](\cdot, \cdot, \cdot) \ge 1$, it holds that $\norm{\phi(x_h,a_h,b_h)}_{\Sigma_h^{*-1}} \geq \norm{\phi(x_h,a_h,b_h)}_{\Lambda_h^{-1}}$. Therefore, Theorem~\ref{thm:lb} improves the lower bound in \citet{zhong2022pessimistic} at least by a factor of $\sqrt{d}$. Moreover, LinPMVI-ADV+ matches this lower bound up to logarithmic factor thus is nearly minimax optimal when $K$ exceeds the threshold specified in the theorem.

\subsection{LinPMVI-ADV} \label{appendix:pmvi_adv}
In this subsection, we first present the full pseudo code of PMVI \citep{zhong2022pessimistic}. We first follow the reasoning in last subsection but with naive variance $1$, and thus with the regular ridge regression. Then, the reference-advantage decomposition allows us to improve PMVI by a factor of $\sqrt{d}$ by invoking Lemma~\ref{lem:hoeffding} without a uniform concentration in the reference part. The resulting bonus is $\Gamma_h(\cdot,\cdot,\cdot) = \tilde{O}(\sqrt{d}H) \norm{\phi(\cdot,\cdot,\cdot)}_{\Lambda_h^{-1}}$. 

LinPMVI-ADV admits the following theoretical guarantee:
\begin{theorem}[LinPMVI-ADV] \label{thm:pmvi_adv}
Under Assumption~\ref{assu:feature_coverage}, for $K>\tilde{\Omega}(d^2H^2/\kappa)$, if we set $\lambda=1$ and $\beta_4 = \tilde{O}(\sqrt{d}H)$ in Algorithm~\ref{alg:pmvi_adv}, then with probability at least $1-\delta$, we have
\begin{equation}
\begin{aligned}
V_1^{*, \widehat{\nu}}(x) - V^{\widehat{\pi}, *}_1(x) \leq
\tilde{O}(\sqrt{d} H) \cdot \Big( \max_{\nu}\sum_{h=1}^H \mathbb{E}_{\pi^*, \nu}\|\phi_h\|_{\Lambda_{h}^{-1}} + \max_{\pi}\sum_{h=1}^H \mathbb{E}_{\pi, \nu^*}\|\phi_h\|_{\Lambda_{h}^{-1}} \Big),
\end{aligned}
\end{equation}
where $(\pi^*, \nu^*)$ is an NE and $\Lambda_{h} = \sum_{\tau \in \cD} \phi(x_h^\tau, a_h^\tau, b_h^\tau)\phi(x_h^\tau, a_h^\tau, b_h^\tau)^\top + \lambda I_d$.
\end{theorem}

\begin{algorithm}[t]
	\caption{PMVI (LinPMVI-ADV)}
	\label{alg:pmvi_adv}
		\begin{algorithmic}[1]
		    \STATE \textbf{Initialize:} Input dataset $\cD$, $\beta_3$; $\overline{V}_{H+1}(\cdot) = \underline{V}_{H+1}(\cdot) = 0$.
			\FOR{$h=H,\dots,1$}
			\STATE ${\Lambda}_h \leftarrow \sum_{\tau \in \cD} \phi(x_h^{\tau}, a_h^{\tau}, b_h^\tau)\phi(x_h^{\tau}, a_h^{\tau}, b_h^\tau)^\top + \lambda I$.
			\STATE $\underline{w}_h \leftarrow {\Lambda}_h^{-1} (\sum_{\tau \in \cD} \phi\left(x_h^{\tau}, a_h^{\tau}, b_h^\tau\right) \left(r_h^{\tau} + \underline{V}_{h+1}(x_{h+1}^{\tau}\right))$.
            \STATE $\overline{w}_h \leftarrow {\Lambda}_h^{-1} (\sum_{\tau \in \cD} \phi\left(x_h^{\tau}, a_h^{\tau}, b_h^\tau\right) \left(r_h^{\tau} + \overline{V}_{h+1}(x_{h+1}^{\tau}\right))$.
			\STATE ${\Gamma}_h(\cdot,\cdot, \cdot) \leftarrow \beta_3 \cdot (\phi(\cdot,\cdot, \cdot)^\top {\Lambda}_h^{-1} \phi(\cdot,\cdot, \cdot))^{1/2}$.
            \STATE $\underline{Q}_h(\cdot, \cdot, \cdot) \leftarrow \{\phi(\cdot, \cdot, \cdot)^\top \underline{w}_h - \Gamma_h(\cdot, \cdot, \cdot)\}_{[0, H - h + 1]}$.
			\STATE $\overline{Q}_h(\cdot, \cdot, \cdot) \leftarrow \{\phi(\cdot, \cdot, \cdot)^\top \overline{w}_h + \Gamma_h(\cdot, \cdot, \cdot)\}_{[0, H - h + 1]}$.
            \STATE $(\widehat{\pi}_h(\cdot \mid \cdot), \nu'_h(\cdot \mid \cdot)) \leftarrow \mathrm{NE}(\underline{Q}_h(\cdot, \cdot, \cdot))$.
            \STATE $({\pi}'_h(\cdot \mid \cdot), \widehat{\nu}_h(\cdot \mid \cdot)) \leftarrow \mathrm{NE}(\overline{Q}_h(\cdot, \cdot, \cdot))$.
			\STATE $\underline{V}_{h}(\cdot) \leftarrow \langle\underline{Q}_{h}(\cdot, \cdot, \cdot), \widehat{\pi}_{h}(\cdot \mid \cdot) \times \nu'_h(\cdot \mid \cdot)\rangle_{\mathcal{A} \times \cB}$.
            \STATE $\overline{V}_{h}(\cdot) \leftarrow \langle\overline{Q}_{h}(\cdot, \cdot, \cdot), {\pi}'_{h}(\cdot \mid \cdot) \times \widehat{\nu}_h(\cdot \mid \cdot)\rangle_{\mathcal{A} \times \cB}$.
			\ENDFOR
			\STATE \textbf{Output:} $(\widehat{\pi} = \{\widehat{\pi}_h\}_{h=1}^H, \widehat{\nu} = \{\widehat{\nu}_h\}_{h=1}^H)$. 
		\end{algorithmic}
\end{algorithm}
We now proceed to construct the variance estimator for Algorithm~\ref{alg:pmvi_adv_plus}.

\textbf{Construction of the Variance Estimators}. To begin with, we run Algorithm~\ref{alg:pmvi_adv} to construct $\{\overline{V}_h', \underline{V}_h'\}_{h=1}^H$ with an independent dataset $\cD'$. This will only incur a factor of $2$ in the final sample complexity. Similar to Lemma~\ref{lem:linear}, we can show that there exist $\beta_{h,1}(\overline{V}_{h+1}')$ and $\beta_{h,2}(\overline{V}_{h+1}')$ such that $[\Pb_h(\overline{V}_{h+1}')^2](x,a) = \dotprod{\phi(x,a), \beta_{h,2}(\overline{V}_{h+1}')}$ and $[\Pb_h\overline{V}_{h+1}'](x,a) = \dotprod{\phi(x,a), \beta_{h,1}(\overline{V}_{h+1}')}$. We approximate them via ridge regression with $\cD'$:
\begin{equation*} 
    \begin{aligned}
    \widetilde{\beta}_{h,2}&=\underset{\beta \in \mathbb{R}^{d}}{\operatorname{argmin}} \sum_{\tau \in \cD'}\left[\left\langle\phi_h^\tau, \beta\right\rangle-(\overline{V}'_{h+1})^{2}\left(x_{h+1}^{\tau}\right)\right]^{2}+\lambda\|\beta\|_{2}^{2},\\
    \widetilde{\beta}_{h,1}&=\underset{\beta \in \mathbb{R}^{d}}{\operatorname{argmin}} \sum_{\tau \in \cD'}\left[\left\langle\phi_h^\tau, \beta\right\rangle-\overline{V}'_{h+1}\left(x_{h+1}^{\tau}\right)\right]^{2}+\lambda\|\beta\|_{2}^{2}.
    \end{aligned}
\end{equation*}
Then, the variance estimator is then constructed as 
$$
\overline{\sigma}_h^2(x,a) := \max\left\{1,  \left[\phi(x,a)^\top \widetilde{\beta}_{h,2}\right]_{[0, H^2]} - \left[\phi(x,a)^\top, \widetilde{\beta}_{h,1}\right]^2_{[0,H]}  - \tilde{O}\left(\frac{{d}H^3}{\sqrt{K\kappa}}\right) \right\}.
$$
Similarly, we can construct the variance estimator $\underline{\sigma}_h^2(x,a)$ with $\underline{V}_{h+1}'$ and dataset $\cD'$. In particular, as $\overline{\sigma}_h(\cdot,\cdot)$ and $\underline{\sigma}_h(\cdot,\cdot)$ only depend on the dataset $\cD'$, they are independent of $\cD$. The variance estimation error is characterized in Lemma~\ref{lem:var_mg} and the proof of MGs is presented in Appendix~\ref{appendix:proof_mg}.

\section{Auxiliary Lemmas} \label{appendix:lemma_mdp}
In this section, we provide several useful lemmas to facilitate the proof of linear MDP. The first lemma states that if we adopt the principle of pessimism, the suboptimality bound essentially reduces to the the uncertainty estimation, i.e., construction of the bonuses.

\begin{lemma}[Regret Decomposition Lemma for MDP] \label{lem:transform} 
    Under the condition that with probability at least $1-\delta$, the functions $\Gamma_h: \cS \times \cA \to \RR$ in Algorithms~\ref{alg:pevi_adv} and \ref{alg:pevi_adv_plus} ($\Gamma_h = b_{0,h} + b_{1,h}$) satisfying 
    \begin{align*}
      |\cT_h \hv_{h+1}(x, a) - \widehat{\cT}_h \hv_{h+1}(x, a)| \le \Gamma_h(x, a), \quad \forall (x, a, h) \in \cS \times \cA \times [H],
    \end{align*}
    we have that with probability at least $1-\delta$, for Algorithm~\ref{alg:pevi_adv} and Algorithm~\ref{alg:pevi_adv_plus} that for any $x \in \cS$,
    \begin{align*}
        V_1^*(x) - V_1^{\widehat{\pi}}(x) \le V_1^{*}(x) - \widehat{V}_1(x) \le 2 \sum_{h = 1}^H \Eb_{\pi^*}[\Gamma_{h}(x_h, a_h) \mid x_{1} = x ]. 
    \end{align*}
\end{lemma}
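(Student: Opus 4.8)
The plan is to reduce both inequalities to two pointwise facts about the \emph{model-evaluation error} $\iota_h(x,a) := (\cT_h \hv_{h+1})(x,a) - \widehat{Q}_h(x,a)$, namely that $0 \le \iota_h(x,a) \le 2\Gamma_h(x,a)$ for all $(x,a,h)$, and then to feed these into the extended value-difference identity. Since the hypothesis already posits $|(\cT_h \hv_{h+1} - \hcT_h \hv_{h+1})(x,a)| \le \Gamma_h(x,a)$ for the relevant estimator $\hcT_h$ (with $\Gamma_h = b_{0,h}+b_{1,h}$ for SPEVI+ and $\Gamma_h$ the Hoeffding bonus for SPEVI), both algorithms can be handled uniformly; the only algorithm-specific ingredient is the common clipped form $\widehat{Q}_h = \{(\hcT_h \hv_{h+1}) - \Gamma_h\}_{[0,H-h+1]}$, together with the greedy choice $\widehat{\pi}_h \in \argmax_{\pi_h}\langle \widehat{Q}_h(x,\cdot),\pi_h\rangle$ and $\widehat{V}_h(x) = \langle \widehat{Q}_h(x,\cdot), \widehat{\pi}_h(\cdot\mid x)\rangle$.

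First I would establish the two-sided bound on $\iota_h$. The lower bound $\iota_h \ge 0$ (pessimism) holds because, before clipping, $\widehat{Q}_h \le (\hcT_h \hv_{h+1}) - \Gamma_h \le \cT_h \hv_{h+1}$ by the hypothesis, and clipping to $[0,H-h+1]$ preserves this since $0 \le \cT_h \hv_{h+1} \le H-h+1$ (using $0 \le \hv_{h+1} \le H-h$, which follows from a backward induction on the clip). For the upper bound $\iota_h \le 2\Gamma_h$ I would check the three clipping regimes separately: on the unclipped region $\widehat{Q}_h = (\hcT_h \hv_{h+1})-\Gamma_h$, so $\iota_h \le 2\Gamma_h$ directly; when the lower clip is active $\widehat{Q}_h = 0$ and $\iota_h = \cT_h \hv_{h+1} \le (\hcT_h \hv_{h+1})+\Gamma_h < 2\Gamma_h$ because the pre-clip value is negative; and when the upper clip is active $\widehat{Q}_h = H-h+1 \ge \cT_h\hv_{h+1}$ forces $\iota_h \le 0$.

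Next I would invoke the extended value-difference identity: for any policy $\pi$,
$$\widehat{V}_h(x) - V_h^{\pi}(x) = \sum_{h'=h}^H \Eb_{\pi}\!\left[\langle \widehat{Q}_{h'}(x_{h'},\cdot), \widehat{\pi}_{h'}(\cdot\mid x_{h'}) - \pi_{h'}(\cdot\mid x_{h'})\rangle - \iota_{h'}(x_{h'},a_{h'}) \,\middle|\, x_h = x\right],$$
which I would prove by the one-step telescoping expansion $\widehat{V}_{h'} - V_{h'}^\pi = \langle \widehat{Q}_{h'},\widehat{\pi}_{h'}-\pi_{h'}\rangle - \iota_{h'} + \Pb_{h'}(\widehat{V}_{h'+1}-V_{h'+1}^\pi)$. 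Taking $\pi = \widehat{\pi}$ annihilates the inner-product term and leaves $\widehat{V}_h(x) - V_h^{\widehat{\pi}}(x) = -\sum_{h'\ge h}\Eb_{\widehat{\pi}}[\iota_{h'}\mid x_h=x] \le 0$, i.e. $\widehat{V}_h \le V_h^{\widehat{\pi}}$, which is exactly the first claimed inequality $V_h^*(x)-V_h^{\widehat{\pi}}(x) \le V_h^*(x)-\widehat{V}_h(x)$. Taking instead $\pi = \pi^*$, the greedy optimality of $\widehat{\pi}_{h'}$ makes $\langle \widehat{Q}_{h'}, \widehat{\pi}_{h'}-\pi^*_{h'}\rangle \ge 0$, so $V_h^*(x) - \widehat{V}_h(x) \le \sum_{h'\ge h}\Eb_{\pi^*}[\iota_{h'}\mid x_h=x] \le 2\sum_{h'\ge h}\Eb_{\pi^*}[\Gamma_{h'}\mid x_h=x]$, the second inequality.

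The main obstacle is the clipping bookkeeping: establishing $0 \le \iota_h \le 2\Gamma_h$ simultaneously requires the careful case split across the three clipping regimes together with the backward induction keeping $\hv_{h+1}\in[0,H-h]$, rather than any deep probabilistic argument (all concentration has been absorbed into the hypothesis on $\Gamma_h$). I would therefore state the extended value-difference identity as a standalone lemma and verify the signs of its one-step expansion carefully, since those signs — $\iota_{h'}\ge 0$ under $\widehat\pi$ and $\langle \widehat{Q}_{h'},\widehat{\pi}_{h'}-\pi^*_{h'}\rangle\ge 0$ under $\pi^*$ — are precisely what drive the two directions of the bound.
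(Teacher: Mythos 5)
Your proof is correct and follows essentially the same route the paper relies on: the paper's proof is simply a citation to Lemma 3.1 and Theorem 4.2 of \citet{jin2021pessimism}, and your argument---bounding the model-evaluation error $\iota_h = \cT_h \hv_{h+1} - \widehat{Q}_h$ in $[0, 2\Gamma_h]$ via the clipping case analysis, then applying the extended value-difference identity with $\pi = \widehat{\pi}$ (to get $\widehat{V}_h \le V_h^{\widehat{\pi}}$) and $\pi = \pi^*$ (to get the $2\sum_{h'}\Eb_{\pi^*}[\Gamma_{h'}]$ bound)---is precisely the content of those cited results. Your write-up is a faithful, self-contained reconstruction of that argument, with the clipping bookkeeping and sign checks handled correctly.
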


\begin{proof}
See Lemma 3.1 and Theorem 4.2 in \citet{jin2021pessimism} for a detailed proof.
\end{proof}

\begin{lemma}[Decomposition] \label{lem:decomposition_be} For a function $f_{h+1}$, suppose that $\cT_h f_{h+1} (\cdot,\cdot) = \phi(\cdot,\cdot)^\top w_h$. With $\widehat{w}_h =  \Sigma_{h}^{-1}\Big(\sum_{\tau \in \cD} \frac{\phi\left(x_{h}^{\tau}, a_{h}^{\tau}\right) \cdot\left(r_{h}^{\tau}+f_{h+1}\left(x_{h+1}^{\tau}\right)\right)}{\widehat{\sigma}^2_h(x_h^\tau, a_h^\tau)} \Big)$ where $\widehat{\sigma}^2_h(x_h^\tau, a_h^\tau)$ can be either $1$ (for regular ridge regression) or the estimated variance (for variance-weighted ridge regression) and $\Sigma_h = \sum_{\tau \in \cD} \frac{\phi\left(x_{h}^{\tau}, a_{h}^{\tau}\right) \phi\left(x_{h}^{\tau}, a_{h}^{\tau}\right)^\top}{\widehat{\sigma}^2_h(x_h^\tau, a_h^\tau)} + \lambda I_d$. Then, we have the following decomposition:
\begin{equation}
\begin{aligned}
&\left(\mathcal{T}_{h} f_{h+1} - \widehat{\mathcal{T}}_{h} f_{h+1}\right)(x, a) = \phi(x,a)^\top w_h - \phi(x,a)^\top \widehat{w}_h\\
&\qquad \leq \lambda \norm{w_h}_{\Sigma^{-1}_h} \norm{\phi(x,a)}_{\Sigma^{-1}_h} + \big\|\sum_{\tau \in \cD} \frac{\phi\left(x_{h}^{\tau}, a_{h}^{\tau}\right)}{\widehat{\sigma}_h(x_h^\tau,a_h^\tau)} \xi_h^\tau(f_{h+1})\big\|_{\Sigma_h^{-1}} \norm{\phi(x, a)}_{\Sigma_{h}^{-1}},
\end{aligned}
\end{equation}
where $\xi_h^\tau(f_{h+1}) = \frac{r_h^\tau + f_{h+1}(x_{h+1}^\tau) - \cT_h f_{h+1}(x_h^\tau, a_h^\tau)}{\widehat{\sigma}_h(x_h^\tau,a_h^\tau)}$. In particular, if $|f_{h+1}|$ is bounded by $H-1$ and $\|\sum_{\tau \in \cD} \frac{\phi\left(x_{h}^{\tau}, a_{h}^{\tau}\right)}{\widehat{\sigma}_h(x_h^\tau,a_h^\tau)} \cdot \xi_h^\tau(f_{h+1})\|_{\Sigma_h^{-1}} \leq \beta$, then we can set $\lambda$ sufficiently small so that $\sqrt{\lambda d} H \leq \beta$. In this case, the second term is dominating. $\Pb_h f_{h+1}(\cdot,\cdot)$ admits similar results by setting $r_h \equiv 0$.
\end{lemma}
\begin{proof}
See Appendix~\ref{sec:proof_auxiliary} for a detailed proof.
\end{proof}

\section{Proofs of LinPEVI-ADV} \label{sec:proof_pevi_adv}

\subsection{Proof of Theorem~\ref{thm:pevi_adv}}
The proof requires a more refined induction analysis to deal with the temporal dependency. For instance, when analyzing the $h$-step, we cannot take the condition on that $\norm{\hv_{h+1} - V^*_{h+1}}$ is small, which is required to make sure that the uncertainty of the advantage function is non-dominating. This is because due to the temporal dependency, taking condition on $\hv_{h+1}$ may influence the distribution at step $h$. A carefully crafted induction analysis is employed to solve the challenge.

\begin{proof}[Proof of Theorem~\ref{thm:pevi_adv}]
We will prove the theorem by induction. For $h=H$, for a function $\norm{g_{h+1}}_\infty \leq R - 1$, we invoke Lemma~\ref{lem:decomposition_be}:
$$
\begin{aligned}
\left|\mathcal{T}_{h} g_{h+1} - \widehat{\mathcal{T}}_{h} g_{h+1}\right|(x, a) &\leq \underbrace{\sqrt{\lambda d}R \norm{\phi(x,a)}_{\Sigma^{-1}_h}}_{\mathrm{(a)}} + \underbrace{\norm{\sum_{\tau \in \cD} \phi\left(x_{h}^{\tau}, a_{h}^{\tau}\right) \cdot \xi_h^\tau(g_{h+1})}_{\Sigma_h^{-1}} \norm{\phi(x, a)}_{\Sigma_{h}^{-1}}}_{\mathrm{(b)}}.
\end{aligned}
$$
For the reference part with $g_{h+1} = V^*_{H+1}$, since $V^*_{H+1}$ is independent of $\cD$, we can directly apply Lemma~\ref{lem:hoeffding} with $\lambda=1$ to obtain that with probability at least $1-\delta_H = 1- \delta/H^2$,
$$
\begin{aligned}
\left|\cT_H V^*_{H+1} - \hcT_H V^*_{H+1}\right|(x,a) \leq 2\sqrt{d}H \sqrt{ \iota } \norm{\phi(x,a)}_{\Lambda_H^{-1}},
\end{aligned}
$$
where $\iota = \log(2H^2K/\delta) \geq 1$. To simplify the proof, we set $b_{0,H}(x, a) = 3 \sqrt{d}H\sqrt{\iota}\norm{\phi(x,a)}_{\Lambda_H^{-1}}$ to further capture the uncertainty of $\mathrm{(a)}$ from the advantage function. Then, we can focus on the analysis of $\mathrm{(b)}$ for the advantage function. By construction, we have $\cE_{H+1} = \{0 \leq V^*_{H+1} - \hv_{H+1} \leq  \frac{8\sqrt{d}H\cdot 0\sqrt{\iota}}{\sqrt{K\kappa}}\}$ holds with probability $1$. By Lemma~\ref{lem:hoeffding}, it suffices to set $b_{1,H}(x,a)= \frac{8d^{3/2} H^2 \iota}{\sqrt{K\kappa}}\norm{\phi(x,a)}_{\Lambda_H^{-1}}$. It follows that we can set 
$$\Gamma_H(\cdot,\cdot) = b_{0,H}(\cdot,\cdot) + b_{1,H}(\cdot,\cdot) = \left(3 \sqrt{d}H\sqrt{\iota} + \frac{8d^{3/2} H^2 \iota}{\sqrt{K\kappa}}\right)\norm{\phi(\cdot,\cdot)}_{\Lambda_H^{-1}} \leq 4 \sqrt{d}H\sqrt{\iota} \norm{\phi(\cdot, \cdot)}_{\Lambda_H^{-1}},$$
where we use $K \geq \tilde{\Omega}(d^2H^2/\kappa)$ to obtain the last inequality. If pessimism at step $H$ is achieved, we know that 
$$Q^*_H(x,a) = \cT_H V^*_{H+1}(x,a) \geq \cT_H \hv_{H+1}(x,a) \geq \widehat{Q}_H(x,a), \forall (x,a) \in \cS \times \cA,$$
where the last step is due to $|\cT_h \hv_{H+1}(x,a) - \hcT_h \hv_{H+1}(x,a)| \leq \Gamma_H(x,a)$. This implies that $V^*_H(x) \geq \hv_H(x)$ for all $x \in \cS$ and we proceed to bound the error as follows:
$$
\begin{aligned}
&V^*_H(x)-\hv_{H}(x) = \dotprod{Q^*_H(x,\cdot)- \widehat{Q}_H(x,\cdot), \pi^*_H(\cdot|x)} + \dotprod{\widehat{Q}_H(x,\cdot), \pi^*_H(\cdot|x) - \widehat{\pi}_H(\cdot|x)}\\
&\leq \dotprod{\cT_H\hv_{H+1}(x, \cdot) - \hcT_H \hv_{H+1}(x,\cdot) + \Gamma_H(x,\cdot), \pi^*_H(\cdot|x)} + \dotprod{\cT_H(V^*_{h+1} - \hv_{H+1})(x, \cdot), \pi^*_H(\cdot|x)}\\
&\leq 2 \Eb_{\pi^*}[\Gamma_H(\cdot,\cdot)|x_H=x] + 0,\\
&\leq \frac{8\sqrt{d}H\cdot 1\sqrt{\iota}}{\sqrt{K\kappa}} := R_H, \forall x \in \cS,
\end{aligned}
$$
where the last inequality uses Lemma~\ref{lem:convert}. To summarize, the event $\cE_H = \{0 \leq V^*_H(x) - \hv_H(x) \leq R_H, \forall x \in \cS\}$ holds with probability at least $1-\delta_H = 1-\frac{\delta}{H^2}$. This is the base case. Now suppose that the event $\cE_{h+1} = \{0 \leq V^*_{h+1}(x) - \hv_{h+1}(x) \leq R_{h+1} := \frac{8\sqrt{d}H(H-h)\sqrt{\iota}}{\sqrt{K\kappa}}, \forall x \in \cS\}$ holds with probability at least $1-\delta_{h+1}$. We are going to establish the result for step $h$. Clearly, we can still set $b_{0, h}(\cdot,\cdot) = 3\sqrt{d}H\sqrt{\iota}\norm{\phi(\cdot,\cdot)}_{\Lambda_h^{-1}}$. It remains to determine $b_{1,h}(\cdot,\cdot)$ for $\mathrm{(b)}$ of the advantage function and to ensure that it is non-dominating. We need to deal with the temporal dependency, which requires a more involved analysis. We first state the following lemma.
\begin{lemma}[Lemma B.2 of \citet{jin2021pessimism}] \label{lem:fixed_function} Let $f:\cS \to [0, R-1]$ be any fixed function. For any $\delta \in (0,1)$, we have 
\begin{equation*}
    \Pb\Big(\big\|\sum_{\tau \in \cD} \phi_h^\tau \cdot \xi_h^\tau(f)\big\|^2_{\Lambda_h^{-1}} \geq R^2 {(2\log(\frac{1}{\delta}) + d\log(1+\frac{K}{\lambda}))}\Big) \leq \delta.
\end{equation*}
\end{lemma}
However, as $(\hv_{h+1} - V^*_{h+1})$ is correlated to $\{(x_h^\tau, a_h^\tau, x_{h+1}^\tau)\}_{\tau \in \cD}$, we need a uniform concentration argument. In particular, we remark that we cannot take condition that $V^*_{h+1} - \hv_{h+1} \leq R_{h+1}$ directly. We consider the function class
\begin{equation} \label{eqn:func_space}
\begin{aligned}
&\mathcal{V}_{h}(D, B, \lambda)=\left\{V_{h}(x ; \theta, \beta, \Sigma): \mathcal{S} \rightarrow[0, H] \text { with }\|\theta\| \leq D, \beta \in[0, B], \Sigma \succeq \lambda \cdot I\right\}, \\
&\text { where } V_{h}(x ; \theta, \beta, \Sigma)=\max _{a \in \mathcal{A}} \left\{\phi(x, a)^{\top} \theta-\beta \cdot \sqrt{\phi(x, a)^{\top} \Sigma^{-1} \phi(x, a)}\right\}_{[0, H-h+1]}.
\end{aligned}
\end{equation}
With $V^*_{h+1} - \hv_{h+1}$ denoted as $f$, we can estimate $D$ as follows:
$$
\begin{aligned}
\norm{\widehat{w}_h(f)} &= \norm{\Lambda_h^{-1} \left(\sum_{\tau \in \cD} \phi(x_h^\tau,a_h^\tau) \cdot(r_h^\tau + f(x_{h+1}^\tau))\right)}\\
&\leq H \sum_{\tau \in \cD} \sqrt{\phi(x_h^\tau, a_h^\tau)^\top\Lambda_h^{-1/2}\Lambda_h^{-1}\Lambda_h^{-1/2}  \phi(x_h^\tau, a_h^\tau)}\\
&\leq  H\sqrt{\frac{K}{\lambda}} \sqrt{\sum_{\tau \in \cD} \phi(x_h^\tau, a_h^\tau)^\top \Lambda_h^{-1}\phi(x_h^\tau, a_h^\tau)}\\
&= H\sqrt{\frac{K}{\lambda}} \sqrt{ \tr \left(\Lambda_h^{-1} (\Lambda_h - \lambda I_d)\right)} \leq H\sqrt{\frac{Kd}{\lambda}}.
\end{aligned}
$$
It follows that 
$$f = V^*_{h+1} - \hv_{h+1} \in \cF_{h+1} := \{V^*_{h+1} - V_{h+1}: V_{h+1} \in \cV_{h+1}(D_0, B_0, \lambda) \}$$
where $D_0 = H\sqrt{\frac{Kd}{\lambda}}, \lambda = 1$, and $B_0 = 8\sqrt{d}H\sqrt{\iota}$. For any $\epsilon > 0$, we denote the $\epsilon$-cover of $\cF_{h+1}$ with respect to the supremum norm as $\cN_{h+1}(\epsilon)$ (short for $\cN_{h+1}(\epsilon;D,B,\lambda)$) and its $\epsilon$-covering number as $|\cN_{h+1}(\epsilon)|$. For each $f \in \cF_{h+1}$, we can find $f_\epsilon \in \cN_{h+1}(\epsilon)$, such that $\sup_{x \in \cS}|f(x) - f_\epsilon(x)| \leq \epsilon$. It follows that
$$
\begin{aligned}
&\norm{\sum_{\tau \in \cD} \phi_h^\tau \cdot \xi_h^\tau(f)}_{\Lambda_h^{-1}}^2 \mathbf{1}\left\{\norm{f}_\infty \leq R_{h+1}\right\}\\
&\leq  2 \norm{\sum_{\tau \in \cD} \phi_h^\tau \cdot \xi_h^\tau(f_\epsilon)}_{\Lambda_h^{-1}}^2 \mathbf{1}\left\{\norm{f_\epsilon}_\infty \leq (R_{h+1} + \epsilon)\right\} + 2\norm{\sum_{\tau \in \cD} \phi_h^\tau \cdot \left(\xi_h^\tau(f)-\xi_h^\tau(f_\epsilon)\right)}_{\Lambda_h^{-1}}^2\\
&\leq 2 \norm{\sum_{\tau \in \cD} \phi_h^\tau \cdot \xi_h^\tau(f_\epsilon)}_{\Lambda_h^{-1}}^2 \mathbf{1}\left\{\norm{f_\epsilon}_\infty \leq (R_{h+1} + \epsilon)\right\} + 2\epsilon^2 K^2 / \lambda,
\end{aligned}
$$
where the first inequality uses $\{\norm{f}_\infty \leq R_{h+1}\}$ implies $\{\norm{f_\epsilon}_\infty \leq (R_{h+1} + \epsilon)\}$ and the second inequality is because the following estimation of the second term:
$$
\begin{aligned}
2\norm{\sum_{\tau \in \cD} \phi_h^\tau \cdot \left(\xi_h^\tau(f)-\xi_h^\tau(f_\epsilon)\right)}_{\Lambda_h^{-1}}^2 &\leq 2 \epsilon^2 \sum_{\tau,\tau'=1}^K |\phi_h^\tau \Lambda_h^{-1}\phi_h^{\tau'}| \leq 2\epsilon^2 \norm{\phi_h^\tau} \|\phi_h^{\tau'}\| \norm{\Lambda_h^{-1}}_{\mathrm{op}}\\
&\leq 2\epsilon^2K^2/\lambda,
\end{aligned}
$$
where $\norm{\cdot}_{\mathrm{op}}$ denotes the operator norm and $\norm{\Lambda_h^{-1}}_{\mathrm{op}} \leq \lambda^{-1}$. With a union bound over $\cN_{h+1}(\epsilon)$ and Lemma~\ref{lem:fixed_function}, we obtain that
$$
\begin{aligned}
&\Pb\left( \sup_{f_\epsilon \in \cN_{h+1}(\epsilon)} \norm{\sum_{\tau \in \cD} \phi_h^\tau \cdot \xi_h^\tau(f_\epsilon)}_{\Lambda_h^{-1}}^2 \mathbf{1}\left\{\norm{f_\epsilon}_\infty \leq R_{h+1} + \epsilon \right\} > \right.\\
&\qquad \qquad\left.(R_{h+1}+\epsilon)^2 (2\log(\frac{H^2\cdot |\cN_{h+1}(\epsilon)|}{\delta}) + d\log(1+\frac{K}{\lambda})) \right) \leq \delta/H^2.
\end{aligned}
$$
With probability at least $1-\delta/H^2$, for all $f \in \cF_{h+1}$, we have
\begin{equation}
    \begin{aligned}
    &\norm{\sum_{\tau \in \cD} \phi_h^\tau \cdot \xi_h^\tau(f)}_{\Lambda_h^{-1}}^2 \mathbf{1} \left\{\norm{f}_\infty \leq R_{h+1}\right\} \\
    &\leq \inf_{\epsilon>0} \left\{\left(\frac{8\sqrt{d}H^2\sqrt{\iota}}{\sqrt{K\kappa}} + \epsilon\right)^2 \left(2\log(\frac{H^2\cdot |\cN_{h+1}(\epsilon)|}{\delta}) + d\log(1+\frac{K}{\lambda})\right) + 2\epsilon^2 K^2 / \lambda\right\}\\
    &\leq \left(\frac{128dH^4\iota}{K\kappa}\right) \left( 2\log(H^2/\delta) + 4d^2\log(\frac{512 K^3 \iota}{d^{3/2}H^2})\right) + \frac{2d^3H^4}{K\kappa}\\
    &\leq \left(\frac{256dH^4\iota}{K\kappa}\right) \left( 2\log(H^2/\delta) + 4d^2 \log(\frac{512 K^3 \iota}{d^{3/2}H^2})\right),
    \end{aligned}
\end{equation}
where the second inequality is because the covering number of $\cF_{h+1}$ is bounded by that of $\mathcal{V}_{h+1}(D, B, \lambda)$ and by Lemma~\ref{lem:covering_number}, we can take $\epsilon = d^{3/2}H^2/(K^{3/2}\sqrt{\kappa})$ to obtain
$$
\begin{aligned}
\log |\cN_{h+1}(\epsilon)| \leq d\log(1+\frac{4K^2 \sqrt{\kappa}}{dH}) + d^2\log(1+\frac{512 K^3 \kappa\iota}{d^{3/2}H^2}) \leq 2d^2\log(\frac{512 K^3 \iota}{d^{3/2}H^2}),
\end{aligned}
$$
where the second inequality holds when $K > \sqrt{d}H/(128\sqrt{\kappa}\iota)$. As $\iota > 1$ and $\log \iota \leq \iota$, it further holds that 
$$
\begin{aligned}
\left(\frac{256dH^4\iota}{K\kappa}\right) \left( 2\log(H^2/\delta) + 4d^2 \log(\frac{512 K^3 \iota}{d^{3/2}H^2})\right)&\leq \left(\frac{256dH^4\iota}{K\kappa}\right) \left(2 \iota + 4d^2 \left(\iota + \log(512)\iota + 3 \iota\right)\right)\\
&\leq \frac{8704 d^3H^4 \iota^2}{K \kappa},
\end{aligned}
$$
To summarize, it suffices to set $b_{1,h} = \frac{94d^{3/2}H^2\iota}{\sqrt{K\kappa}} \norm{\phi(x,a)}_{\Lambda_h^{-1}}$ and we have 
$$
\begin{aligned}
&\Pb\left(\norm{\sum_{\tau \in \cD}\phi_h^\tau \xi_h^\tau(V^*_{h+1} - \hv_{h+1})} > \frac{94d^{3/2}H^2\iota}{\sqrt{K\kappa}}\right) \\
&\leq \Pb\left(\norm{\sum_{\tau \in \cD}\phi_h^\tau \xi_h^\tau(V^*_{h+1} - \hv_{h+1})}\mathbf{1} \left\{\norm{V^*_{h+1} - \hv_{h+1}}_\infty \leq R_{h+1}\right\} > \frac{94d^{3/2}H^2\iota}{\sqrt{K\kappa}}\right) \\
&\qquad + \Pb\left(\mathbf{1} \left\{\norm{V^*_{h+1} - \hv_{h+1}}_\infty > R_{h+1}\right\}\right)\\
&\leq \delta/H^2 + \delta_{h+1} := \delta_h,
\end{aligned}
$$
where $\delta_{h+1}$ is the failure probability at step $h+1$. As $K > \tilde{\Omega}\left(d^2H^2/\kappa\right)$, we can set $$\Gamma_h(\cdot,\cdot) \leq \left(3\sqrt{d}H\sqrt{\iota} +  \frac{94d^{3/2}H^2\iota}{\sqrt{K\kappa}}\right)\norm{\phi(\cdot,\cdot)}_{\Lambda_h^{-1}} \leq 4\sqrt{d}H\sqrt{\iota} \norm{\phi(\cdot,\cdot)}_{\Lambda_h^{-1}},$$ 
With $R_h := \frac{8\sqrt{d}H(H-h+1)\sqrt{\iota}}{\sqrt{K\kappa}}$, we proceed to analyze the failure probability of $\cE_h =  \{0 \leq V^*_{h+1}(x) - \hv_{h+1}(x) \leq R_{h}, \forall x \in \cS\}$. First of all, if $|\cT_h \hv_{h+1} - \hcT_h \hv_{h+1}| \leq \Gamma_h$ and event $\cE_{h+1}$ holds, we know that 
$$Q^*_h(x,a) = \cT_h V^*_{h+1}(x,a) \geq \cT_h \hv_{h+1}(x,a) \geq \widehat{Q}_h(x,a), \forall (x,a) \in \cS \times \cA,$$
and thus $V^*_h(x) \geq \hv_h(x)$ for all $x \in \cS$. We also have
$$
\begin{aligned}
&V^*_h(x)-\hv_{h}(x) = \dotprod{Q^*_h(x,\cdot)- \widehat{Q}_h(x,\cdot), \pi^*_h(\cdot|x)} + \dotprod{\widehat{Q}_h(x,\cdot), \pi^*_h(\cdot|x) - \widehat{\pi}_h(\cdot|x)}\\
&\leq \dotprod{\cT_h\hv_{h+1}(x, \cdot) - \hcT_h \hv_{h+1}(x, \cdot) + \Gamma_h(x,\cdot), \pi^*_h(\cdot|x)} + \dotprod{\cT_h(V^*_{h+1} - \hv_{h+1})(x, \cdot), \pi^*_h(\cdot|x)}\\
&\leq 2 \Eb_{\pi^*}[\Gamma_h(\cdot,\cdot)|x_h=x] + R_{h+1},\\
&\leq \frac{8\sqrt{d}H\cdot (H-h)\sqrt{\iota}}{\sqrt{K\kappa}} := R_h, \forall x \in \cS.
\end{aligned}
$$
Therefore, the failure probability at step $h$ can be upper bounded as 
$$\Pb\left(\cE_h^c\right) \leq \Pb\left(\cE_{h+1}^c \cup \cE_{h+1} \cap \{\Gamma_h(\cdot, \cdot) \text{ does not ensures pessimism}\}\right) \leq \delta_{h+1} + \frac{\delta}{H^2} = \delta_h.$$
Therefore, we have shown that with probability at least $1-\delta_h = 1-\delta_{h+1} - \frac{\delta}{H^2}$, pessimism is achieved at step $h$ and $\cE_h$ holds with probability at least $1-\delta_h$. By induction, and a union bound over $h \in [H]$, we know that if we set $\Gamma_h = 4\sqrt{d}H\sqrt{\iota}\norm{\phi(\cdot,\cdot)}_{\Lambda_h^{-1}}$, then with probability at least $1-(\delta_H + \cdots + \delta_{1}) = 1 - \frac{\delta H(H+1)}{2H^2}> 1 - \delta$, $|\cT_h \hv_{h+1} - \hcT_h \hv_{h+1}|(x,a) \leq \Gamma_h(x,a)$ holds for all $(h, x,a) \in [H] \times \cS \times \cA$. The theorem then follows from Lemma~\ref{lem:transform}.
\end{proof}


\subsection{Proof of Theorem~\ref{thm:pevi_adv_finite}}
\begin{proof}[Proof of Theorem~\ref{thm:pevi_adv_finite}]
The proof basically follows the same arguments of that of Theorem~\ref{thm:pevi_adv} except that we can leverage the finite feature condition to derive a different bonus term for the dominating reference function. We focus on deriving the $\Gamma_h(\cdot,\cdot)$ and omit other details for simplicity.

We first elaborate Eqn.~\eqref{eqn:decom_finite} via the proof of Lemma~\ref{lem:decomposition_be} (see Section~\ref{sec:proof_auxiliary} for details): 
$$
\begin{aligned}
\left|\mathcal{T}_{h} V^*_{h+1} - \widehat{\mathcal{T}}_{h} V^*_{h+1}\right|(x, a) &\leq \underbrace{\sqrt{\lambda d}H \norm{\phi(x,a)}_{\Sigma^{-1}_h}}_{\mathrm{(a)}} + \underbrace{{{\sum_{\tau \in \cD} \dotprod{\phi(x,a), \Lambda_h^{-1}\phi_h^\tau} \xi^\tau_h(V^*_{h+1})}}}_{\mathrm{(b)}},
\end{aligned}
$$
where $\xi_h^\tau(V^*_{h+1}) = r_h^\tau + V^*_{h+1}(x_{h+1}^\tau) - \cT_h V^*_{h+1}(x_h^\tau,a_h^\tau)$. We will bound $\mathrm{(b)}$ by $\beta \norm{\phi(x,a)}_{\Lambda_h^{-1}}$ with some $\beta > 0$ and we can set $\lambda$ sufficiently small so that $\sqrt{\lambda d}H \leq \beta$. Therefore, we can focus on the analysis of $\mathrm{(b)}$. 

We denote the state-action pairs at step $h$ as $\cD_h=\{(x_h^\tau, a_h^\tau)\}_{\tau \in \cD}$. The key is that because $V^*_{h+1}$ is deterministic, conditioned on $\cD_h$, the only randomness is from $x_{h+1}^\tau$ and $\{\xi_h^\tau(V^*_{h+1})\}_{\tau\in \cD}$ are still independent and bounded random variables. In particular, for any fixed $\cD_h=D_h$ and fixed $\phi(x,a)$, by the Hoeffding's inequality, with probability at least $1-\frac{\delta}{HM}$, we have 
$$
\begin{aligned}
    \mathrm{(b)} &\leq\sqrt{\sum_{\tau \in \cD} H^2\dotprod{\phi(x,a), \Lambda_h^{-1}\phi(x_h^\tau, a_h^\tau)}^2 \log\left(\frac{2H^2M}{\delta}\right)}\\
    &= H\sqrt{\left(\norm{\phi(x,a)}^2_{\Lambda_h^{-1}} - \lambda \norm{\phi(x,a)}^2_{\Lambda_h^{-2}}\right)\log\left(\frac{2H^2M}{\delta}\right)}\\
    &\leq H\sqrt{\log\left(\frac{2H^2M}{\delta}\right)}\norm{\phi(x,a)}_{\Lambda_h^{-1}}, \qquad \forall (x,a,h) \in \cS \times \cA \times [H],
\end{aligned}
$$
where in the equality, we use 
$$
\begin{aligned}
\sum_{\tau \in \cD} \dotprod{\phi(x,a), \Lambda_h^{-1}\phi(x_h^\tau, a_h^\tau)}^2 &= \sum_{\tau \in \cD} \phi(x,a)^\top \Lambda_h^{-1}\phi(x_h^\tau, a_h^\tau) \phi(x_h^\tau, a_h^\tau)^\top \Lambda_h^{-1} \phi(x,a)\\
&= \norm{\phi(x,a)}^2_{\Lambda_h^{-1}} - \lambda \phi(x,a)^\top (\Lambda_h^{-1})^2 \phi(x,a).
\end{aligned}
$$
Then, if we denote the event $$\cE_h(x,a) := \{\sum_{\tau \in \cD} \dotprod{\phi(x,a), \Lambda_h^{-1}\phi_h^\tau} \xi^\tau_h(V^*_{h+1}) > H \sqrt{\log\left(\frac{2H^2M}{\delta}\right)}\norm{\phi(x,a)}_{\Lambda_h^{-1}}\},$$
then it follows that
$$
\Pb(\cE_h(x,a)) = \int \Pb\left(\cE_h(x,a) | \cD_h = D_h\right) d \mu(D_h) \leq \frac{\delta}{H^2M},
$$
where the last inequality holds for any fixed $D_h$. By a union bound over all $\phi(x,a)$, we know that with probability at least $1-\delta/H^2$, for all $(x,a) \in \cS \times \cA$, we have
$$
\sum_{\tau \in \cD} \dotprod{\phi(x,a), \Lambda_h^{-1}\phi_h^\tau} \xi^\tau_h(V^*_{h+1}) \leq H \sqrt{\log\left(\frac{2H^2M}{\delta}\right)}\norm{\phi(x,a)}_{\Lambda_h^{-1}}.
$$
By similar induction procedure, we can set 
$\Gamma_h(\cdot, \cdot) = O\left(H\sqrt{\log\left(\frac{2H^2M}{\delta}\right)}\right) \norm{\phi(x,a)}_{\Lambda_h^{-1}}$, where we require $K \geq \tilde{\Omega}\left(d^2 H^2/\kappa\right)$ and $\lambda = 1/d$ to make the advantage function and $\mathrm{(a)}$ non-dominating. Then, the theorem follows from Lemma~\ref{lem:transform}.
\end{proof}

\section{Proof of LinPEVI-ADV+} \label{sec:proof_pevi_adv_plus}
In this section, we present the proof of Theorem~\ref{thm:pevi_adv_plus}. 

\subsection{Analysis of the Variance Estimation Error}
First, we analyze the estimation error of the conditional variance estimator. We recall that we estimate $[\VV_h V^*_{h+1}](x,a)$ based on dataset $\cD'$ as 
$$
   \widehat{\sigma}_h^2(x,a) = \max\left\{1,  \underbrace{\left[\dotprod{\phi^\top(x,a), \widetilde{\beta}_{h,2}}\right]_{[0, H^2]} - \left[\dotprod{\phi^\top(x,a), \widetilde{\beta}_{h,1}}\right]^2_{[0,H]}}_{\BB_h(x,a)}  - \tilde{O}\left(\frac{{d}H^3}{\sqrt{K\kappa}}\right) \right\}.
$$
\begin{lemma}[Variance Estimator] \label{lem:control_variance}
Under Assumption \ref{assu:feature_coverage}, if $K \geq \tilde{\Omega}(d^2H^2/\kappa)$, then with probability at least $1-\delta$, for all $(x,a,h) \in \cS \times \cA \times [H]$, we have
\begin{equation}
     [{\VV}_h {V}^*_{h+1}](x,a) - \tilde{O}\left(\frac{dH^3}{\sqrt{K\kappa}}\right) \leq \widehat{\sigma}_h^2(x,a) \leq [{\VV}_h {V}^*_{h+1}](x,a)
\end{equation}
\end{lemma}
\begin{proof}

We first bound the difference between $\BB_h(x,a)$ and $[\var_h \hv'_{h+1}](x,a)$:
$$
\begin{aligned}
&\left| \dotprod{\phi^\top(x_h,a_h), \widetilde{\beta}_{h,2}}_{[0, H^2]} - \left[\dotprod{\phi^\top(x_h,a_h), \widetilde{\beta}_{h,1}}\right]^2_{[0,H]} - [\Pb_h (\widehat{V}'_{h+1})^2](x_h,a_h) - ([\Pb_h \widehat{V}'_{h+1}(x_h,a_h)])^2 \right|\\
\leq &{\left|\dotprod{\phi^\top(x_h,a_h), \widetilde{\beta}_{h,2}}_{[0, H^2]} -  [\Pb_h (\widehat{V}'_{h+1})^2](x_h,a_h)\right|}+ {\left[\dotprod{\phi^\top(x_h,a_h), \widetilde{\beta}_{h,1}}\right]^2_{[0,H]} - ([\Pb_h \widehat{V}'_{h+1}(x_h,a_h)])^2}\\
\leq &\underbrace{\left|\dotprod{\phi^\top(x_h,a_h), \widetilde{\beta}_{h,2}} -  [\Pb_h (\widehat{V}'_{h+1})^2](x_h,a_h)\right|}_{\mathrm{(a)}}+ 2H\underbrace{\left|\dotprod{\phi^\top(x_h,a_h), \widetilde{\beta}_{h,1}} - [\Pb_h \widehat{V}'_{h+1}(x_h,a_h)]\right|}_{\mathrm{(b)}}.
\end{aligned}
$$
We note that both $\mathrm{(a)}$ and $\mathrm{(b)}$ are analysis of the regular value-target ridge regression, with target $(\hv_{h+1}')^2$ and $(\hv_{h+1}')$, respectively. The analysis thus follows the same line of those presented in Appendix~\ref{sec:proof_pevi_adv} when we deal with the correlated advantage part, except that we invoke Lemma~\ref{lem:hoeffding} with range $H$ for $\hvp_{h+1}$ and $H^2$ for $(\hvp_{h+1})^2$. We omit the details here for simplicity and present the results directly: with probability at least $1-\delta/2$, for all $(x,a,h) \in \cS \times \cA \times [H]$,
\begin{equation}\label{eqn:var_1}
\left|\BB_h(x,a) -  [\var_h \hv'_{h+1}](x,a) \right| \leq \mathrm{(a)} + 2H \mathrm{(b)} \leq \tilde{O}\left(\frac{dH^2}{\sqrt{K\kappa}}\right).
\end{equation}
We then use Theorem~\ref{alg:pevi_adv} and Lemma~\ref{lem:convert} to show that with probability at least $1-\delta/2$, for all $(x,a,h) \in \cS \times \cA \times [H]$,
\begin{equation} \label{eqn:var_2}
\begin{aligned}
&\left\|\operatorname{Var}_h \widehat{V}'_{h+1}-\operatorname{Var}_h V_{h+1}^{*}\right\|(x, a) \\
\leq &\left\|\mathbb{P}_{h}\left((\widehat{V}'_{h+1})^{2}-(V_{h+1}^{*})^2\right)\right\|(x, a)+\left\|\left(\mathbb{P}_{h} \widehat{V}'_{h+1}\right)^{2}-\left(\mathbb{P}_{h} V_{h+1}^{*}\right)^{2}\right\|(x, a) \\
\leq & 2 H\left\|\widehat{V}'_{h+1}-V_{h+1}^{*}\right\|(x, a)+2 H\left\|\mathbb{P}_{h} \widehat{V}'_{h+1}-\mathbb{P}_{h} V_{h+1}^{*}\right\|(x, a) \leq \tilde{O}\left(\frac{\sqrt{d}H^3}{\sqrt{K\kappa}}\right).
\end{aligned}
\end{equation}
With a union bound over the estimations in Eqn.~\eqref{eqn:var_1} and Eqn.~\eqref{eqn:var_2}, we know that with probability at least $1-\delta$, the following derivations hold. First, by triangle inequality, we have
$$
\begin{aligned}
\norm{\BB_h - \operatorname{Var}_h V_{h+1}^*}(x, a) &\leq 
\norm{\BB_h - \var_h \widehat{V}'_{h+1}}(x, a) + \norm{\operatorname{Var}_h \widehat{V}'_{h+1}-\operatorname{Var}_h V_{h+1}^*}(x, a) \\
&\leq \tilde{O}\left(\frac{dH^3}{\sqrt{K\kappa}}\right),
\end{aligned}
$$
where we use Eqn.~\eqref{eqn:var_1} and Eqn.~\eqref{eqn:var_2} in the last inequality. This shows that $\BB_h(x,a) - \tilde{O}\left(\frac{dH^3}{\sqrt{K\kappa}}\right) \leq [\operatorname{Var}_h V_{h+1}^*](x,a)$ and the second inequality of the lemma follows from the fact that $\max\{1,\cdot\}$ preserves the order of two numbers. On the other hand, we note that $\max\{1,\cdot\}$ is non-expansive, meaning that $|\max\{1,a\} - \max\{1,b\}| \leq |a - b|$. Then, we have
$$
\begin{aligned}
\norm{\widehat{\sigma}_h^2 - \VV_h V^*_{h+1}}(x, a) &\leq \norm{\widehat{\sigma}_h^2 - \VV_h \widehat{V}'_{h+1}}(x, a) + \norm{[{\VV}_h \widehat{V}'_{h+1}] - [\VV_h V^*_{h+1}]}(x, a)\\
&\leq \norm{\BB_h - \tilde{O}\left(\frac{{d}H^3}{\sqrt{K\kappa}}\right) -  \var_h \hv'_{h+1}}(x, a) + \norm{\operatorname{Var}_h \widehat{V}'_{h+1}-\operatorname{Var}_h V_{h+1}^*}(x, a)\\
&\leq \tilde{O}\left(\frac{{d}H^2}{\sqrt{K\kappa}}\right) + \tilde{O}\left(\frac{{d}H^3}{\sqrt{K\kappa}}\right) + \tilde{O}\left(\frac{dH^3}{\sqrt{K\kappa}}\right)\\
&= \tilde{O}\left(\frac{dH^3}{\sqrt{K\kappa}}\right),
\end{aligned}
$$
where the third inequality follows from Eqn.~\eqref{eqn:var_1} and Eqn.~\eqref{eqn:var_2}.
\end{proof}
\subsection{Proof of Theorem~\ref{thm:pevi_adv_plus}}
The proof idea is similar to that of LinPEVI-ADV, except that we need to additionally estimate the conditional variance and apply the Bernstein-type Lemma~\ref{lem:bernstein} for the reference function. To simplify the presentation, we will focus on the uncertainty of the reference function as it is dominating, and focus on determining the threshold. To this end, we will omit the constants and logarithmic terms by $\tilde{O}\left(\cdot\right)$ throughout the proof.

\begin{proof}[Proof of Theorem~\ref{thm:pevi_adv_plus}] We still start with the following decomposition: for a function $\norm{g_{h+1}}_\infty \leq R - 1$, we invoke Lemma~\ref{lem:decomposition_be}:
$$
\begin{aligned}
&\left|\mathcal{T}_{h} g_{h+1} - \widehat{\mathcal{T}}_{h} g_{h+1}\right|(x, a)\\
&\leq \underbrace{\sqrt{\lambda d}R \norm{\phi(x,a)}_{\Sigma^{-1}_h}}_{\mathrm{(a)}} + \underbrace{\norm{\sum_{\tau \in \cD} \frac{\phi\left(x_{h}^{\tau}, a_{h}^{\tau}\right)}{\sqrt{[\bar{\VV}_h \hvp_{h+1}](x_h^\tau,a_h^\tau)}} \cdot \xi_h^\tau(g_{h+1})}_{\Sigma_h^{-1}} \norm{\phi(x, a)}_{\Sigma_{h}^{-1}}}_{\mathrm{(b)}}.
\end{aligned}
$$
Similarly, we can set $\lambda = 1/H^2$ to ensure that $\mathrm{(a)} \leq \beta_2 \norm{\phi(x,a)}_{\Sigma_h^{-1}} = \tilde{O}\left(\sqrt{d}\right) \norm{\phi(x,a)}_{\Sigma_h^{-1}}$, so we focus on the analysis of $\mathrm{(b)}$. For the reference function, as $[\bar{\VV}_h \hvp_{h+1}](x_h^\tau,a_h^\tau) \geq 1$, we know that $\xi_h^\tau(V^*_{h+1}) \leq H$. Then we consider the filtration $\cF_{\tau-1,h} = \sigma\left(\{(x_h^j,a_h^j)\}_{j=1, j \in \cD}^{\tau} \cup \{(r_h^j,x_{h+1}^j)\}_{j=1,j\in \cD}^{\tau-1} \right)$ where $\sigma(\cdot)$ denotes the $\sigma$-algebra generated by the random variables. As $[\bar{\VV}_h \hvp_{h+1}](x_h^\tau,a_h^\tau)$ is independent of $\cD$, $\xi_h^\tau(V^*_{h+1})$ is mean-zero conditioned on $\cF_{\tau-1, h}$. We proceed to estimate the conditional variance:
$$
\begin{aligned}
\DD[\xi_h^\tau(V^*_{h+1})] &= \frac{\DD[V^*_{h+1}(x_{h+1}^\tau)]}{[\bar{\VV}_h \hvp_{h+1}](x_h^\tau,a_h^\tau)} \leq \frac{[\VV_h V^*_{h+1}(x_{h+1}^\tau)]}{[\bar{\VV}_h \hvp_{h+1}](x_h^\tau,a_h^\tau)}\\
& \leq 1 + \frac{\tilde{O}\left(\frac{{d}H^3}{\sqrt{K\kappa}}\right)}{[\bar{\VV}_h \hvp_{h+1}](x_h^\tau,a_h^\tau) - \tilde{O}\left(\frac{{d}H^3}{\sqrt{K\kappa}}\right)}\\
&\leq 1 + 2\tilde{O}\left(\frac{{d}H^3}{\sqrt{K\kappa}}\right) = O(1),
\end{aligned}
$$
where in the first equality, we use the fact that $[\bar{\VV}_h \hvp_{h+1}](\cdot,\cdot)$ is independent of $\cD$, and in the last inequality, we use $K \geq \tilde{\Omega}\left(d^2H^6/\kappa\right)$ to ensure that $[\bar{\VV}_h \hvp_{h+1}](x_h^\tau,a_h^\tau) - \tilde{O}\left(\frac{{d}H^3}{\sqrt{K\kappa}}\right) \geq \frac{1}{2}$. Then, we can directly invoke Lemma~\ref{lem:bernstein} to obtain that:
$$\norm{\sum_{\tau \in \cD} \phi\left(x_{h}^{\tau}, a_{h}^{\tau}\right) \cdot \xi_h^\tau(V^*_{h+1})}_{\Sigma_h^{-1}} \leq \tilde{O}\left(\sqrt{d}\right).$$
Similar to the proof of LinPEVI-ADV, the uncertainty of the advantage function is non-dominating for sufficiently large $K$ (determined later) so we can set 
$$\Gamma_h(\cdot,\cdot) = \tilde{O}\left(\sqrt{d}\right) \norm{\phi(x,a)}_{\Sigma_h^{-1}}.$$
Moreover, by Lemma~\ref{lem:control_variance}, we have $[\bar{\VV}_h \hvp_{h+1}](x_h^\tau,a_h^\tau) \leq [\VV_h V^*_{h+1}](x_h^\tau,a_h^\tau) \leq H^2$, which implies that 
$$\small
\begin{aligned}
\left(\sum_{\tau \in \cD} \frac{\phi_h^\tau(\phi_h^\tau)^\top}{[\bar{\VV}_h \hvp_{h+1}](x_h^\tau,a_h^\tau)} + \lambda I_d\right)^{-1} \preceq \left(\sum_{\tau \in \cD} \frac{\phi_h^\tau(\phi_h^\tau)^\top}{[\VV_h V^*_{h+1}](x_h^\tau,a_h^\tau)} + \lambda I_d\right)^{-1}\preceq H^2\left(\sum_{\tau \in \cD} \phi_h^\tau(\phi_h^\tau)^\top + \lambda I_d\right)^{-1}.
\end{aligned}
$$
This implies that 
$$\norm{\phi(x,a)}_{\Sigma_h^{-1}} \leq \norm{\phi(x,a)}_{\Sigma_h^{*-1}} \leq H\norm{\phi(x,a)}_{\Lambda_h^{-1}}, \qquad \forall (x,a).$$
Following the same induction analysis procedure of the proof of Theorem~\ref{thm:pevi_adv}, we know that $\norm{\hv_{h+1} - V^*_{h+1}}_\infty \leq \tilde{O}\left(\frac{\sqrt{d}H^2}{\sqrt{K\kappa}}\right).$
Using the standard $\epsilon$-covering argument and Lemma~\ref{lem:hoeffding}, we know that we can set 
$$b_{1,h}(x, a) = \tilde{O}\left(\frac{d^{3/2}H^2}{\sqrt{K\kappa}}\right) \norm{\phi(x,a)}_{\Sigma_h^{-1}}.$$ 
To make it non-dominating, we require that $K \geq \tilde{\Omega}\left(d^2 H^4 / \kappa \right)$. Moreover, to make $\mathrm{(a)} = \sqrt{\lambda d}R \norm{\phi(x,a)}_{\Sigma^{-1}_h}$ non-dominating, we set $\lambda = 1/H^2$. Then, the theorem follows from Lemma~\ref{lem:transform}.
\end{proof}

\section{Proof of Markov Game} \label{appendix:proof_mg}

\begin{proof}[Proof of Theorem~\ref{thm:pmvi_adv_plus}]
The techniques developed for MDPs are readily extended to the MGs by decoupling the estimations into the max-player part and min-player part so we start with the following lemma, which is a counterpart of Lemma~\ref{lem:transform}.
\begin{lemma}[Decomposition Lemma for MG] \label{lem:transform_mg}
    Under the condition that the functions ${\Gamma}_h: \cS \times \cA \times \cB \to \RR$ in Algorithms \ref{alg:pmvi_adv_plus} and \ref{alg:pmvi_adv} satisfying 
    \begin{align*}
        &|\cT_h \underline{V}_{h+1}(x, a, b) - \widehat{\cT}_h \underline{V}_{h+1}(x, a, b)| \le \underline{\Gamma}_h(x, a, b), \\
        &|\cT_h \overline{V}_{h+1}(x, a, b) - \widehat{\cT}_h \overline{V}_{h+1}(x, a, b)| \le \overline{\Gamma}_h(x, a, b),
    \end{align*}
    for any $(x, a, b, h) \in \cS \times \cA \times \cB \times [H]$, then for Algorithms \ref{alg:pmvi_adv} and \ref{alg:pmvi_adv_plus}, we have 
    \begin{align*}
        &V_1^{*, \widehat{\nu}}(x) - V_1^*(x) \le \overline{V}_1(x) - V_1^*(x) \le 2 \sup_{\nu} \sum_{h = 1}^H \Eb_{\pi^*, \nu} [{\overline{\Gamma}}_h(x, a, b) \mid x_1 = x], \\
        &V_1^*(x) - V_1^{\widehat{\pi}, *}(x) \le V_1^*(x) - \underline{V}_1(x) \le 2 \sup_{\pi} \sum_{h = 1}^H \Eb_{\pi, \nu^*} [{\underline{\Gamma}}_h(x, a, b) \mid x_1 = x].
    \end{align*}
    Furthermore, we can obtain that for any $x \in \cS$,
    \begin{align*}
        V_1^{*, \widehat{\nu}}(x) - V_1^{\widehat{\pi}, *}(x) &=  V_1^{*, \widehat{\nu}}(x) - V_1^*(x) + V_1^*(x) - V_1^{\widehat{\pi}, *}(x) \\
        & \le 2 \sup_{\nu} \sum_{h = 1}^H \Eb_{\pi^*, \nu} [\overline{\Gamma}_h(x, a, b) \mid x_1 = x] + 2 \sup_{\pi} \sum_{h = 1}^H \Eb_{\pi, \nu^*} [\underline{\Gamma}_h(x, a, b) \mid x_1 = x].
    \end{align*}
\end{lemma}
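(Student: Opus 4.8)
The plan is to decompose the duality gap into the min-player's and the max-player's one-sided suboptimalities, $V_1^{*,\widehat{\nu}}(x)-V_1^*(x)$ and $V_1^*(x)-V_1^{\widehat{\pi},*}(x)$, and to bound each by a \emph{pessimism sandwich} followed by a telescoping Bellman-error decomposition; this is the two-player analogue of Lemma~\ref{lem:transform}, the extra work coming entirely from the saddle-point (rather than single $\arg\max$) structure. First I would record the two sandwich bounds: for all $h$, all $x$, and all opponent policies, $\underline{V}_h(x)\le V_h^{\widehat{\pi},\nu}(x)$ and $\overline{V}_h(x)\ge V_h^{\pi,\widehat{\nu}}(x)$. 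Minimizing the first over $\nu$ and maximizing the second over $\pi$ gives $\underline{V}_1(x)\le V_1^{\widehat{\pi},*}(x)$ and $\overline{V}_1(x)\ge V_1^{*,\widehat{\nu}}(x)$, which are exactly the first inequalities on each line.

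Both sandwich bounds follow by backward induction on $h$, starting from $\underline{V}_{H+1}=\overline{V}_{H+1}=0$. For the max-player, the assumed concentration $|\cT_h\underline{V}_{h+1}-\widehat{\cT}_h\underline{V}_{h+1}|\le\underline{\Gamma}_h$, the definition $\underline{Q}_h=\{\widehat{\cT}_h\underline{V}_{h+1}-\underline{\Gamma}_h\}_{[0,H-h+1]}$, and the monotonicity of clipping give $\underline{Q}_h\le\cT_h\underline{V}_{h+1}$; chaining with the inductive hypothesis $\underline{V}_{h+1}\le V_{h+1}^{\widehat{\pi},\nu}$ yields $\underline{Q}_h\le Q_h^{\widehat{\pi},\nu}$, and the Nash property $\underline{V}_h=\min_\nu\langle\underline{Q}_h,\widehat{\pi}_h\times\nu\rangle\le\langle\underline{Q}_h,\widehat{\pi}_h\times\nu_h\rangle$ closes the step. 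The min-player side is identical after replacing the LCB $\underline{Q}_h$ by the UCB $\overline{Q}_h$ and reversing the inequalities.

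For the second inequality in each line I would introduce the model-prediction errors $\underline{\iota}_h:=\cT_h\underline{V}_{h+1}-\underline{Q}_h$ and $\overline{\iota}_h:=\overline{Q}_h-\cT_h\overline{V}_{h+1}$; the same clipping/concentration argument gives $0\le\underline{\iota}_h\le 2\underline{\Gamma}_h$ and $0\le\overline{\iota}_h\le 2\overline{\Gamma}_h$. The key step — and the main obstacle — is to turn $V_h^*-\underline{V}_h$ into a telescoping recursion with a \emph{fixed} trajectory law, which forces me to select the reference policies through both saddle points simultaneously. Using the saddle point of $\underline{Q}_h$ to write $\underline{V}_h\ge\langle\underline{Q}_h,\pi^*_h\times\nu'_h\rangle$ and that of $Q_h^*$ to write $V_h^*\le\langle Q_h^*,\pi^*_h\times\nu'_h\rangle$, I obtain $V_h^*-\underline{V}_h\le\langle Q_h^*-\underline{Q}_h,\pi^*_h\times\nu'_h\rangle$, and since $Q_h^*-\underline{Q}_h=\underline{\iota}_h+\Pb_h(V_{h+1}^*-\underline{V}_{h+1})$ this is a one-step recursion whose trajectory is generated by the mixed pair $(\pi^*,\nu')$. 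Unrolling from $h=1$ to $H$ and using $\underline{\iota}_h\le 2\underline{\Gamma}_h$ gives a bound of the unilateral-coverage form $2\sum_h\Eb_{\pi^*,\nu'}[\underline{\Gamma}_h]\le 2\sup_\nu\sum_h\Eb_{\pi^*,\nu}[\underline{\Gamma}_h]$; the symmetric argument for $\overline{V}_h-V_h^*$ selects the pair $(\pi',\nu^*)$ and produces the companion term $2\sup_\pi\sum_h\Eb_{\pi,\nu^*}[\overline{\Gamma}_h]$.

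Finally I would add the two one-sided bounds via $V_1^{*,\widehat{\nu}}-V_1^{\widehat{\pi},*}=(V_1^{*,\widehat{\nu}}-V_1^*)+(V_1^*-V_1^{\widehat{\pi},*})$ to obtain the stated duality-gap bound as the sum of the two unilateral coverage terms (the sum is symmetric in the two terms, so the combined statement is insensitive to the exact pairing). The only delicate points are the clipping bookkeeping that yields $\iota_h\in[0,2\Gamma_h]$ and, more importantly, checking that the two saddle-point inequalities can be applied with a \emph{common} pair of step-$h$ policies so that the cross terms cancel and the recursion telescopes along a single well-defined trajectory distribution; an equivalent route is to invoke an extended value-difference identity for Markov games in place of this hand-rolled recursion.
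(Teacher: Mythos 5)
Your proposal is correct, and it follows essentially the same route as the proof the paper relies on: the paper gives no argument of its own beyond citing Appendix A of \citet{zhong2022pessimistic}, and that argument is exactly your combination of a pessimism sandwich (backward induction giving $\underline{V}_1(x) \le V_1^{\widehat{\pi},*}(x)$ and $\overline{V}_1(x) \ge V_1^{*,\widehat{\nu}}(x)$) with a saddle-point value-difference telescoping using $\underline{\iota}_h \in [0,2\underline{\Gamma}_h]$ and $\overline{\iota}_h \in [0,2\overline{\Gamma}_h]$.

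One discrepancy should be recorded, and it is to your credit rather than a gap. Your telescoping necessarily runs the max-player recursion along trajectories of the pair $(\pi^*,\nu')$ and the min-player recursion along $(\pi',\nu^*)$, so what you actually prove is
\begin{align*}
V_1^*(x)-\underline{V}_1(x) &\le 2\sup_{\nu}\sum_{h=1}^H\Eb_{\pi^*,\nu}\bigl[\underline{\Gamma}_h(x_h,a_h,b_h)\mid x_1=x\bigr], \\
\overline{V}_1(x)-V_1^*(x) &\le 2\sup_{\pi}\sum_{h=1}^H\Eb_{\pi,\nu^*}\bigl[\overline{\Gamma}_h(x_h,a_h,b_h)\mid x_1=x\bigr],
\end{align*}
i.e., the opposite pairing of bonus and trajectory law from the printed displays, which attach $\overline{\Gamma}_h$ to $\Eb_{\pi^*,\nu}$ and $\underline{\Gamma}_h$ to $\Eb_{\pi,\nu^*}$. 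Your pairing is the correct one (it is also what the cited proof establishes); the printed one-sided inequalities are false as stated. For instance, take $H=1$, a single max-player action $a$, two min-player actions with $r(x,a,b_1)=0.5 < r(x,a,b_2)=0.6$ (so $\nu^*=b_1$), and a dataset concentrated on $(a,b_1)$: the concentration hypothesis holds with $\underline{\Gamma}_1(x,a,b_1)\approx 0$ and $\underline{\Gamma}_1(x,a,b_2)\ge 0.6$, the clipped LCB gives $\underline{Q}_1(x,a,b_2)=0$ and hence $\underline{V}_1(x)=0$, so $V_1^*(x)-\underline{V}_1(x)=0.5$ while $2\sup_{\pi}\Eb_{\pi,\nu^*}[\underline{\Gamma}_1]\approx 0$. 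Your parenthetical claim that the combined bound is insensitive to the pairing is exact for Algorithm~\ref{alg:spmvi} (where $\underline{\Gamma}_h=\overline{\Gamma}_h$) and holds in effect for Algorithm~\ref{alg:somvi_adv} because Lemmas~\ref{lem:spmvi_reference} and~\ref{lem:spmvi_advantage} bound $\underline{\Gamma}_h$ and $\overline{\Gamma}_h$ by the same quantity, so Theorem~\ref{thm:spmvi_adv} is unaffected; but strictly speaking the two intermediate displays (and, when $\underline{\Gamma}_h\neq\overline{\Gamma}_h$, the ``Furthermore'' display as well) should be corrected to the pairing your argument produces.
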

\begin{proof}
See Appendix A in \citet{zhong2022pessimistic} for a detailed proof.
\end{proof}
Therefore, it suffices to determine the $\Gamma_h(\cdot,\cdot,\cdot)$ that establishes pessimism. Before continuing, we first prove Theorem~\ref{thm:pmvi_adv}, which is required in our subsequent analysis.

\begin{proof}[Proof of Theorem~\ref{thm:pmvi_adv}] We first note that Lemma~\ref{lem:decomposition_be} is constructed for (weighted) ridge regression and can be applied to linear MG by replacing $\phi(x,a) \in \RR^d$ with $\phi(x,a,b)$ accordingly. Therefore, for a function $g:\cS\times\cA\times \cB \to [0, H-1]$, we have
\begin{equation} \label{eqn:uncertainty_decom_mg}
\begin{aligned}
\left|\mathcal{T}_{h} g_{h+1} - \overline{\mathcal{T}}_{h} g_{h+1}\right|(x, a, b) \lesssim \norm{\sum_{\tau \in \cD} \frac{\phi\left(x_{h}^{\tau}, a_{h}^{\tau},b_h^\tau\right)}{\overline{\sigma}_h(x_h^\tau,a_h^\tau,b_h^\tau)} \cdot \overline{\xi}_h^\tau(g_{h+1})}_{\overline{\Sigma}_h^{-1}} \norm{\phi(x, a)}_{\overline{\Sigma}_{h}^{-1}}\\
\left|\mathcal{T}_{h} g_{h+1} - \underline{\mathcal{T}}_{h} g_{h+1}\right|(x, a, b) \lesssim \norm{\sum_{\tau \in \cD} \frac{\phi\left(x_{h}^{\tau}, a_{h}^{\tau},b_h^\tau\right)}{\underline{\sigma}_h(x_h^\tau,a_h^\tau,b_h^\tau)} \cdot \underline{\xi}_h^\tau(g_{h+1})}_{\underline{\Sigma}_h^{-1}} \norm{\phi(x, a)}_{\underline{\Sigma}_{h}^{-1}}.
\end{aligned}
\end{equation}
where $\overline{\Sigma}_h = \sum_{\tau \in \cD} \frac{\phi(x_h^\tau,a_h^\tau,b_h^\tau) \phi(x_h^\tau,a_h^\tau,b_h^\tau)^\top}{\overline{\sigma}_h^2(x_h^\tau,a_h^\tau,b_h^\tau)} + \lambda I_d$, $\overline{\xi}_h^\tau(g_{h+1}) = \frac{r_h^\tau + g_{h+1}(x_{h+1}^\tau) - \cT_h g_{h+1}(x_h^\tau,a_h^\tau,b_h^\tau)}{\overline{\sigma}_h(x_h^\tau,a_h^\tau, b_h^\tau)}$, and $\underline{\Sigma}_h, \underline{\xi}_h^\tau(g_{h+1})$ are defined similarly for $\underline{\sigma}_h(\cdot,\cdot,\cdot)$. Moreover, we again omit the $\sqrt{\lambda d}H$ as we can set $\lambda$ sufficiently small (determined later). 

For LinPMVI-ADV, we set $\overline{\sigma}_h = \underline{\sigma}_h = 1$ so $\overline{\Sigma}_h = \underline{\Sigma}_h = \Lambda_h$. By the reference-advantage decomposition for MG (Eqn.~\eqref{eqn:mg_uncertainty_decom}), it suffices to focus on the reference function with the Nash value $V^*_{h+1}$ where Lemma~\ref{lem:hoeffding} can be applied directly:
$$
\begin{aligned}
&\left| \left(\mathcal{T}_{h} \overline{V}_{h+1} - \overline{\mathcal{T}}_{h} \overline{V}_{h+1}\right)(x, a, b) \right| \lesssim \tilde{O}\left(\sqrt{d}H\right)\norm{\phi(x,a, b)}_{\Lambda_h^{-1}} = \Gamma_h(x,a,b), \\
& \left| \left(\mathcal{T}_{h} \underline{V}_{h+1} - \underline{\mathcal{T}}_{h} \underline{V}_{h+1}\right)(x, a, b) \right| \lesssim  \tilde{O}\left(\sqrt{d}H\right)\norm{\phi(x,a, b)}_{\Lambda_h^{-1}} = \Gamma_h(x,a,b).
\end{aligned}
$$ 
We follow the same induction analysis procedure of Theorem~\ref{thm:pevi_adv} to obtain that $\norm{\overline{V}_{h+1} - V^*_{h+1}}_\infty \leq \tilde{O}\left(\frac{H(H-h)}{\sqrt{K\kappa}}\right)$ and $\norm{\underline{V}_{h+1} - V^*_{h+1}}_\infty \leq \tilde{O}\left(\frac{H(H-h)}{\sqrt{K\kappa}}\right)$. By standard $\epsilon$-covering argument and Lemma~\ref{lem:hoeffding}, we can set 
$$b_{1,h}(x,a,b) = O\left(\frac{d^{3/2} H^2 \iota}{\sqrt{K\kappa}}\right) \norm{\phi(x,a,b)}_{\Lambda_h^{-1}}.$$
To make it non-dominating, we require that $K\geq \tilde{\Omega}(d^2H^2/\kappa)$. Also, to make $\sqrt{\lambda d} H \leq \tilde{O}\left(\sqrt{d}H\right)$, it suffices to set $\lambda = 1$. Now we invoke Lemma~\ref{lem:transform_mg} with $\Gamma_h = \overline{\Gamma}_h = \underline{\Gamma}_h$ to obtain that for any $x \in \cS$,
$$
\begin{aligned}
 &V_1^{*, \widehat{\nu}}(x) - V_1^{\widehat{\pi}, *}(x) \\
        & \le 2 \sup_{\nu} \sum_{h = 1}^H \Eb_{\pi^*, \nu} [\Gamma_h(x, a, b) \mid x_1 = x] + 2 \sup_{\pi} \sum_{h = 1}^H \Eb_{\pi, \nu^*} [\Gamma_h(x, a, b) \mid x_1 = x]\\
        & \le \tilde{O}(\sqrt{d} H) \cdot \Big( \max_{\nu}\sum_{h=1}^H \mathbb{E}_{\pi^*, \nu}\|\phi(x_h, a_h, b_h)\|_{\Lambda_{h}^{-1}} + \max_{\pi}\sum_{h=1}^H \mathbb{E}_{\pi, \nu^*}\|\phi(x_h, a_h, b_h)\|_{\Lambda_{h}^{-1}} \Big).
\end{aligned}
$$
\end{proof}
With Theorem~\ref{thm:pmvi_adv} in hand, similar to Lemma~\ref{lem:control_variance}, we have the following lemma to control the variance estimation error and we omit the proof for simplicity.
\begin{lemma}[Variance Estimation Error for MGs] \label{lem:var_mg}Under Assumption~\ref{assu:feature_coverage}, in Algorithm~\ref{alg:pmvi_adv_plus}, if $K \geq \tilde{\Omega}(d^2H^2/\kappa)$, then with probability at least $1-\delta$, for all $(x,a,b,h) \in \cS\times \cA \times \cB \times [H]$, we have
$$
\begin{aligned}
[\VV_h V^*_{h+1}](x,a,b) - \tilde{O}\left(\frac{dH^3}{\sqrt{K\kappa}}\right) &\leq \overline{\sigma}_h(x,a,b) \leq [\VV_h V^*_{h+1}](x,a,b)\\
[\VV_h V^*_{h+1}](x,a,b) - \tilde{O}\left(\frac{dH^3}{\sqrt{K\kappa}}\right) &\leq \underline{\sigma}_h(x,a,b) \leq [\VV_h V^*_{h+1}](x,a,b).
\end{aligned}
$$
\end{lemma}
Similar to the proof of Theorem~\ref{thm:pevi_adv_plus}, if $K \geq \tilde{\Omega}\left(d^2H^6/\kappa\right)$, the conditional variances of $\overline{\xi}_h^\tau(V^*_{h+1})$ and $\underline{\xi}_h^\tau(V^*_{h+1})$ are $O(1)$ so it suffices to set $\overline{\Gamma}_h(\cdot,\cdot,\cdot) = \tilde{O}\left(\sqrt{d}\right) \norm{\phi(\cdot,\cdot,\cdot)}_{\overline{\Sigma}_h^{-1}}$ and $\underline{\Gamma}_h(\cdot,\cdot,\cdot) = \tilde{O}\left(\sqrt{d}\right) \norm{\phi(\cdot,\cdot,\cdot)}_{\underline{\Sigma}_h^{-1}}$. Moreover, because $[\VV_h V^*_{h+1}](x,a,b) \leq H^2$, we have 
$$
\begin{aligned}
\overline{\Sigma}_h^{-1} \preceq \Sigma_h^{*-1} \preceq H^2 \Lambda_h^{-1}, \qquad \underline{\Sigma}_h^{-1} \preceq \Sigma_h^{*-1} \preceq H^2 \Lambda_h^{-1}.
\end{aligned}
$$
We can similarly establish $\norm{\overline{V}_{h+1} - V^*_{h+1}}_\infty \leq \tilde{O}\left(\frac{\sqrt{d}H^2}{\sqrt{K\kappa}}\right)$ and $\norm{\underline{V}_{h+1} - V^*_{h+1}}_\infty \leq \tilde{O}\left(\frac{\sqrt{d}H^2}{\sqrt{K\kappa}}\right)$, respectively. It suffices to set $\overline{b}_{1,h}(\cdot,\cdot,\cdot) = \tilde{O}\left(d^{3/2}H^2/\sqrt{K\kappa}\right) \norm{\phi(\cdot,\cdot,\cdot)}_{\overline{\Sigma}_h^{-1}}$ and $\underline{b}_{1,h}(\cdot,\cdot,\cdot) = \tilde{O}\left(d^{3/2}H^2/\sqrt{K\kappa}\right) \norm{\phi(\cdot,\cdot,\cdot)}_{\underline{\Sigma}_h^{-1}}$ where $K\geq \tilde{\Omega}(d^2H^4/\kappa)$ is sufficient to make them non-dominating. Moreover, we need to set $\lambda = 1/H^2$ to make $\sqrt{\lambda d H} \leq \tilde{O}\left(\sqrt{d}\right)$. The theorem then follows from Lemma~\ref{lem:transform_mg}.
\end{proof}
\section{Proof of Lower Bounds} \label{appendix:lb}

We only provide the proof of the lower bound for MGs, and the lower bound for MDP follows from the similar argument (see Remark \ref{remark:lb} for details). In particular, we remark that the $\sqrt{d}$-dependency does not contradict Theorem~\ref{thm:pevi_adv_finite} as the feature size of the constructed instances is exponentially in $d$.

\begin{proof}[Proof of Theorem \ref{thm:lb}]
   Our proof largely follows \citet{zanette2021provable,yin2022near}. We construct a family of MGs $\mathcal{M} = \{\mathcal{M}_u\}_{u \in \mathcal{U}}$, where $\mathcal{U} = \{u = (u_1, \ldots, u_H) \mid u_h \in \{-1, 1\}^{{d-2}}, \forall h \in [H]\}$. For any fixed $u \in \mathcal{U}$, the associated MDP $\mathcal{M}_u$ is defined by 
   
   \noindent{\textbf{State space.}} The state space $\mathcal{S} = \{-1, +1\}$. 
   
   \noindent{\textbf{Action space.}} The action space $\mathcal{A} = \mathcal{B} = \{-1, 0, 1\}^{{d-2}}$.
   
   \noindent{\textbf{Feature map.}} The feature map $\phi: \mathcal{S} \times \cA \times \cB \rightarrow \mathbb{R}^d$ defined by
   \begin{align*}
       \phi(1, a, b)=\left(\begin{array}{c}\frac{a}{\sqrt{2 d}} \\  \frac{1}{\sqrt{2}} \\ 0\end{array}\right) \in \mathbb{R}^{d}, \quad \phi(-1, a, b)=\left(\begin{array}{c}\frac{a}{\sqrt{2 d}}  \\ 0 \\ \frac{1}{\sqrt{2}}\end{array}\right) \in \mathbb{R}^{d}.
   \end{align*}
   
     \noindent{\textbf{Transition kernel.}} Let 
     \begin{align*}
         \mu_{h}(1)=\mu_{h}(-1)=\left(\begin{array}{c}\mathbf{0}_{d-2} \\ \frac{1}{\sqrt{2}} \\ \frac{1}{\sqrt{2}}\end{array}\right) \in \mathbb{R}^{d}.
     \end{align*}
     By the assumption that $\mathbb{P}_h(s' \mid s, a, b) = \langle \phi(s, a, b), \mu_h(s') \rangle$, we know the MDP reduces to a homogeneous Markov chain with transition matrix
     \begin{align*}
         \mathbf{P} =\left(\begin{array}{cc}\frac{1}{2} & \frac{1}{2} \\ \frac{1}{2} & \frac{1}{2}\end{array}\right) \in \mathbb{R}^{2 \times 2}.
     \end{align*}
     
     \noindent{\textbf{Reward observation.}} Let 
     \begin{align*}
         \theta_{u, h}=\left(\begin{array}{c}\zeta u_{h}  \\ \frac{1}{\sqrt{3}} \\ -\frac{1}{\sqrt{3}}\end{array}\right) \in \mathbb{R}^{d},
     \end{align*}
      where $\zeta \in [0, \frac{1}{\sqrt{3d}}]$. By the assumption that $r_{h}(s, a, b) = \langle \phi(s, a, b), \theta_h \rangle$, we have
      \begin{align*}
          r_{u, h}(s, a, b) = \frac{s}{\sqrt{6}} + \frac{\zeta}{\sqrt{2d} }\langle a, u_h \rangle 
      \end{align*}
     We further assume the reward observation follows a Gaussian distribution:
     \begin{align*}
         R_{u, h}(s, a, b) \sim \mathcal{N}\left(\frac{s}{\sqrt{6}} + \frac{\zeta}{\sqrt{2d}} \langle a, u_h \rangle, 1\right).
     \end{align*}
     
      \noindent{\textbf{Date collection process.}} Let $\{e_1, e_2, \cdots, e_{d-2}\}$ be the canonical bases of $\mathbb{R}^{d-2}$ and $\mathbf{0}_{d-2} \in \mathbb{R}^{d-2}$ be the zero vector. The behavior policy $\mu : \mathcal{S} \rightarrow \Delta_{\mathcal{A} \times \mathcal{B}}$ is defined as 
      \begin{align*}
          \mu(e_j, \mathbf{0}_{d-2} \mid s)  = \frac{1}{d}, \quad \forall j \in [d-2], \quad \mu(\mathbf{0}_{d-2}, \mathbf{0}_{d-2} \mid s) = \frac{2}{d}.
      \end{align*}
      
      
     
     By construction, a Nash equilibrium of $\mathcal{M}_u$ is $(\pi^*, \nu^*)$ satisfying $\pi_h^*(\cdot) = u_h$ and $\nu_h^*(\cdot) = u_h$. Since the reward and transition are irrelevant with the min-player's policy, we use the notation $u^{\pi} = (u_1^\pi, \ldots, u_H^\pi) = (\mathrm{sign}(\mathbb{E}_{\pi}[a_1]), \ldots, \mathrm{sign}(\mathbb{E}_{\pi}[a_H]))$. Moreover, for any vector $v$, we denote by its $i$-th element $v[i]$.
     By the proof of Lemma 9 in \citet{zanette2021provable} we know
     \begin{align} \label{eq:712}
         V_u^{*} - V_u^{\pi, \nu} 
         & \ge \frac{\zeta}{\sqrt{2d}} \sum_{h=1}^H \sum_{i = 1}^d \mathbf{1}\{u_h^\pi[i] - u_h[i]\} \notag \\
         &:= \frac{\zeta}{\sqrt{2d}}  D_H(u^\pi, u).
     \end{align}
     By Assouad's method (cf. Lemma 2.12 in \citet{Tsybakov_2009}), we have 
     \begin{align*}
         \sup_{u \in \mathcal{U}} \mathbb{E}_u [D_H(u^\pi, u)] \ge \frac{(d - 2)H}{2} \min_{u, u' \mid D_H(u, u') = 1} \inf_{\psi} [\mathbb{P}_u(\psi \neq u) + \mathbb{P}_{u'} (\psi \neq u') ],
     \end{align*}
     where $\psi$ is the test function mapping from observations to $\{u, u'\}$. Furthermore, by Theorem 2.12 in \citet{Tsybakov_2009}, we have
     \begin{align} \label{eq:713}
        \min_{u, u' \mid D_H(u, u') = 1} \inf_{\psi} [\mathbb{P}_u(\psi \neq u) + \mathbb{P}_{u'} (\psi \neq u') ] \ge 1 - \left(\frac{1}{2}\max_{u, u' \mid D_H(u, u') = 1} \mathrm{KL}(\mathcal{Q}_u \| \mathcal{Q}_{u'}) \right)^{1/2},
     \end{align}
     where $\mathcal{Q}_u$ takes form
     \begin{align*}
         \mathcal{Q}_{u}=\prod_{k=1}^{K} \xi_{1}\left(s_{1}^{k}\right) \prod_{h=1}^{H} \mu\left(a_{h}^{k}, b_h^k \mid s_{h}^{k}\right)\left[R_{u, h}\left(s_{h}^{k}, a_{h}^{k}, b_h^k\right)\right]\left(r_{h}^{k}\right) \mathbb{P}_{h}\left(s_{h+1}^{k} \mid s_{h}^{k}, a_{h}^{k}, b_h^k\right),
     \end{align*}
     where $\xi = [\frac{1}{2}, \frac{1}{2}]$ is the initial distribution.
     \begin{align} \label{eq:714}
        \mathrm{KL}(\mathcal{Q}_u \| \mathcal{Q}_{u'}) &= K \cdot \sum_{h=1}^H \mathbb{E}_{u} [\log([R_{u, h}(s_h^1, a_h^1, b_h^1)](r_h^1) / [R_{u', h}(s_h^1, a_h^1, b_h^1)](r_h^1))] \notag\\
        & = \frac{K}{d} \sum_{j = 1}^{d-2} \mathrm{KL}\left(\mathcal{N}\Big(\frac{\zeta}{\sqrt{2d}} \langle e_j, u_h \rangle, 1 \Big) \Big\| \mathcal{N} \Big(\frac{\zeta}{\sqrt{2d}} \langle e_j, u'_h \rangle, 1 \Big) \right) \notag\\
        & = \frac{K}{d} \cdot \mathrm{KL}\left(\mathcal{N}\Big(\frac{\zeta}{\sqrt{2d}}, 1 \Big) \Big\| \mathcal{N} \Big( \frac{-\zeta}{\sqrt{2d}}, 1 \Big) \right) = \frac{2K\zeta^2}{d^2},
     \end{align}
     where the third equality uses the fact that $D_H(u, u') = 1$.
     Choosing $\zeta =\Theta(d/\sqrt{K})$, \eqref{eq:712}, \eqref{eq:713}, and \eqref{eq:714} imply that 
     \begin{align} \label{eq:715}
        \inf_{{\pi}, {\nu}} \max_{u \in \mathcal{U}}  \mathbb{E}_u [V_u^{*} - V_u^{\pi, \nu}] \gtrsim \frac{d\sqrt{d}H}{\sqrt{K}}. 
     \end{align}
     Here $K \ge \Omega(d^3)$ ensures that $\zeta \le \frac{1}{\sqrt{3d}}$.
     On the other hand, we have 
     \begin{align*}
         &\mathrm{Var}[V_{h+1}^*](s, a, b)  \\
         &\qquad  = \frac{1}{2} \left[V_{h+1}^*(-1) - \frac{1}{2} \left(V_{h+1}^*(+1) + V_{h+1}^*(-1)\right) \right]^2 + \frac{1}{2} \left[V_{h+1}^*(+1) - \frac{1}{2} \left(V_{h+1}^*(+1) + V_{h+1}^*(-1)\right) \right]^2 \\
         & \qquad = \left[\frac{1}{2} \left(r_{h+1}^*(+1, u_{h+1}, u_{h+1}) - r_{h+1}^*(-1, u_{h+1}, u_{h+1})\right) \right]^2 \\
         & \qquad = \frac{1}{6},
     \end{align*}
     where the second equality follows from Bellman equation, and the  last inequality uses the facts that $r_{u,h+1}(1, a, b) - r_{u,h+1}(-1, a, b) = \frac{2}{\sqrt{6}}$. By calculation, we have 
        \begin{align*}
            \mathbb{E}\Sigma^*_h = \frac{3 K}{2}\left(\begin{array}{cc}\frac{2}{d^{2}} \mathbf{I}_{d-2} & \frac{1}{d \sqrt{d}} \mathbf{1}_{(d-2) \times 2} \\ \frac{1}{d \sqrt{d}} \mathbf{1}_{2 \times(d-2)} & \mathbf{I}_{2}\end{array}\right) \in \mathbb{R}^{d \times d}.
        \end{align*}
        By Gaussian elimination, we know $\|(\mathbb{E}\Sigma^*_h/K)^{-1}\| \le d^2$. For all $h \in [H]$, $(s, a, b)$ and $K \ge {\Omega}(d^4 \log(2Hd/\delta))$, it holds with probability $1 - \delta$ that
         \begin{align*}
            &\|\phi(s, a, b)\|^2_{(\Sigma_h^*)^{-1}} \\
            & \qquad \le 2 \cdot \|\phi(s, a, b)\|^2_{(\mathbb{E}\Sigma_h^*)^{-1}} \\
            &\qquad =\frac{4}{3 K}\left\{\frac{d}{4} a^{\top} \left\{\mathbf{I}_{d-2}+\frac{1}{d-2} \mathbf{1}_{(d-2) \times(d-2)}\right\} a -\frac{d}{2(d-2)} \mathbf{1}_{d-2}^{\top} a +\frac{d-1}{2(d-2)}\right\} \\ 
            & \qquad = \frac{4}{3 K}\left\{\frac{d}{4} \cdot a^\top a +\frac{d}{4(d-2)}\left(1-\mathbf{1}_{d-2}^{\top} a \right)^{2}+\frac{1}{4}\right\} \\
            &\qquad \le \frac{4}{3 K}\left\{\frac{d(d-2)}{4}+\frac{d}{4(d-2)}\left(1-\mathbf{1}_{d-2}^{\top} a \right)^{2}+\frac{1}{4}\right\} \\ 
            &\qquad \leq \frac{4}{3 K}\left\{\frac{d(d-2)}{4}+\frac{d(d-1)^{2}}{4(d-2)}+\frac{1}{4}\right\} \lesssim \frac{d^{2}}{K},
        \end{align*}
        where the first inequality uses Lemma \ref{lem:convert}. 
        Hence, we have 
        \begin{align} \label{eq:716}
            \max_{\nu}\sum_{h=1}^H \mathbb{E}_{\pi^*, \nu}\|\phi(s_h, a_h, b_h)\|_{(\Sigma_h^*)^{-1}} + \max_{\pi}\sum_{h=1}^H \mathbb{E}_{\pi, \nu^*}\|\phi(s_h, a_h, b_h)\|_{(\Sigma_h^*)^{-1}} \lesssim \frac{dH}{\sqrt{K}}.
        \end{align}
        for any $u \in \mathcal{U}$.
     Combining \eqref{eq:715}, \eqref{eq:716}, and the fact that $|V_u^* - V_u^{\pi, \nu}| \le |V_u^{*, \nu} - V_u^{\pi, *}| $ for any $u \in \mathcal{U}$, we have with probability at least $1 - \delta$ that
     \begin{align*}
          &\inf_{{\pi}, {\nu}} \max_{u \in \mathcal{U}}  \mathbb{E}_u [V_u^{*, \nu} - V_u^{\pi, *}] \\
          & \qquad \ge c \sqrt{d} \cdot \big(\max_{\nu}\sum_{h=1}^H \mathbb{E}_{\pi^*, \nu}\|\phi(s_h, a_h, b_h)\|_{(\Sigma_h^*)^{-1}} + \max_{\pi}\sum_{h=1}^H \mathbb{E}_{\pi, \nu^*}\|\phi(s_h, a_h, b_h)\|_{(\Sigma_h^*)^{-1}}\big),
     \end{align*}
     which concludes our proof. 
\end{proof}

\begin{remark} \label{remark:lb}
    In our construction, the min-player will not affect the rewards and transitions. So by the similar derivations of \eqref{eq:715} and \eqref{eq:716} , we can obtain $\inf_{{\pi}} \max_{u \in \mathcal{U}}  \mathbb{E}_u [V_u^{*} - V_u^{\pi}] \gtrsim \frac{d\sqrt{d}H}{\sqrt{K}}$ and $\sum_{h=1}^H \mathbb{E}_{\pi^*}\|\phi(s_h, a_h)\|_{(\Sigma_h^*)^{-1}} \lesssim \frac{dH}{\sqrt{K}}$. Hence, we can establish the lower bound for MDP as desired. 
\end{remark}

\section{Numerical Simulations}
For completeness, we adopt a similar synthetic linear MDP instance that is also used in \citet{min2021variance} and \citet{yin2022near}, and redo the experiments to verify the theoretical findings. The adopted MDP has $\cS = \{1, 2\}$, $\cA = \{0,\cdots, 99\}$, and $d=10$. For the feature, we apply binary encoding to represent $a \in \cA$ by $\mathbf{a} \in \RR^8$. For the last two bits of the feature, we define $\delta(x,a) = \mathbf{1} (x=0, a = 0)$ where $\mathbf{1}$ is the indicator function. Then, the MDP is characterized as follows.
\begin{figure*}[htb]
	\centering
	\subfigure[Horizon $H=20$.]{ \includegraphics[width=0.45\linewidth]{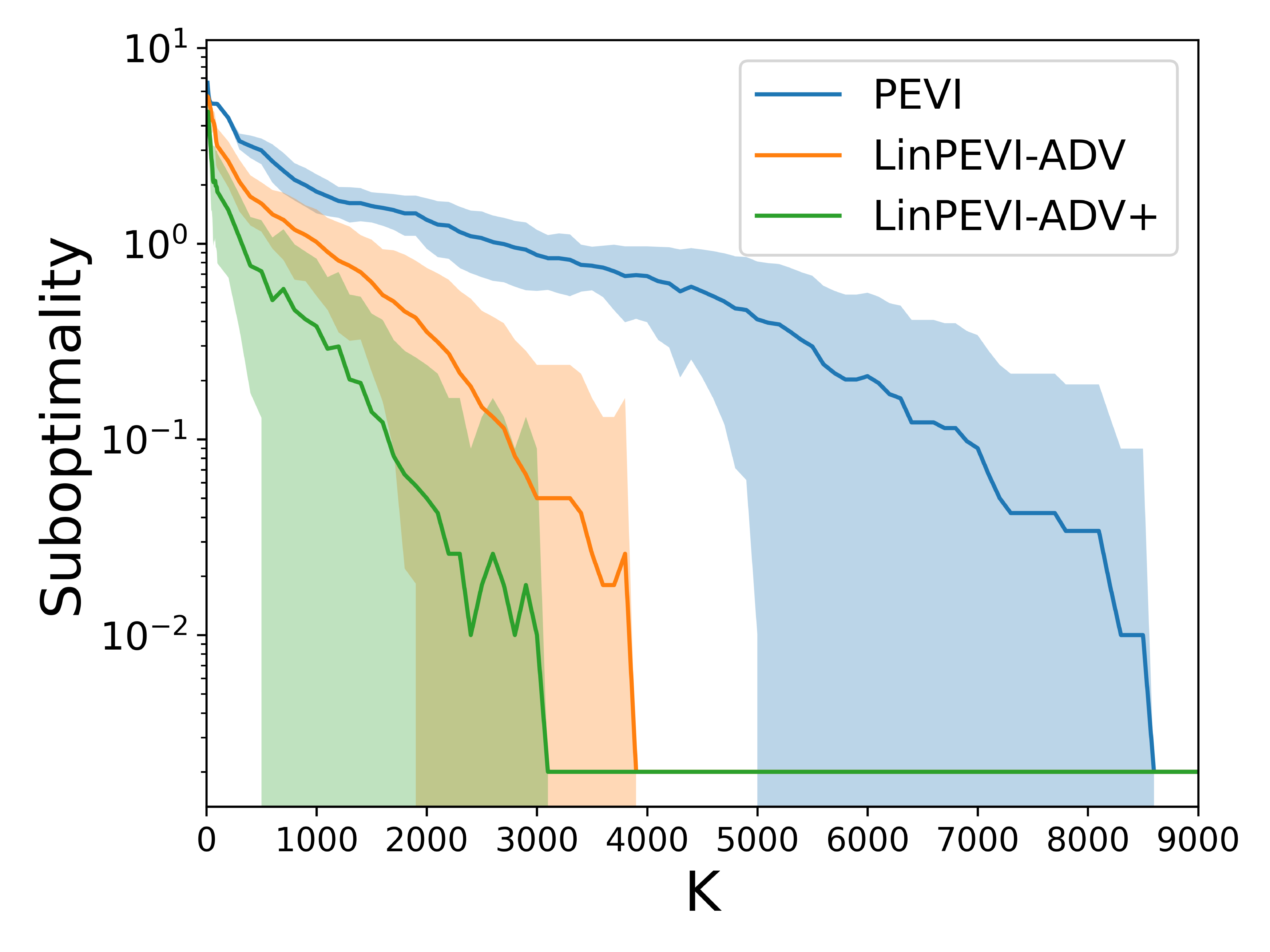}\label{fig:H20}}
	\subfigure[Horizon $H=40$.]{ \includegraphics[width=0.45\linewidth]{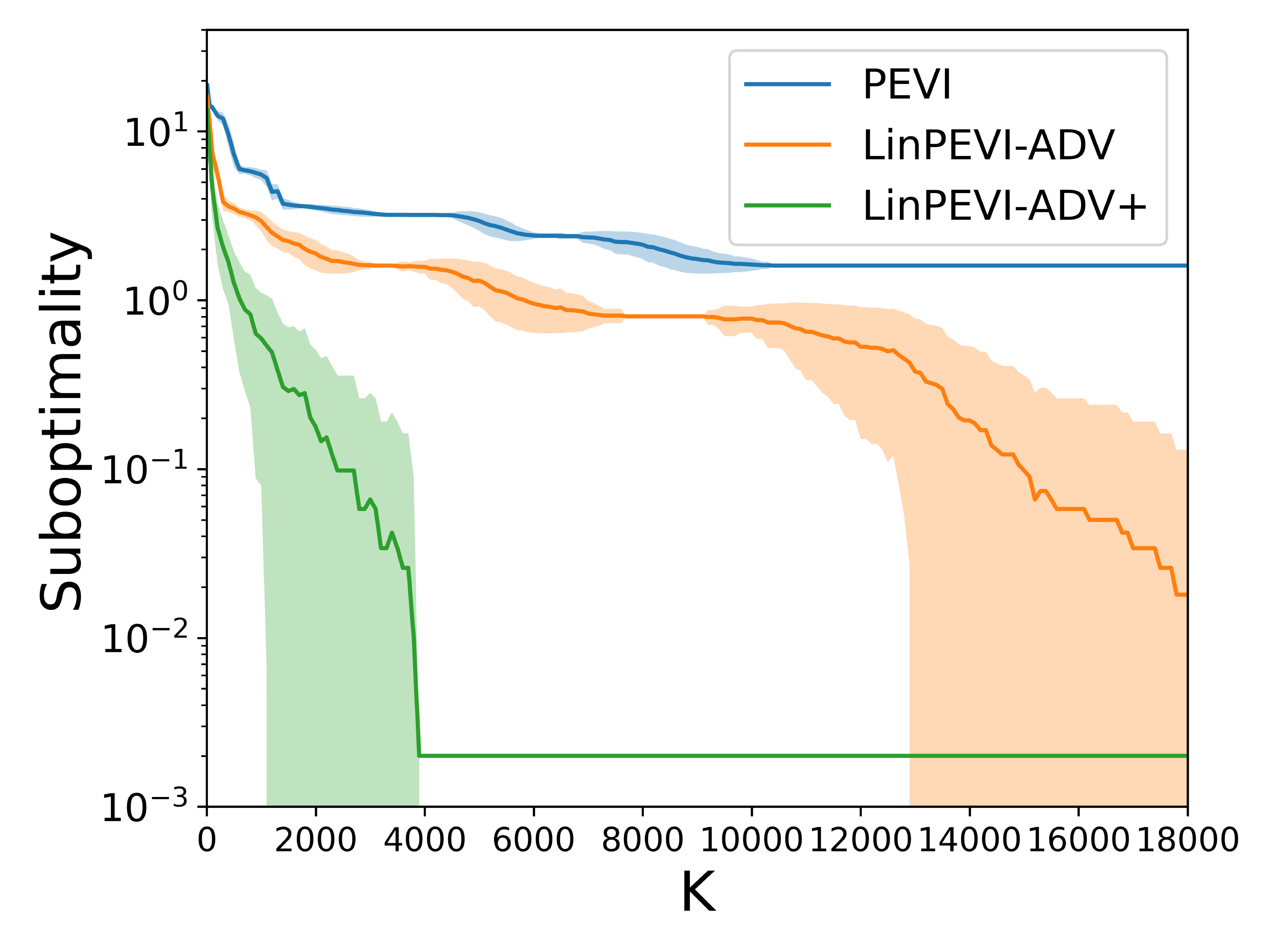}\label{fig:H40}}
	\caption{Suboptimality v.s. The number of trajectories $K$. The results are averaged aver $100$ independent trails and the mean result is plotted as solid lines. The error bar area corresponds to the standard deviation.}
\end{figure*}
\begin{itemize}
    \item Feature mapping:
    $$\phi(x,a) = (\mathbf{a}^\top, \delta(x,a), 1-\delta(x,a))^\top \in \RR^d.$$
    \item True measure $\nu_h$:
    $$\nu_h(x) = (0,\cdots,0,(1-s)\oplus \alpha_h, x\oplus \alpha_h),$$
    where $\{\alpha_h\}_{h \in [H]}$ is a sequence of integers taking values in $0$ or $1$ generated randomly and fixed, and $\oplus$ is the XOR operator. The tranision is given by $\Pb_h(x'|x,a) = \dotprod{\phi(x,a), \nu_h(x')}$.
    \item Reward function: we define
    $$\theta_h = (0, \cdots, 0, r, 1-r) \in \RR^{10}$$
    with $r = 0.9$ to obtain the mean reward $r_h(x,a) = \dotprod{\phi(x,a), \theta_h}$. Thr reward is then generated as a Bernoulli random variable.
    \item Behavior policy: always choose $a = 0$ with probability $p$, and other actions uniformly with $(1-p)/99$. We choose $p=0.5$.
    \item The initial state is chosen uniformly from $\cS$.
    \item The regularization parameter $\lambda$ is set to be $0.1$ as suggested by \citet{yin2022near} and we estimate the value of the learned policy by $1000$ i.i.d. trajectories where we also set the reward to be its mean during the this evaluation process.
\end{itemize}

Figure~\ref{fig:H20} and Figure~\ref{fig:H40} match our theoretical findings that a sharper bonus function leads to a smaller suboptimality. Therefore, LinPEVI-ADV+ achieves the best sample complexity, and PEVI performs worst. In particular, as $H$ increases, we can see that both LinPEVI-ADV and PEVI perform worse significantly, while the convergence rate of LinPEVI-ADV+ is rather stable. This demonstrates the power of variance information, which has been observed by the previous work on offline linear MDP \citep{min2021variance, yin2022near}.

\section{Proof of Auxiliary Lemmas} \label{sec:proof_auxiliary}

\begin{proof}[Proof of Lemma~\ref{lem:decomposition_be}]
We add and subtract $\phi(x,a)^\top \Sigma_{h}^{-1} (\Sigma_h - \lambda I_d)w_h$ in the second equality to obtain that
$$
\begin{aligned}
&\left(\mathcal{T}_{h} f_{h+1}\right)(x,a)-\left(\widehat{\mathcal{T}}_{h} f_{h+1}\right)(x,a)=\phi(x,a)^{\top}\left(w_{h}-\widehat{w}_{h}\right) \\
=& \phi(x,a)^{\top} w_{h}-\phi(x,a)^{\top} \Sigma_{h}^{-1}\left(\sum_{\tau \in \cD} \phi_h^\tau \cdot\frac{\left(r_{h}^{\tau}+f_{h+1}\left(x_{h+1}^{\tau}\right)\right)}{\widehat{\sigma}^2_h(x_h^\tau, a_h^\tau)}\right) \\
=& \phi(x,a)^{\top} w_{h} - \phi(x,a)^{\top} \Sigma_{h}^{-1}\left(\Sigma_h - \lambda I_d \right)w_h\\
&+\phi(x,a)^{\top} \Sigma_{h}^{-1}\left(\sum_{\tau \in \cD} \frac{\phi_h^\tau (\phi_h^\tau)^\top}{\widehat{\sigma}_h^2(x_h^\tau,a_h^\tau)}w_h \right) - \phi(x,a)^{\top} \Sigma_{h}^{-1}\left(\sum_{\tau \in \cD} \phi_h^\tau \cdot\frac{\left(r_{h}^{\tau}+f_{h+1}\left(x_{h+1}^{\tau}\right)\right)}{\widehat{\sigma}^2_h(x_h^\tau, a_h^\tau)}\right)\\
=& \lambda \phi(x,a)^\top \Sigma_h^{-1}w_h + \phi(x,a)^\top \Sigma_h^{-1}\left(\sum_{\tau \in \cD} \phi_h^\tau \cdot \frac{\left(\cT_h f_{h+1} (x_h^\tau, a_h^\tau)  - r_h^\tau - f_{h+1}(x_{h+1}^\tau)\right)}{\widehat{\sigma}_h^2(x_h^\tau,a_h^\tau)}\right)\\
\leq& \underbrace{\lambda \norm{w_h}_{\Sigma^{-1}_h} \norm{\phi(x,a)}_{\Sigma^{-1}_h}}_{\mathrm{(i)}} + \underbrace{\norm{\sum_{\tau \in \cD} \frac{\phi_h^\tau}{\widehat{\sigma}_h(x_h^\tau,a_h^\tau)} \cdot \xi_h^\tau(f_{h+1})}_{\Sigma_h^{-1}}}_{\mathrm{(ii)}} \norm{\phi(x,a)}_{\Sigma_h^{-1}}.
\end{aligned}
$$
This proves the first part of the lemma. Now suppose that $|f_{h+1}|$ is bounded by $H-1$. We have,
$$\norm{w_h} = \norm{\theta_h + \int_\cS f_{h+1}(x')\mu_h(x')dx'} \leq (1+\max_{x'}f_{h+1}(x'))\sqrt{d} \leq H\sqrt{d},$$
and 
$$\lambda \sqrt{w_h^\top \Sigma_h^{-1} w_h} \leq \norm{w_h} \sqrt{\norm{\Sigma_h^{-1}}} \leq \lambda \norm{w_h} \sqrt{\lambda^{-1}} \leq \sqrt{\lambda d}H,$$
where we use $\lambda_{\min}(\Sigma_h) \leq \lambda^{-1}$. Therefore, by setting $\lambda$ sufficiently small such that $\sqrt{\lambda d}H \leq \beta$, $\mathrm{(i)}$ is non-dominating and we can focus on bounding $\mathrm{(ii)}$.
\end{proof}

\section{Technical Lemmas} \label{appendix:tech}


\begin{lemma}[Hoeffding's inequality \citep{wainwright2019high}] \label{lem:hoeffding_classical} Let $X_1,\cdots,X_n$ be mean-zero independent random variables such that $|X_i| \leq \xi_i$ almost surely. Then, for any $t > 0$, we have 
$$\Pb\left(\frac{1}{n}\sum_{i=1}^n X_i \geq t\right) \leq \exp\left(-\frac{2n^2t^2}{\sum_{i=1}^n \xi_i^2}\right).$$
\end{lemma}

\begin{lemma}[Hoeffding-type inequality for self-normalized process \citet{abbasi2011improved}] \label{lem:hoeffding}
Let $\{\eta_t\}_{t=1}^\infty$ be a real-valued stochastic process and let $\{\cF_t\}_{t=0}^\infty$ be a filtration such that $\eta_t$ is $\cF_t$-measurable. Let $\{x_t\}_{t=1}^\infty$ be an $\RR^d$-valued stochastic process where $x_t$ is $\cF_{t-1}$ measurable and $\norm{x_t} \leq L$. Let $\Lambda_t = \lambda I_d + \sum_{s=1}^t x_sx_s^\top$. Assume that conditioned on $\cF_{t-1}$, $\eta_t$ is mean-zero and $R$-subGaussian. Then for any $\delta > 0$, with probability at least $1-\delta$, for all $t > 0$, we have
$$
\left\|\sum_{s=1}^{t} x_{s} \eta_{s}\right\|_{\Lambda_{t}^{-1}} \leq R \sqrt{ d \log \left(1+(tL)/\lambda\right) + 2\log(1/\delta)} \leq R\sqrt{d\log\left(\frac{\lambda + tL}{\lambda \delta}\right)}= \tilde{O}\left(R\sqrt{d}\right).
$$
\end{lemma}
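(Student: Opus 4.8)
The plan is to establish this self-normalized bound via the \emph{method of mixtures} (pseudo-maximization), following \citet{abbasi2011improved}. Write $S_t := \sum_{s=1}^t x_s \eta_s$, so the target is a high-probability, uniform-in-$t$ bound on $\norm{S_t}_{\Lambda_t^{-1}}$. The difficulty is that the quadratic form $\norm{S_t}_{\Lambda_t^{-1}}^2 = S_t^\top \Lambda_t^{-1} S_t$ couples $S_t$ with the random, data-dependent matrix $\Lambda_t$, so a naive direction-by-direction argument followed by a union bound over a covering of the sphere loses dimension factors. The mixture technique circumvents this by testing against a \emph{random} direction whose law exactly reconstructs $\Lambda_t^{-1}$ after integration.

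First I would fix a deterministic vector $\theta \in \RR^d$ and define the exponential process
$$M_t^\theta := \exp\!\left( \frac{\theta^\top S_t}{R} - \tfrac{1}{2}\, \theta^\top (\Lambda_t - \lambda I_d)\, \theta \right),$$
where I have used $\sum_{s=1}^t (\theta^\top x_s)^2 = \theta^\top(\Lambda_t - \lambda I_d)\theta$. Since $x_s$ is $\cF_{s-1}$-measurable, the scalar $\alpha_s := \theta^\top x_s / R$ is $\cF_{s-1}$-measurable, and the conditional $R$-subGaussianity of $\eta_s$ gives $\Eb[\exp(\alpha_s \eta_s)\mid \cF_{s-1}] \le \exp(\alpha_s^2 R^2 / 2) = \exp(\tfrac12 (\theta^\top x_s)^2)$. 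This yields $\Eb[M_t^\theta \mid \cF_{t-1}] \le M_{t-1}^\theta$, so $(M_t^\theta)_{t\ge 0}$ is a nonnegative supermartingale with $\Eb[M_0^\theta]=1$.

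Next I would mix over the direction using the Gaussian prior $h = \cN(0, \lambda^{-1} I_d)$, setting $\bar M_t := \int_{\RR^d} M_t^\theta\, h(\theta)\, d\theta$. By Tonelli, $\bar M_t$ is again a nonnegative supermartingale with $\Eb[\bar M_0] = 1$. The integral is Gaussian: completing the square in $\theta$ and evaluating it produces the closed form
$$\bar M_t = \left( \frac{\det(\lambda I_d)}{\det(\Lambda_t)} \right)^{1/2} \exp\!\left( \frac{1}{2R^2}\, \norm{S_t}_{\Lambda_t^{-1}}^2 \right).$$
Applying Ville's maximal inequality for nonnegative supermartingales gives $\Pb\big(\exists t \ge 0:\ \bar M_t \ge 1/\delta\big) \le \delta$, and on the complementary event I take logarithms to obtain, for all $t$,
$$\norm{S_t}_{\Lambda_t^{-1}}^2 \le 2R^2 \left( \tfrac{1}{2}\log\frac{\det(\Lambda_t)}{\det(\lambda I_d)} + \log\frac{1}{\delta} \right).$$

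Finally I would control the determinant ratio by a trace bound: by AM--GM on the eigenvalues, $\det(\Lambda_t) \le (\operatorname{tr}(\Lambda_t)/d)^d$, and since $\operatorname{tr}(\Lambda_t) = \lambda d + \sum_{s=1}^t \norm{x_s}^2 \le \lambda d + tL^2$, we get $\det(\Lambda_t)/\det(\lambda I_d) \le (1 + tL^2/(\lambda d))^d$. Substituting yields $\norm{S_t}_{\Lambda_t^{-1}}^2 \le R^2\big(d\log(1 + tL^2/(\lambda d)) + 2\log(1/\delta)\big)$, and taking square roots (then crudely bounding the logarithm) gives the claimed $\tilde O(R\sqrt d)$ rate; the remaining inequalities in the statement are elementary simplifications. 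The main obstacle is the mixture/pseudo-maximization step itself---constructing the per-direction supermartingale, justifying via Tonelli that the Gaussian average remains a supermartingale, and evaluating the integral exactly so that $\Lambda_t^{-1}$ reappears---together with invoking the maximal inequality to make the bound hold \emph{simultaneously} for all $t$ rather than at a single fixed time; by contrast the determinant--trace estimate is routine.
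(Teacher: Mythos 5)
Your proposal is correct and is essentially the same argument the paper relies on: the paper states this lemma without proof, citing \citet{abbasi2011improved}, whose proof is exactly the method-of-mixtures construction you give (per-direction exponential supermartingale, Gaussian mixture recovering $\Lambda_t^{-1}$, Ville's maximal inequality, then the determinant--trace bound). The only cosmetic difference is that your final logarithm reads $d\log(1+tL^2/(\lambda d))$ rather than the paper's looser $d\log(1+tL/\lambda)$, which does not affect the stated $\tilde{O}(R\sqrt{d})$ conclusion.
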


\begin{lemma}[Bernstein-type inequality for self-normalized process \citet{zhou2021nearly}] \label{lem:bernstein}
Let $\{\eta_t\}_{t=1}^\infty$ be a real-valued stochastic process and let $\{\cF_t\}_{t=0}^\infty$ be a filtration such that $\eta_t$ is $\cF_t$-measurable. Let $\{x_t\}_{t=1}^\infty$ be an $\RR^d$-valued stochastic process where $x_t$ is $\cF_{t-1}$ measurable and $\norm{x_t} \leq L$. Let $\Lambda_t = \lambda I_d + \sum_{s=1}^t x_sx_s^\top$. Assume that
$$|\eta_t| \leq R, \Eb[\eta_t|\cF_{t-1}] = 0, \Eb[\eta_t^2|\cF_{t-1}] \leq \sigma^2.$$
Then for any $\delta > 0$, with probability at least $1-\delta$, for all $t > 0$, we have
$$
\left\|\sum_{s=1}^{t} \mathbf{x}_{s} \eta_{s}\right\|_{\Lambda_{t}^{-1}} \leq 8 \sigma \sqrt{d \log \left(1+\frac{t L^{2}}{\lambda d}\right) \cdot \log \left(\frac{4 t^{2}}{\delta}\right)}+4 R \log \left(\frac{4 t^{2}}{\delta}\right) = \tilde{O}\left(\sigma\sqrt{d}\right).
$$
\end{lemma}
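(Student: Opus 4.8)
The plan is to prove this via the method of mixtures applied to a Bernstein-type exponential supermartingale, which simultaneously delivers the time-uniform ``for all $t>0$'' conclusion. Write $S_t = \sum_{s=1}^t x_s \eta_s \in \RR^d$ and $V_t = \sum_{s=1}^t x_s x_s^\top$, so that $\Lambda_t = \lambda I_d + V_t$ and the target is a bound on $\norm{S_t}_{\Lambda_t^{-1}} = \sqrt{S_t^\top \Lambda_t^{-1} S_t}$.

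First I would build a one-dimensional exponential process in a fixed direction. For a deterministic $\theta \in \RR^d$, set $z_s = \langle \theta, x_s\rangle \eta_s$, which is $\cF_s$-measurable, mean-zero given $\cF_{s-1}$, and bounded by $|z_s| \le \norm{\theta}\, L R$ using $\norm{x_s}\le L$ and $|\eta_s| \le R$. The elementary Bernstein inequality $e^{z} \le 1 + z + \tfrac{z^2/2}{1-|z|/3}$ for $|z| < 3$, together with $\Eb[z_s \mid \cF_{s-1}] = 0$ and $\Eb[z_s^2 \mid \cF_{s-1}] \le \sigma^2 \langle\theta, x_s\rangle^2$, yields
\begin{equation*}
\Eb\bigl[e^{z_s} \mid \cF_{s-1}\bigr] \le \exp\!\left(\frac{\sigma^2 \langle\theta, x_s\rangle^2 / 2}{1 - \norm{\theta} L R/3}\right),
\end{equation*}
valid whenever $\norm{\theta} L R < 3$. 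Restricting $\theta$ to the ball $\norm{\theta} \le \beta$ with $\beta L R \le 3/2$ makes the denominator at least $1/2$, so with $c := \sigma^2$ the process $M_t(\theta) = \exp(\langle\theta, S_t\rangle - c\,\theta^\top V_t\theta)$ is a nonnegative supermartingale with $\Eb[M_t(\theta)] \le 1$.

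Next I would integrate out $\theta$. Take a centered Gaussian prior $h$ of covariance $(2c\rho)^{-1} I_d$ and form the scalar process $\bar M_t = \int M_t(\theta)\, h(\theta)\, d\theta$. By Tonelli and the supermartingale property, $\bar M_t$ is itself a supermartingale with $\Eb[\bar M_t] \le 1$, and completing the square in a Gaussian integral gives $\bar M_t = \bigl(\det(I_d + \rho^{-1} V_t)\bigr)^{-1/2}\exp\bigl(\tfrac{1}{4c}\,S_t^\top (V_t + \rho I_d)^{-1} S_t\bigr)$. Ville's maximal inequality applied to $\bar M_t$ then yields, simultaneously for all $t$, that with probability $1-\delta$,
\begin{equation*}
S_t^\top (V_t + \rho I_d)^{-1} S_t \le 4c\left(\tfrac12 \log\det(I_d + \rho^{-1}V_t) + \log(1/\delta)\right).
\end{equation*}
Choosing $\rho = \lambda$ turns the left side into $\norm{S_t}_{\Lambda_t^{-1}}^2$, and the log-determinant is bounded by $d\log(1 + tL^2/(\lambda d))$ using $\norm{x_s}\le L$; taking square roots and tracking constants reproduces the variance term $8\sigma\sqrt{d\log(1+tL^2/(\lambda d))\log(4t^2/\delta)}$.

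The main obstacle is that the supermartingale property above holds only for $\theta$ in the ball $\norm{\theta} \le \beta$, whereas the Gaussian prior has unbounded support, and the self-normalizing direction $\theta^\star = \tfrac{1}{2c}(V_t+\rho I_d)^{-1}S_t$ that the mixture effectively selects need not lie in this ball when $\norm{S_t}_{\Lambda_t^{-1}}$ is large. Resolving this is exactly what produces the additive range term $4R\log(4t^2/\delta)$: one splits into the regime where the relevant $\theta$ stays inside the admissible ball (handled by the clean mixture above, giving the $\sigma\sqrt{d}$ term) and the complementary large-deviation regime, which is controlled by a direct Freedman/Bernstein tail whose linear-in-$R$ correction contributes the $R\log(4t^2/\delta)$ summand. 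Carefully stitching the two regimes, with the ball radius $\beta$ and prior scale $\rho$ tuned consistently, and using the time-uniform supermartingale control in place of a per-$t$ union bound, gives the stated inequality; keeping the constants at $8$ and $4$ is bookkeeping rather than difficulty.
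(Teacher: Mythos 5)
This lemma is imported verbatim from \citet{zhou2021nearly}; the paper gives no proof of its own, and the proof in the cited source is not a method of mixtures at all. There, one expands $\norm{\sum_{s\le t} x_s\eta_s}_{\Lambda_t^{-1}}^2$ step by step through the rank-one updates of $\Lambda_t$, applies Freedman's scalar martingale inequality to the resulting cross-term martingale $\sum_s \eta_s\, x_s^\top \Lambda_{s}^{-1} S_{s-1}$ and to the centered quadratic term $\sum_s (\eta_s^2 - \Eb[\eta_s^2\mid\cF_{s-1}])\norm{x_s}_{\Lambda_s^{-1}}^2$, controls the deterministic part with the elliptical potential lemma, closes the self-bounding recursion by induction over $t$, and takes a union bound over $t$ — which is exactly where the $\log(4t^2/\delta)$ factor comes from. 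Your first two steps are correct as far as they go: for a fixed $\theta$ with $\norm{\theta}\le\beta\asymp 1/(LR)$ your $M_t(\theta)$ is a supermartingale, and the Gaussian-integral computation and Ville argument are standard.

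The gap is your third paragraph, and it is not bookkeeping — it is the entire proof. The mixture $\bar M_t=\int M_t(\theta)h(\theta)\,d\theta$ is a supermartingale only if $M_t(\theta)$ is one for every $\theta$ in the support of $h$, and your supermartingale property holds only on a ball while the Gaussian prior has full support. This cannot be patched away, because the conclusion the unrestricted mixture would yield — $\norm{S_t}_{\Lambda_t^{-1}}\lesssim \sigma\sqrt{d\log(\cdot)+\log(1/\delta)}$ with \emph{no} range term — is false: take $\eta_t=R$ with probability $\sigma^2/R^2$ and a small compensating value otherwise; rare jumps of size $R$ force an additive term of order $R$, which is precisely why Bernstein-type bounds carry the $4R\log(4t^2/\delta)$ summand. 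Truncating the prior to the admissible ball destroys the closed form for $\bar M_t$, and the direction the mixture must concentrate on, $\theta^\star=(2\sigma^2)^{-1}(V_t+\rho I_d)^{-1}S_t$, has norm of order $\norm{S_t}_{\Lambda_t^{-1}}/(\sigma^2\sqrt{\lambda})$, which escapes the ball exactly in the regime where the bound is at stake (large deviations, or small $\sigma$ with $R$ large), so the truncated integral retains only exponentially small mass there. Your proposed repair — ``a direct Freedman/Bernstein tail'' in the complementary regime — is unspecified and does not obviously exist: Freedman controls scalar martingales in fixed directions, and converting a directional tail into a bound on the matrix-normalized norm $\norm{S_t}_{\Lambda_t^{-1}}$ requires either a covering of the sphere, which inflates the range term to order $R\sqrt{d}$ or $Rd$ times logs rather than the stated dimension-free $4R\log(4t^2/\delta)$, or the recursive decomposition that \emph{is} the proof of \citet{zhou2021nearly}. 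Moreover, the event that ``the relevant $\theta$ stays inside the ball'' is defined through $S_t$ itself, so conditioning on it destroys the supermartingale property; a correct stitching would need a stopped-process construction that the sketch never provides. As written, the proposal proves a weaker statement (the restricted-$\theta$ supermartingale and its mixture identity) and defers the actual theorem to an unproven claim.
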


\begin{lemma}[$\epsilon$-Covering Number \citep{jin2021pessimism}] \label{lem:covering_number} For all $h \in [H]$ and all $\epsilon > 0$, let $\cN(\cdot)$ be the covering number of the function space specified in \eqref{eqn:func_space}, we have 
$$\log |\cN(\epsilon;R,B,\lambda)| \leq d \cdot \log(1+4R/\epsilon) + d^2 \cdot \log(1+8d^{1/2}B^2/(\epsilon^2\lambda)).$$
\end{lemma}
\begin{lemma}[Lemma H.4 of \citet{min2021variance}] \label{lem:covar_estimation} Let $\phi: \cS \times \cA \to \RR^d$ satisfying $\norm{\phi(x,a)} \leq C$ for all $(x,a) \in \cS \times \cA$. For any $K > 0$ and $\lambda > 0$, define $\bar{\mathbb{G}}_K = \sum_{k=1}^K \phi(x_k,a_k)\phi(x_k,a_k)^\top + \lambda I_d$ where $(x_k,a_k)$'s are i.i.d. samples from some distribution $\nu$ over $\cS \times \cA$. Let $\mathbb{G} = \Eb_v [\phi(x, a)\phi(x,a)^\top]$. Then, for any $\delta \in (0,1)$, with probability at least $1-\delta$, it holds that
$$
\norm{\frac{\bar{\mathbb{G}}^{-1}_K}{K} - \Eb_{\nu}[\frac{\bar{\mathbb{G}}^{-1}_K}{K}]} \leq \frac{4\sqrt{2}C^2}{\sqrt{K}} \left(\log \frac{2d}{\delta}\right)^{1/2}.
$$
\end{lemma}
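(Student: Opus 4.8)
The statement is a standard operator-norm concentration bound for an empirical second-moment matrix; note that the object being controlled is $\bar{\mathbb{G}}_K$ itself, not its inverse (the $\bar{\mathbb{G}}_K^{-1}$ in the display is a typo, as confirmed by how the lemma is actually invoked, e.g.\ in the proof of Lemma~\ref{lem:spevi_reference}). The plan is to reduce the quantity to a sum of i.i.d.\ bounded, mean-zero, symmetric random matrices and then apply a matrix Hoeffding inequality.

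First I would cancel the deterministic ridge term. Since $\Eb_\nu[\bar{\mathbb{G}}_K/K] = \mathbb{G} + (\lambda/K) I_d$, we have
\[
\frac{\bar{\mathbb{G}}_K}{K} - \Eb_\nu\Big[\frac{\bar{\mathbb{G}}_K}{K}\Big] \;=\; \frac{1}{K}\sum_{k=1}^K\big(\phi(x_k,a_k)\phi(x_k,a_k)^\top - \mathbb{G}\big) \;=:\; \frac{1}{K}\sum_{k=1}^K X_k ,
\]
so it suffices to bound $\norm{\frac1K\sum_{k} X_k}$ in operator norm. The summands $X_k$ are i.i.d., symmetric, and mean-zero by construction, and they are bounded: by the triangle inequality $\norm{X_k} \le \norm{\phi_k\phi_k^\top} + \norm{\mathbb{G}} \le C^2 + C^2 = 2C^2$ almost surely, using $\norm{\phi_k}\le C$ for the first term and $\norm{\mathbb{G}}=\norm{\Eb_\nu[\phi\phi^\top]}\le C^2$ by Jensen for the second.

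Next I would invoke a matrix Hoeffding bound (Tropp). From $\norm{X_k}\le 2C^2$ we get the a.s.\ domination $X_k^2 \preceq 4C^4 I_d$, so the matrix variance proxy is $\sigma^2 = \norm{\sum_{k}4C^4 I_d} = 4KC^4$, and the one-sided tail reads
\[
\Pr\Big[\lambda_{\max}\Big(\textstyle\sum_k X_k\Big) \ge t\Big] \;\le\; d\,\exp\Big(-\frac{t^2}{8\sigma^2}\Big) \;=\; d\,\exp\Big(-\frac{t^2}{32 K C^4}\Big).
\]
Applying the identical bound to $-X_k$ and union-bounding the two sides controls $\norm{\sum_k X_k} = \max\{\lambda_{\max}(\sum_k X_k),\,-\lambda_{\min}(\sum_k X_k)\}$ with failure probability at most $2d\exp(-t^2/(32KC^4))$. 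Setting this equal to $\delta$ and solving yields $t = \sqrt{32 K C^4 \log(2d/\delta)} = 4\sqrt2\,C^2\sqrt{K\log(2d/\delta)}$, hence $\norm{\frac1K\sum_k X_k} \le \frac{4\sqrt2\,C^2}{\sqrt K}\sqrt{\log(2d/\delta)}$ with probability $1-\delta$, matching the claim exactly.

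The main design choice — and the only genuine subtlety — is that one must use a true matrix concentration tool rather than a scalar argument. The clean rate, with the ambient dimension $d$ entering only \emph{inside} the logarithm as $\sqrt{\log(2d/\delta)}$, arises precisely because matrix Hoeffding produces $d$ as a multiplicative prefactor on the exponential tail, which becomes logarithmic after inversion. A naive alternative through an $\varepsilon$-net of the unit sphere plus scalar Hoeffding on the forms $u^\top X_k u$ would either inflate the constant or, with too coarse a net, corrupt the logarithmic factor; thus the bookkeeping of the variance proxy $\sigma^2 = 4KC^4$ and the two-sided union bound is all that is needed to land on the advertised constant $4\sqrt2$.
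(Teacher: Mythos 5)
You have it right: after canceling the ridge term, applying matrix Hoeffding to the i.i.d.\ centered summands $X_k=\phi_k\phi_k^\top-\mathbb{G}$ with $\|X_k\|\le 2C^2$ and variance proxy $\sigma^2=4KC^4$ yields exactly the advertised constant $4\sqrt{2}$, and this is essentially the same argument as the paper's source (the paper gives no proof of its own, merely citing Lemma H.4 of \citet{min2021variance}, whose proof is precisely this matrix-Hoeffding computation). Your diagnosis of $\bar{\mathbb{G}}_K^{-1}$ in the display as a typo for $\bar{\mathbb{G}}_K$ is also correct, as the lemma is invoked in the paper on the non-inverted normalized matrix, e.g.\ on $\tfrac{1}{K}\mathbf{Z}_h^*-\tfrac{\lambda}{K}I_d$ when bounding $(\mathrm{ii})$ for the reference function.
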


\begin{lemma}[Lemma H.5 of \citet{min2021variance}] \label{lem:convert} Let $\phi: \cS \times \cA \to \RR^d$ satisfying $\norm{\phi(x,a)} \leq C$ for all $(x,a) \in \cS \times \cA$. For any $K > 0$ and $\lambda > 0$, define $\bar{\mathbb{G}}_K = \sum_{k=1}^K \phi(x_k,a_k)\phi(x_k,a_k)^\top + \lambda I_d$ where $(x_k,a_k)$'s are i.i.d. samples from some distribution $\nu$ over $\cS \times \cA$. Let $\mathbb{G} = \Eb_v [\phi(x, a)\phi(x,a)^\top]$. Then, for any $\delta \in (0,1)$, if $K$ satisfies that
\begin{equation}
    K \geq \max \left\{512 C^{4}\left\|\mathbb{G}^{-1}\right\|^{2} \log \left(\frac{2 d}{\delta}\right), 4 \lambda\left\|\mathbb{G}^{-1}\right\|\right\}.
\end{equation}
Then with probability at least $1-\delta$, it holds simultaneously for all $u \in \RR^d$ that 
$$\norm{u}_{\bar{\mathbb{G}}^{-1}_K} \leq \frac{2}{\sqrt{K}} \norm{u}_{{\mathbb{G}}^{-1}}.$$
\end{lemma}

\end{document}